\newtheorem{problem}{Problem} 
\newcommand{\vecOnes}{\textbf{1}}
\newcommand{\matK}{\mathbf{K}}
\newcommand{\E}{\mathbb{E}}
\newcommand\var{\operatorname{Var}}
\newcommand{\mathbbP}{\mathbb{P}}
\newcommand{\HSICpop}[3]{\operatorname{HSIC}\left[\mathcal{#1},\mathcal{#2},\mathbb{P}_{#3}\right]}
\newcommand{\HSIC}[4]{\operatorname{HSIC}_u\left[\mathcal{#1},\mathcal{#2},\left({#3},{#4}\right)\right]}
\newcommand{\MMDpop}[2]{\operatorname{MMD}\left[\mathcal{F},\mathbb{P}_{#1},\mathbb{P}_{#2}\right]}
\newcommand{\squaredMMDpop}[2]{\operatorname{MMD}^2\left[\mathcal{F},\mathbb{P}_{#1},\mathbb{P}_{#2}\right]}
\newcommand{\squaredMMDu}[2]{\operatorname{MMD}_u^2\left[\mathcal{F},{#1},{#2}\right]}
\newcommand{\Eone}{\operatorname{E}_{v_1}}
\newcommand{\EXoneYone}{\operatorname{E}_{x_1, y_1}}
\newcommand{\EXtwoYtwo}{\operatorname{E}_{x_2,y_2}}
\newcommand{\EXone}{\operatorname{E}_{x_1}}
\newcommand{\EYone}{\operatorname{E}_{y_1}}
\newcommand{\EZone}{\operatorname{E}_{z_1}}
\newcommand{\EXoneYoneZone}{\operatorname{E}_{x_1,y_1,z_1}}
\newcommand{\EXtwoYtwoZtwo}{\operatorname{E}_{x_2, y_2,z_2}}
\newcommand{\wachacodeurl}{\url{https://github.com/wbounliphone/reldep}}
\newcommand{\eugenecodeurl}{\url{https://github.com/eugenium/MMD}}
\newcommand{\ourtitle}{Fast Non-Parametric Tests of Relative Dependency and Similarity}
\ShortHeadings{\ourtitle}{Bounliphone, et al.}
\begin{document}

\title{\ourtitle}

\author{\name Wacha\ Bounliphone$^{1,2,3}$ \email wacha.bounliphone@centralesupelec.fr \\
		\name Eugene\ Belilovsky$^{1,2}$ \email eugene.belilovsky@inria.fr \\
        \name Arthur\ Tenenhaus$^{3}$ \email arthur.tenenhaus@centralesupelec.fr  \\
        \name Ioannis\ Antonoglou$^{4}$ \email ioannisa@google.com  \\
        \name Arthur\ Gretton$^{5}$\thanks{These authors contributed equally.} \email arthur.gretton@gmail.com \\  
		\name Matthew B.\ Blaschko$^{1}$\footnotemark[1] \email matthew.blaschko@esat.kuleuven.be \\
        \\
       \addr KU Leuven -- Center for Processing Speech \& Images$^1$  \\
		Kasteelpark Arenberg 10, 3001 Leuven, Belgium \\ \\
        Inria -- Galen  \& Universit\'{e} Paris-Saclay, CentraleSup\'{e}lec -- Center for Visual Computing$^2$  \\
        Grande Voie des Vignes, 92290 Ch\^{a}tenay-Malabry, France  \\ \\
        Universit\'{e} Paris-Saclay, CentraleSup\'{e}lec -- Laboratory of signals and systems$^3$ \\
        3 Rue Joliot-Curie, 91192 Gif sur Yvette, France  \\ \\
		Google DeepMind$^4$ \\
        5 New Street Square
London EC4A 3TW, UK\\ \\
        UCL -- Gatsby Computational Neuroscience Unit$^5$  \\
		25 Howland Street, London W1T 4JG, UK\\
       }

\editor{}

\maketitle

\begin{abstract} \label{jmlr2016:sec:abstract}

We introduce two novel non-parametric statistical hypothesis tests. The first test, called the \textit{relative test of dependency},  enables us to determine whether one source variable is significantly more dependent on a first target variable or a second. Dependence is measured via the Hilbert-Schmidt Independence Criterion (HSIC). The second test, called the \textit{relative test of similarity}, is use to determine which of the two samples from arbitrary distributions is significantly closer to a reference sample of interest and the relative measure of similarity is based on the Maximum Mean Discrepancy (MMD). To construct these tests, we have used as our test statistics the difference of HSIC statistics and of MMD statistics, respectively. The resulting tests are consistent and unbiased, and (being based on $U$-statistics) have favorable convergence properties. The effectiveness of the relative dependency test is demonstrated on several real-world problems: we identify languages groups from a multilingual parallel corpus, and we show that tumor location is more dependent on gene expression than chromosome imbalance. We also demonstrate the performance of the relative test of similarity over a broad selection of model comparisons problems in deep generative models.  Open source implementations of the tests developed here are available for download from \wachacodeurl\ and \eugenecodeurl. 
\end{abstract}


\section{Introduction} \label{jmlr2016:sec:introduction}

This article is based upon and extends \cite{bounliphone2015,Bounliphone2015b}.  
We address two related problems using analogous tools based on estimating correlated $U$-statistics for dependency and similarity in a non-parametric setting.

The first problem (called the \textit{relative dependency} test) is to  compare multiple dependencies to determine which of two variables most strongly influences the third, by proposing a statistical test of the null hypothesis that a source variable is more dependent to a first target variable against the alternative hypothesis that a source variable is more dependent to a second target variable. Much recent research on dependence measurement has focused on non-parametric measures of dependence, which apply even when the dependence is nonlinear, or the variables are multivariate or non-Euclidean (for instance images, strings, and graphs). The statistics for such tests are diverse, and include kernel measures of covariance \citep{GreFukTeoSonetal08,ZhaPetJanSch11} and correlation \citep{DauNki98,FukGreSunSch08}, distance covariances (which are instances of kernel tests) \citep{SzeRizBak07,SejSriGreFuk13}, kernel regression tests \citep{Cortes09, Gunn02}, rankings  \citep{HelHelGor13}, and space partitioning approaches \citep{GreGyo10,ResTesFinGroetal11,KinAtw14}. Specialization of such methods to univariate linear dependence can yield similar tests to classical approaches such as~\citet{darlington1968multiple,bring1996geometric}.  For many  problems in data analysis, however, the question of whether dependence exists is secondary: there may be multiple dependencies, and the question becomes which dependence is the strongest.  For the dependence measure, we use as our test statistic between each of the target and the source is computing using the Hilbert-Schmidt Independent Criterion (HSIC)~\citep{GreHerSmoBouSch05,GreFukTeoSonetal08} which is the distance between embeddings of the joint distribution and the product of the marginals in a reproducing kernel Hilbert space (RKHS). When the RKHSs are characteristic, the variables are independent iff HSIC = 0. 

The second problem (called the \textit{relative similarity} test) is to compare samples from three probability distributions by proposing a statistical test of the null hypothesis that a first candidate probability distribution is closer to a reference probability distribution against the alternative hypothesis that the second candidate probability distribution is closer. We have developed an application of this test to model selection for generative models.

Generative models based on deep learning techniques aim to provide sophisticated and accurate models of data, without expensive manual annotation. This is especially of interest as deep networks tend to require comparatively large training samples to achieve a good result.  Model selection within this class of techniques can be a challenge, however.  First, likelihoods can be difficult to compute for some families of recently proposed models based on deep learning~\citep{goodfellow2014generative,li2015generative}. The current best method to evaluate such models is based on Parzen-window estimates of the log likelihood \cite[Section 5]{goodfellow2014generative}. Second, if we are  given two models with similar likelihoods, we typically do not have a computationally inexpensive hypothesis test to determine whether one likelihood is significantly higher than the other.  Permutation testing or other generic strategies are often computationally prohibitive, bearing in mind the relatively high computational requirements of deep networks~\citep{krizhevsky2012imagenet}. So in this work, we provide an alternative strategy for model selection, based on our non-parametric hypothesis test of relative similarity. We treat the two  trained networks being compared as  generative models \citep{goodfellow2014generative,hinton2006fast,salakhutdinov2009deep}, and test whether the first candidate model generates samples significantly closer to a reference  validation set.
For the metric on the space of probability distribution, we use as our test statistic the difference of two the Maximum Mean Discrepancy (MMD)~\citep{gretton2006kernel,gretton2012kernel}, where MMD is the difference between mean embeddings of the distributions in a RKHS. When the RKHS is equal to the unit ball in a characteristic RKHS, the probability measures are equal iff MMD = 0. 

For both tests, care must be taken in analyzing the asymptotic behavior of the test statistics, since the measure of dependence and the measure of similarity will themselves be correlated: they are both computed with respect to the same source. Thus, we derive the joint asymptotic distribution of both dependencies and similarities. The derivation of our test utilizes classical results of $U$-statistics~\citep{hoeffding1963probability,serfling2009approximation,arcones1993limit}. In particular, we make use of results by \citet{hoeffding1963probability} and \citet{serfling2009approximation} to determine the asymptotic joint distributions of the statistics (see Theorems~\ref{jmlr2016:thm:dependency_theory:joint_asymptotic_HSIC} $\&$~\ref{jmlr2016:thm:similarity_theory:joint_asymtotic_dist_MMD}). Consequently, we derive the \emph{lowest} variance unbiased estimator of the test statistic. We prove our approach to have greater statistical power than constructing two uncorrelated statistics on the same data by subsampling, and testing on these. 

Our paper is structured as follows. In Section~\ref{jmlr2016:sec:background_material}, we formalize the two proposed problems and introduce the Maximum Mean Discrepancy (MMD) and the Hilbert-Schmidt Independent Criterion (HSIC). We provide unbiased estimators, as well as asymptotic distributions based on the theory of $U$-statistics. In Section~\ref{jmlr2016:sec:relative_test_dependency} and Section~\ref{jmlr2016:sec:relative_test_similarity} we derive the joint asymptotic distribution of the correlated HSICs and the correlated MMD and prove that our approach is strictly more powerful than a test that does not exploit the covariance between the correlated statistics. Finally, in Section~\ref{jmlr2016:sec:experiments}, we demonstrate the performance of the relative test of dependency on problems from multilingual corpus and neurosciences and we demonstrate the performance of the relative test of similarity in different scenarios where a pair of model output is compare to a validation set over a range of training regimes and settings.
\section{Motivation and Background Material} \label{jmlr2016:sec:background_material}

In this section, we begin with a formal definition of the two problems and introduce essential background knowledge necessary for the development of our later theory.

Our goal is to formulate a statistical test that answers the following questions:
Let $x$, $y$ and $z$ be a random variables defined on a topological space $\mathcal{X} \times \mathcal{Y}\times \mathcal{Z}$, with respective Borel probability measures $\mathbbP_x$, $\mathbbP_y$ and $\mathbbP_z$.  Given observations $X_m:= \lbrace x_1, ..., x_m \rbrace$, $Y_m:=\lbrace y_1, ..., y_m \rbrace$ and $Z_m:=\lbrace z_1, ..., z_m \rbrace$ such that $(x_i,y_i,z_i)$ are independent and identically distributed (i.i.d.) from $\mathbbP_x \times \mathbbP_y \times \mathbbP_z$.

\begin{problem}[Relative dependency test] \label{jmlr2016:pb:background:relative_dependency_test} 
Is the dependency between $x$ and $y$ stronger than the dependency between $x$ and $z$ ?
\end{problem}

Given observations $X_m:= \lbrace x_1, ..., x_m \rbrace$, $Y_m:=\lbrace y_1, ..., y_m \rbrace$ and $Z_m:=\lbrace z_1, ..., z_m \rbrace$ sampled i.i.d.\ from $\mathbbP_x$, $\mathbbP_y$, and $\mathbbP_z$, respectively.

\begin{problem}[Relative similarity test] \label{jmlr2016:pb:background:relative_similarity_test} 
Is the probability measure $\mathbbP_x$ closer to $\mathbbP_z$ or to  $\mathbbP_y$ ?
\end{problem}

In this section, we begin with a formal definition of the two problems and introduce essential background knowledge necessary for the development of our later theory.

To start with, we want to determine an underlying notion of a measure for similarity and dependence. We will first explain a framework for distribution analysis via the notion of kernel \textit{mean embedding} in Section~\ref{jmlr2016:subsec:background:meanmap}, and based on this  kernel mean embedding approach, we present in Section~\ref{jmlr2016:subsec:background:mmd_hsic}, our statistic for the relative test of dependence, the \textit{Hilbert-Schmidt Independence Criterion} (HSIC) and  our statistic for the relative test of similarity: the \textit{maximum mean discrepancy} (MMD).

%
%
\subsection{Kernel Mean Embedding of Distributions} \label{jmlr2016:subsec:background:meanmap}
This section presents the notion of \textit{kernel mean embeddings}~\citep{berlinet2011reproducing,smola2007hilbert}, where the idea is to generalize the Hilbert-space embedding of distributions by the kernel feature map of a distribution to Dirac measures. The kernel mean embedding has been used to define metrics for probability distributions which is important for many problems in statistics and machine learning.

First, we briefly review the properties of the Reproducing Kernel Hilbert-space (RKHS)~\citep{aronszajn1950theory}. A RKHS $\mathcal{H}$  with a reproducing kernel $k(x,y)$ is a Hilbert space of functions $f: \mathcal{X} \rightarrow \mathbb{R}$ with inner product $\langle \cdot ,\cdot \rangle_{\mathcal{H}}$. Its element $k(x,\cdot)$ satisfies the reproducing property: $\langle f,k(x,\cdot) \rangle_{\mathcal{H}} = f(x)$ for any $f \in \mathcal{H}$ and consequently, $\langle k(x,\cdot),k(y,\cdot) \rangle_{\mathcal{H}} = k(x,y)$. We define the \textit{feature map} $\phi: \mathcal{X} \rightarrow \mathcal{H}$  by $\phi(x)= k(x,\cdot)$ and using the reproducing property, we obtain that $k(x,y) = \langle \phi(x),\phi(y) \rangle_{\mathcal{H}}$.

We extend the notion of feature map to the mean embedding of probability measure : Suppose that a space $\mathcal{P}(\mathcal{X})$ consists of all Borel probability measures $\mathbbP$ on some input space $\mathcal{X}$. we define the mean embedding $\mu$ of $\mathbbP$ associated with a reproduction kernel $k$ by a mapping $\mu: \mathcal{P}(\mathcal{X}) \rightarrow \mathcal{H}$, by $\mu_{\mathbbP} = \E_{X \sim \mathbbP} \left[ k(x,\cdot) \right] = \int_{\mathcal{X}} k(\cdot,x) d\mathbb{P}(x)$. The distribution $\mathbbP$ is mapped to its expected feature map, i.e., to a point in a potentially infinite-dimensional and implicit feature space. The mean embedding $\mu$ has the property that  $ \E_{x \sim \mathbbP} \left[ f(X) \right] = \langle \mu_\mathbbP,f \rangle_{\mathcal{H}}  $ for any  $f \in \mathcal{H}$.

The notion of universal kernels and characteristic kernels are essential to the study of kernel mean embeddings \citep{FukGreSunSch08}. The kernel $k$ is said to be \textit{universal} if the corresponding RKHS $\mathcal{H}$ is dense in the space of bounded continuous functions on $\mathcal{X}$ \citep{steinwart2002influence}. It was shown that for a universal kernel $k$, $\Vert \mu_{\mathbb{P}} - \mu_{\mathbb{Q}} \Vert_{\mathcal{H}}$ iff $\mathbb{P} = \mathbb{Q}$, i.e. the map $\mu$ is injective. The kernel $k$ is said to be \textit{characteristic} if the map $\mu$ is injective and the RKHS $\mathcal{H}$ is said to be characteristic if its reproducing kernel is characteristic. This notion was introduced by \citet{FukGreSunSch08} and it was shown that Gaussian and Laplacian
kernels are characteristic on $\mathbb{R}^d$. 

Furthermore, the notion of mean embedding can be generalized to joint distributions of two variables using tensor product feature spaces. Let $(x,y)$ be random variables  on $\mathcal{X} \times \mathcal{Y}$ and $\mathcal{F}$ be a RKHS with measurable kernel $k$ on $\mathcal{X}$ and $\mathcal{H}$ be a RKHS with measurable kernel $l$ on $\mathcal{Y}$. We assume that $\E_x [k(x,x)] < \infty$ and $\E_y [l(y,y')] < \infty$, the cross-covariance operator (see \cite{baker1973joint} and \cite{fukumizu2004dimensionality}) $C_{yx}: \mathcal{H} \rightarrow \mathcal{F}$   is defined as $C_{yx} := \E_{yx} \left[ \phi(y) \otimes \phi(x) \right] - \mu_{\mathbbP_x} \otimes \mu_{\mathbbP_y} = \mu_{\mathbbP_{yx}} - \mu_{\mathbbP_y} \otimes \mu_{\mathbbP_x}$. The unique bounded operator $C_{yx}$  satisfies $\langle g, C_{yx}f \rangle = \operatorname{Cov} [f(x),g(y)]$ for all $f \in \mathcal{F}$ and $g \in \mathcal{H}$.

%
%
\subsection{The Maximum Mean Discrepancy and the Hilbert-Schmidt Independence Criterion} \label{jmlr2016:subsec:background:mmd_hsic}

In this section, we give a formal definition of the Maximum Mean Discrepancy (MMD) and the Hilbert-Schmidt Independence Criterion (HSIC). 
\begin{definition} \label{jmlr2016:def:background:MMDpop}
Let $\mathbbP_x$ and $\mathbbP_y$ be the marginal distributions on domains $\mathcal{X}$ and $\mathcal{Y}$ and let $\mathcal{F}$ be a unit ball in a characteristic RKHS $\mathcal{H}$, with the continuous feature mapping $\phi(x) \in \mathcal{F}$ from each $x \in \mathcal{X}$, such that the inner product between the features is given by the positive definite kernel function $k(x,x') := \langle \phi(x), \phi(x')\rangle$. We denote the expectation of $\phi(x)$ by $\mu_\mathbbP:= \E_{\mathbbP} [\phi(x)]$ and we define the maximum mean discrepancy in a RKHS $\mathcal{F}$ as
\begin{align}
\MMDpop{x}{y} = \Vert \mu_{\mathbbP_x} - \mu_{\mathbbP_y}\Vert_{\mathcal{H}}. 
\end{align} 
Given $x$ and $x'$ independent random variables with distribution $\mathbbP_x$, and $y$ and $y'$ independent variables with distribution $\mathbbP_y$, the population $\operatorname{MMD}^2$ can be expressed in terms of expectations of kernel functions $k$ 
\begin{align}
\squaredMMDpop{x}{y} = \E_{x,x'} \left[ k(x,x') \right] - 2 \E_{x,y} \left[ k(x,y) \right] + \E_{y,y'} \left[ k(y,y') \right]. \
\end{align}
\end{definition}
The following theorem describes an unbiased quadratic-time estimate of the MMD, and  its asymptotic distribution when $\mathbbP_x$ and $\mathbbP_y$ are different.
\begin{theorem}\label{jmlr2016:thm:background:unbiaisedMMD}
Given observations $X_m:= \lbrace x_1, ..., x_m \rbrace$ and $Y_n:=\lbrace y_1, ..., y_n \rbrace$ i.i.d.\ respectively from $\mathbbP_x$ and $\mathbbP_y$, an unbiased empirical estimate of $\squaredMMDpop{x}{y}$ is a sum of two $U$-statistics and a sample average
\begin{align} \label{jmlr2016:eq:background:unbiaisedMMD_differentsizes}
\squaredMMDu{X_m}{Y_n} &= \frac{1}{m(m-1)} \sum_{i=1}^m \sum_{j \ne i}^m k(x_i,x_j) + \frac{1}{n(n-1)} \sum_{i=1}^n \sum_{j \ne i}^n k(y_i,y_j) \\
& \qquad - \frac{2}{mn} \sum_{i=1}^m \sum_{j=1}^n k(x_i,y_j). \nonumber
\end{align}
Let $\mathcal{V} := (v_1,...,v_m)$ be $m$ i.i.d. random variables, where $v := (x,y) \sim \mathbbP_x \times \mathbbP_y$. When $ m = n$, an unbiased empirical estimate of $\squaredMMDpop{x}{y}$ is 
\begin{equation} \label{jmlr2016:eq:background:unbiaisedMMD}
\squaredMMDu{X_m}{Y_m} = \frac{1}{m(m-1)} \sum_{i \ne j}^m f(v_i,v_j)
\end{equation}
with $f(v_i,v_j) = k(x_i,x_j) + k(y_i,y_j) -k(x_i,y_j)  -k(x_j,y_i)$. We assume that $\E(f^2) < \infty$. \\
When $P_x\neq P_y$, as $m \rightarrow \infty$, $\squaredMMDu{X}{Y}$ converges in distribution to a Gaussian according to 
\begin{equation} \label{jmlr2016:eq:background:asymptoticdistributionMMD}
m^{1/2} \left( \squaredMMDu{X_m}{Y_m} - \squaredMMDpop{x}{y} \right) \longrightarrow \mathcal{N} \left( 0, \sigma^2_{XY} \right)
\end{equation}
where 
\begin{equation}
\sigma^2_{XY} = 4 \left( \E_{v_1} [(E_{v_2}f(v_1,v_2))^2] - [(E_{v_1,v_2}f(v_1,v_2))^2] \right)
\label{jmlr2016:eq:background:variance_asymptoticdistributionMMD}
\end{equation}
uniformly at rate $1/\sqrt{m}$.
\end{theorem}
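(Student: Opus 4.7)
The plan is to establish the three claims of the theorem in turn: (i) unbiasedness of the form in \eqref{jmlr2016:eq:background:unbiaisedMMD_differentsizes}, (ii) the reduction to a single symmetric second-order $U$-statistic in \eqref{jmlr2016:eq:background:unbiaisedMMD} when $m=n$, and (iii) the asymptotic normality in \eqref{jmlr2016:eq:background:asymptoticdistributionMMD} with the variance \eqref{jmlr2016:eq:background:variance_asymptoticdistributionMMD}.

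For (i), linearity of expectation combined with the i.i.d.\ sampling gives $\E[k(x_i,x_j)] = \E_{x,x'}[k(x,x')]$ for $i\ne j$, and analogously $\E[k(y_i,y_j)] = \E_{y,y'}[k(y,y')]$ and $\E[k(x_i,y_j)] = \E_{x,y}[k(x,y)]$ (using independence of the $X$ and $Y$ samples for the last identity). Substituting and collecting the three sums reproduces exactly the population expansion of $\squaredMMDpop{x}{y}$ given in Definition~\ref{jmlr2016:def:background:MMDpop}.

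For (ii), I would first check that $f$ is symmetric, noting that swapping $v_i\leftrightarrow v_j$ exchanges the cross terms $k(x_i,y_j)$ and $k(x_j,y_i)$ while leaving the diagonal $k$-terms invariant. Taking expectations under independent $v_1,v_2\sim\mathbb{P}_x\times\mathbb{P}_y$ yields $\E[f(v_1,v_2)] = \E[k(x,x')]+\E[k(y,y')]-2\E[k(x,y)] = \squaredMMDpop{x}{y}$, so $\squaredMMDu{X_m}{Y_m}$ is a bona fide unbiased symmetric $U$-statistic of order two with kernel $f$. A term-by-term regrouping of \eqref{jmlr2016:eq:background:unbiaisedMMD_differentsizes} at $m=n$ confirms it agrees with \eqref{jmlr2016:eq:background:unbiaisedMMD}.

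For (iii), I would invoke the classical central limit theorem for non-degenerate second-order $U$-statistics \citep{hoeffding1963probability,serfling2009approximation}: under $\E[f^2]<\infty$, if
\begin{equation*}
h(v_1) := \E_{v_2}[f(v_1,v_2)], \qquad \zeta_1 := \var(h(v_1)) > 0,
\end{equation*}
then $\sqrt{m}(U_m-\theta) \to \mathcal{N}(0,4\zeta_1)$ at rate $1/\sqrt{m}$. A direct calculation gives $4\zeta_1 = 4\bigl(\E_{v_1}[(\E_{v_2}f(v_1,v_2))^2] - (\E_{v_1,v_2}f(v_1,v_2))^2\bigr)$, matching $\sigma^2_{XY}$.

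The main obstacle is the non-degeneracy hypothesis $\zeta_1 > 0$. This is exactly where $\mathbb{P}_x\ne\mathbb{P}_y$ enters: when the kernel is characteristic, $h(v_1)$ is then non-constant so $\zeta_1>0$ and the standard CLT applies. In the opposite regime $\mathbb{P}_x=\mathbb{P}_y$, the projection $h$ vanishes $\mathbb{P}_{v_1}$-almost surely, the $U$-statistic becomes degenerate, and its limit is instead a weighted sum of $\chi^2$ variables at rate $1/m$, a case deliberately excluded from the statement of this theorem.
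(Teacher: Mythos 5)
Your proof is correct in substance and follows exactly the route the paper implicitly relies on: this is a background theorem stated without proof, citing the classical theory of non-degenerate second-order $U$-statistics \citep{hoeffding1963probability,serfling2009approximation} and \citet{gretton2012kernel}, and your argument (unbiasedness by linearity, symmetry of $f$, Hoeffding's CLT with $\sigma^2_{XY}=4\zeta_1$, and the observation that $\mathbb{P}_x\ne\mathbb{P}_y$ is what rules out the degenerate $\chi^2$-type limit) is precisely the standard derivation behind it.

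One sentence of yours is wrong, though it does not damage the proof of the theorem as stated: the claim that a ``term-by-term regrouping'' of Eq.~\eqref{jmlr2016:eq:background:unbiaisedMMD_differentsizes} at $m=n$ shows it \emph{agrees} with Eq.~\eqref{jmlr2016:eq:background:unbiaisedMMD}. The two estimators are not equal. The cross term in the first form is $-\tfrac{2}{m^2}\sum_{i,j}k(x_i,y_j)$, which includes the diagonal pairs $k(x_i,y_i)$ and is normalized by $m^2$, whereas expanding the single $U$-statistic form gives $-\tfrac{2}{m(m-1)}\sum_{i\ne j}k(x_i,y_j)$, which excludes the diagonal and is normalized by $m(m-1)$. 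They coincide only in expectation (both cross terms are unbiased for $-2\,\E_{x,y}[k(x,y)]$) and asymptotically, not identically. Since the theorem only asserts that each expression is an unbiased estimate --- which you establish independently via $\E[f(v_1,v_2)]=\squaredMMDpop{x}{y}$ --- the theorem's claims all survive; just delete or correct the assertion of literal equality.
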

MMD determines if two samples are from different distributions: if $\mathcal{F}$ is a unit ball in a universal RKHS $\mathcal{H}$, defined on the compact metric space $\mathcal{X}$ with associated kernel $k(.,.)$, then $\squaredMMDpop{x}{y} = 0$ if and only if $\mathbbP_x = \mathbbP_y$ \citep{gretton2012kernel}.

We now present HSIC. 
\begin{definition} \label{jmlr2016:def:background:HSICpop}
Let $P_{xy}$ be a Borel probability measure over ($\mathcal{X} \times \mathcal{Y}, \Gamma \times \Lambda$) with $\Gamma$ and $\Lambda$ the respective Borel sets on $\mathcal{X}$ and $\mathcal{Y}$. Let $\mathcal{F}$ and $\mathcal{G}$ be separable RKHSs with the continuous feature mapping $\phi(x) \in \mathcal{F}$ from each $x \in \mathcal{X}$, such that the inner product between the features is given by the kernel function $k(x,x') := \langle \phi(x), \phi(x')\rangle$ and $\varphi(x) \in \mathcal{G}$ from each $y \in \mathcal{Y}$, such that $l(y,y') := \langle \varphi(y), \varphi(y')\rangle$. When the kernels $k$ and $l$ are respectively associated uniquely and bounded on $\mathcal{X}$ and $\mathcal{Y}$, the Hilbert-Schmidt Independence Criterion (HSIC) is defined as as the squared HS-norm of the associated cross-covariance operator $C_{yx}$. 
When the kernels $k$, $l$ are associated uniquely withs respective RKHSs $\mathcal{F}$ and $\mathcal{G}$ and bounded, the population HSIC can be expressed in terms of expectations of kernel functions 
\begin{align}\label{jmlr2016:eq:background:HSICpop}
\HSICpop{F}{G}{xy} :&= \Vert C_{yx} \Vert ^2_{HS}  \nonumber \\
&= \E_{xx'yy'} \left[ k(x,x')l(y,y') \right] + \E_{xx'} \left[ k(x,x') \right] 
\E_{yy'} \left[ l(y,y') \right] \nonumber \\
&\qquad - 2\E_{xy} \left[ \E_{x'} [k(x,x')] \E_{y'} [l(y,y')] \right]  .
\end{align}
\end{definition}
The following theorem describes an unbiased quadratic-time estimate of the HSIC, and  its asymptotic property.
\begin{theorem} 
\label{jmlr2016:thm:background:unbiaisedHSIC}
Given $(X_m,Y_m) = \lbrace (x_1,y_1), ..., (x_m,y_m) \rbrace$ of size $m$ drawn i.i.d.\ from $\mathbbP_{xy}$.  An unbiased estimator $\HSIC{F}{G}{X_m}{Y_m}$ is given by 
\begin{align} \label{jmlr2016:eq:background:unbiaisedHSIC}
\HSIC{F}{G}{X_m}{Y_m} = \dfrac{1}{m(m-3)} \left[ \operatorname{Tr} ( \tilde{\mathbf{K}} \tilde{\mathbf{L}} )  + \dfrac{\textbf{1}'\tilde{\mathbf{K}} \textbf{1} \textbf{1}' \tilde{\mathbf{L}} \textbf{1}}{(m-1)(m-2)} - \dfrac{2}{m-2} \textbf{1}' \tilde{\mathbf{K}} \tilde{\mathbf{L}} \textbf{1} \right]
\end{align}
where $\textbf{1}$ is the vector of all ones and $\tilde{\mathbf{K}}$ and $\tilde{\mathbf{L}} \in \mathbb{R}^{m \times m}$ are kernel matrices related to $\mathbf{K}$ and $\mathbf{L}$ by $ \tilde{\mathbf{K}}_{ij} = (1- \delta_{ij})k(x_i,x_j)$ and $ \tilde{\mathbf{L}}_{ij} = (1- \delta_{ij})l(y_i,y_j)$.

This finite sample unbiased estimator of $\HSICpop{F}{G}{xy}$ can be written as a U-statistic,
\begin{equation} \label{jmlr2016:eq:background:ustatisticHSIC}
\HSIC{F}{G}{X_m}{Y_m} = (m)_4^{-1} \displaystyle \sum_{(i,j,q,r) \in i^m_4} h_{ijqr}
\end{equation}
where $(m)_4 := \dfrac{m!}{(m-4)!}$, the index set $i^m_4$ denotes the set of all $4-$tuples drawn without replacement from the set $\left\lbrace 1, \dots m \right\rbrace $, and the kernel h of the U-statistic is defined as
\begin{equation} \label{jmlr2016:eq:background:kernel_ustatisticHSIC}
h_{ijqr} = \dfrac{1}{24} \displaystyle \displaystyle \sum_{(s,t,u,v)}^{(i,j,q,r)} k_{st} (l_{st} + l_{uv} -2 l_{su})
\end{equation}
We assume that $\E[h^2] < \infty$. When $\mathbbP_{xy} \ne \mathbbP_{x} \mathbbP_{y}$,  as $m \rightarrow \infty$,
\begin{equation}\label{jmlr2016:eq:background:asymptoticdistributionHSIC}
m^{1/2} \left(\HSICpop{F}{G}{xy} - \HSIC{F}{G}{X_m}{Y_m} \right) \longrightarrow \mathcal{N}(0, \sigma^2_{XY})
\end{equation}
where 
\begin{equation} \label{jmlr2016:eq:background:variance_asymptoticdistributionHSIC}
\sigma^2_{XY} = 16 \left( \E_{x_i} \left( \E_{x_j,x_q,x_r} h_{ijqr} \right)^2 - \HSICpop{F}{G}{xy} \right). 
\end{equation}
Its empirical estimate is $\hat{\sigma}_{XY} = 16\left( R_{XY} - \left(\HSIC{F}{G}{X_m}{Y_m} \right)^2 \right)$ where $R_{XY} = \dfrac{1}{m} \displaystyle\sum_{\substack{i=1}}^m \left( (m-1)_3^{-1} \sum_{(j,q,r) \in i^m_{3} \backslash \left\lbrace i \right\rbrace } h_{ijqr} \right) ^2 $ and the index set $i^m_3 \backslash \left\lbrace i \right\rbrace$ denotes the set of all $3-$tuples drawn without replacement from the set $\left\lbrace 1, \dots m \right\rbrace \backslash \left\lbrace i \right\rbrace$.

\end{theorem}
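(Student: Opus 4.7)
The plan is to establish the theorem in three pieces: (i) unbiasedness and the U-statistic representation of the estimator in~(\ref{jmlr2016:eq:background:unbiaisedHSIC})--(\ref{jmlr2016:eq:background:ustatisticHSIC}), (ii) asymptotic normality with variance~(\ref{jmlr2016:eq:background:variance_asymptoticdistributionHSIC}), and (iii) consistency of the empirical variance estimator.

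First I would verify unbiasedness directly by expanding the population formula in~(\ref{jmlr2016:eq:background:HSICpop}) and matching it to the expectation of the matrix expression. The essential trick is that $\tilde{\mathbf{K}}$ and $\tilde{\mathbf{L}}$ are obtained by zeroing the diagonals, so every sum over indices that the trace, quadratic, and bilinear form produce only involves pairs $(i,j)$ with $i\neq j$. Taking expectations termwise then reduces $\mathbb{E}[\mathrm{Tr}(\tilde{\mathbf{K}}\tilde{\mathbf{L}})]$, $\mathbb{E}[\mathbf{1}'\tilde{\mathbf{K}}\mathbf{1}\,\mathbf{1}'\tilde{\mathbf{L}}\mathbf{1}]$, and $\mathbb{E}[\mathbf{1}'\tilde{\mathbf{K}}\tilde{\mathbf{L}}\mathbf{1}]$ to the combinatorial sums $\sum_{\text{dist.}} k(x_i,x_j)l(y_i,y_j)$, $\sum_{\text{dist.}} k(x_i,x_j)l(y_q,y_r)$, and $\sum_{\text{dist.}} k(x_i,x_j)l(y_i,y_q)$, respectively, where the sums range over distinct indices. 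Counting how many ordered 4-tuples $(i,j,q,r)$ contribute to each of the three population terms in~(\ref{jmlr2016:eq:background:HSICpop}) and comparing with the normalizing constants $1/(m(m-3))$, $1/((m-1)(m-2))$, and $2/(m-2)$ shows that the coefficients exactly reproduce $\HSICpop{F}{G}{xy}$. Along the way this computation rewrites the estimator as $(m)_4^{-1}\sum_{(i,j,q,r)\in i^m_4} h_{ijqr}$ with $h_{ijqr}$ given by~(\ref{jmlr2016:eq:background:kernel_ustatisticHSIC}); the factor $1/24$ and the sum over the $4!=24$ permutations of $(i,j,q,r)$ simply symmetrize $h$, which is required for the U-statistic representation but does not alter the average over $i^m_4$. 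This bookkeeping is the main obstacle --- it is purely combinatorial but tedious, and one has to be careful that the tilde-notation removes exactly the correct bias-inducing terms.

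Next, having identified the estimator as a U-statistic of order $4$ with symmetric kernel $h$ satisfying $\mathbb{E}[h^2]<\infty$, asymptotic normality follows from Hoeffding's classical CLT for U-statistics~\citep{hoeffding1963probability,serfling2009approximation}. I would introduce the Hoeffding projection $h_1(x_i,y_i) := \mathbb{E}[\,h_{ijqr}\mid x_i,y_i\,] - \HSICpop{F}{G}{xy}$ and note that when $\mathbbP_{xy}\neq \mathbbP_x \mathbbP_y$ the first-order variance $\zeta_1 = \mathrm{Var}(h_1)$ is strictly positive, so the U-statistic is non-degenerate. The standard theorem then gives $m^{1/2}(\HSIC{F}{G}{X_m}{Y_m}-\HSICpop{F}{G}{xy})\to\mathcal{N}(0,r^2\zeta_1)$ with $r=4$, i.e.\ variance $16\zeta_1$, which is exactly $\sigma^2_{XY}$ of~(\ref{jmlr2016:eq:background:variance_asymptoticdistributionHSIC}) after rewriting $\zeta_1 = \mathbb{E}_{x_i}\!\left[(\mathbb{E}_{x_j,x_q,x_r}h_{ijqr})^2\right] - (\HSICpop{F}{G}{xy})^2$.

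Finally, for the empirical variance $\hat{\sigma}_{XY}$ I would show that $R_{XY}$ is a consistent estimator of $\mathbb{E}_{x_i}\!\left[(\mathbb{E}_{x_j,x_q,x_r}h_{ijqr})^2\right]$ by viewing the inner sum $(m-1)_3^{-1}\sum_{(j,q,r)\in i^m_3\setminus\{i\}} h_{ijqr}$ as a conditional U-statistic that, for each fixed $i$, converges in probability (and in $L^2$, since $\mathbb{E}[h^2]<\infty$) to $\mathbb{E}_{x_j,x_q,x_r} h_{ijqr}$; squaring and averaging over $i$ then gives convergence of $R_{XY}$ by a standard dominated-convergence/law-of-large-numbers argument. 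Combined with consistency of the HSIC estimator itself (already implied by the CLT), this yields $\hat{\sigma}_{XY}\to \sigma^2_{XY}$ in probability. The substantive effort throughout is really in the combinatorial expansion of Step~(i); Steps~(ii) and~(iii) are essentially invocations of classical U-statistic machinery once the kernel $h_{ijqr}$ has been identified.
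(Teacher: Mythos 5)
The paper does not actually prove this theorem: it is stated as background, with the unbiased estimator, its U-statistic form, and the asymptotics taken from the cited literature (the estimator and kernel $h_{ijqr}$ from Song et al., the CLT and variance estimate from Hoeffding and Serfling). Your proposal is therefore a reconstruction of that standard argument rather than a comparison target, and in outline it is the right one: steps (ii) and (iii) are correct, routine applications of the order-$4$ U-statistic CLT (variance $r^2\zeta_1=16\zeta_1$) and of Serfling's estimator of $\zeta_1$ by conditional U-statistics. You even silently correct a typo in the paper, whose Eq.~\eqref{jmlr2016:eq:background:variance_asymptoticdistributionHSIC} subtracts $\HSICpop{F}{G}{xy}$ where the centering term must be $\left(\HSICpop{F}{G}{xy}\right)^2$, as in your expression for $\zeta_1$.

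The one place where your sketch, taken literally, would fail is the claim that $\E\left[\mathbf{1}'\tilde{\mathbf{K}}\mathbf{1}\,\mathbf{1}'\tilde{\mathbf{L}}\mathbf{1}\right]$ and $\E\left[\mathbf{1}'\tilde{\mathbf{K}}\tilde{\mathbf{L}}\mathbf{1}\right]$ reduce to sums over fully distinct index tuples. They do not: $\mathbf{1}'\tilde{\mathbf{K}}\mathbf{1}\,\mathbf{1}'\tilde{\mathbf{L}}\mathbf{1}=\sum_{i\neq j}\sum_{q\neq r}k_{ij}l_{qr}$ contains pairs of index pairs sharing two, one, or zero indices, and $\mathbf{1}'\tilde{\mathbf{K}}\tilde{\mathbf{L}}\mathbf{1}=\sum_{j}\sum_{i\neq j}\sum_{q\neq j}k_{ij}l_{jq}$ contains the $i=q$ terms; each matrix expression therefore contributes to all three population terms of Eq.~\eqref{jmlr2016:eq:background:HSICpop}, and the specific constants $1/(m(m-3))$, $1/((m-1)(m-2))$, $2/(m-2)$ are exactly what make the cross-contributions cancel. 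The cleaner route, and the one implicit in the literature the paper cites, is the reverse direction: start from $(m)_4^{-1}\sum_{(i,j,q,r)\in i^m_4}h_{ijqr}$, which is unbiased by inspection since all four indices are distinct so $\E[h_{ijqr}]=\HSICpop{F}{G}{xy}$, and then expand and collect terms by overlap pattern to arrive at the matrix formula. Your overall plan survives this correction, but the expectation-matching as you describe it needs to account for the overlapping-index terms explicitly.
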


HSIC determines independence: $\HSICpop{F}{G}{xy} = 0$ if and only if $\mathbbP_{xy} = \mathbbP_x \mathbbP_y$ when kernels $k$ and $l$ are characteristic on their respective marginal domains~\citep{gretton2006kernel}.

%
%
\subsection{Statistical Hypothesis Testing} \label{jmlr2016:subsec:background:statistical_hypothesis_testing}

Having defined the two tests statistics (the $\HSICpop{F}{G}{xy}$, as a dependence measure and the $\MMDpop{x}{y}$, as a similarity measure on probability), we address the problems Pb.~\ref{jmlr2016:pb:background:relative_dependency_test} and Pb.~\ref{jmlr2016:pb:background:relative_similarity_test} as \textit{statistical hypothesis tests}. We briefly described the framework of statistical hypothesis testing~\citep{lehmann1986testing} as it applies in this context. 
\begin{description}
\item[(Pb.~\ref{jmlr2016:pb:background:relative_dependency_test}) - Relative HSIC] We denote by $\left( X_m,Y_m,Z_m \right)$ the joint sample of observations drawn i.i.d.\ with Borel probability measure $\mathbbP_{xyz}$ defined on the domain $\mathcal{X} \times \mathcal{Y} \times \mathcal{Z}$ and the kernels $k$, $l$ and $d$ associated uniquely with RKHSs $\mathcal{F}$, $\mathcal{G}$, and $\mathcal{H}$, respectively. The statistical \textit{relative independence} test $\mathcal{T}_{HSIC}: \mathcal{X}^m \times \mathcal{X}^m \times \mathcal{X}^m \mapsto \lbrace 0,1 \rbrace$ is use to test the null hypothesis $\mathcal{H}_0^{HSIC}$ : $\HSICpop{F}{G}{xy} \leq \HSICpop{F}{H}{xz}$ versus the alternative hypothesis $\mathcal{H}_1^{HSIC}$ : $\HSICpop{F}{G}{xy} > \HSICpop{F}{H}{xz}$ at a given significance level $\alpha$.
\item[(Pb.~\ref{jmlr2016:pb:background:relative_similarity_test}) - Relative MMD] We denote the observations $X_m:= \lbrace x_1, ..., x_m \rbrace$, $Y_m:=\lbrace y_1, ..., y_m \rbrace$ and $Z_m:=\lbrace z_1, ..., z_m \rbrace$ i.i.d.\ with respective Borel probability measures $\mathbbP_x$, $\mathbbP_y$ and $\mathbbP_z$ defined on $\mathcal{X}$ and the kernel $k$ associated uniquely with the  separable RKHS $\mathcal{F}$. The statistical \textit{relative similarity} test $\mathcal{T}_{MMD}: \mathcal{X}^m \times \mathcal{X}^m \times \mathcal{X}^m \mapsto \lbrace 0,1 \rbrace$ is used to test the null hypothesis $\mathcal{H}_0^{MMD}$: $ \MMDpop{x}{y} \leq \MMDpop{x}{z}$ versus the alternative hypothesis $\mathcal{H}_1^{MMD}$: $\MMDpop{x}{y} > \MMDpop{x}{z}$ at a given significance level $\alpha$.
\end{description}
The tests are constructed by comparing the test statistics, in our case respectively the difference of the two empirical unbiased estimates $\HSIC{F}{G}{X_m}{Y_m} - \HSIC{F}{H}{X_m}{Z_m}$ or $\squaredMMDu{X_m}{Y_m} - \squaredMMDu{X_m}{Z_m}$, with a particular threshold: if the threshold is exceeded, then the test rejects the null hypothesis at a given significance level $\alpha$. The approach that we adopt here is to derive respectively the joint asymptotic distribution of the empirical estimate of  $\big[ \HSIC{F}{G}{X_m}{Y_m},\  \HSIC{F}{H}{X_m}{Z_m}\big]^T$ and $\big[ \squaredMMDu{X_m}{Y_m},\  \squaredMMDu{X_m}{Z_m} \big]^T$.
\section{A relative test of dependency} \label{jmlr2016:sec:relative_test_dependency}

In this Section, we calculate two dependent HSIC statistics and derive the joint asymptotic distribution of these dependent quantities, which is used to construct a consistent test for the Relative HSIC problem (Pb.~\ref{jmlr2016:pb:background:relative_dependency_test}).
We next construct a simpler consistent test,  by computing two independent HSIC statistics on sample subsets.  While the simpler strategy is superficially attractive and slightly less effort to implement, we prove the dependent strategy is strictly more powerful.
%
%
\subsection{Joint asymptotic distribution of HSIC and test}
\label{jmlr2016:subsec:relative_test_dependency:joint_asympt_dist}
Given observations $X_m:= \lbrace x_1, ..., x_m \rbrace$, $Y_m:=\lbrace y_1, ..., y_m \rbrace$ and $Z_m:=\lbrace z_1, ..., z_m \rbrace$ i.i.d.\ from $\mathbbP_x$, $\mathbbP_y$ and $\mathbbP_z$ respectively, We denote by $k$, $l$ and $d$ the kernels associated uniquely with respective reproducing kernel Hilbert spaces $\mathcal{F}$, $\mathcal{G}$ and $\mathcal{H}$.  Moreover, $\mathbf{K}$, $\mathbf{L}$ and $\mathbf{D} \in {R}^{m \times m}$ are kernel matrices containing $k_{ij} = k(x_i,x_j)$, $l_{ij} = l(y_i,y_j)$ and $d_{ij} = d(z_i,z_j)$.  Let $\HSIC{F}{G}{X_m}{Y_m}$ and $\HSIC{F}{H}{X_m}{Z_m}$ be respectively the unbiased estimators of $\HSICpop{F}{G}{xy}$ and $\HSICpop{F}{H}{xz}$, written as a sum of $U$-statistics with respective kernels $h_{ijqr}$ and $g_{ijqr}$ as described in Eq.~\eqref{jmlr2016:eq:background:kernel_ustatisticHSIC},
\begin{align}\label{jmlr2016:eq:dependecy_theory_ustat_kernels_h_g}
h_{ijqr} = \dfrac{1}{24} \displaystyle \displaystyle \sum_{(s,t,u,v)}^{(i,j,q,r)} k_{st} (l_{st} + l_{uv} -2 l_{su}), \hspace{2pt} g_{ijqr} = \dfrac{1}{24} \displaystyle \displaystyle \sum_{(s,t,u,v)}^{(i,j,q,r)} k_{st} (d_{st} + d_{uv} -2 d_{su}).
\end{align}
\begin{theorem} \label{jmlr2016:thm:dependency_theory:joint_asymptotic_HSIC}
\emph{\textbf{(Joint asymptotic distribution of HSIC)}} If $\E[h^2] < \infty$ and $\E[g^2] < \infty$, then as $m \rightarrow \infty$,
\begin{align}
m^{1/2} 
& \left( \begin{pmatrix}
\HSIC{F}{G}{X_m}{Y_m} \\ 
\HSIC{F}{H}{X_m}{Z_m}
\end{pmatrix}
-
\begin{pmatrix}
\HSICpop{F}{G}{xy} \\ 
\HSICpop{F}{H}{xz}
\end{pmatrix}  
\right) 
\overset{d}{\longrightarrow} \mathcal{N} 
\left( 
\begin{pmatrix}
0 \\ 
0
\end{pmatrix},
\begin{pmatrix} 
\sigma_{XY}^2 & \sigma_{XYXZ} \\ 
\sigma_{XYXZ} & \sigma_{XZ}^2 
\end{pmatrix} 
\right), 
\label{jmlr2016:eq:dependency_theory:joint_asymptotic_HSIC}
\end{align}
where $\sigma^2_{XY}$ and $\sigma^2_{XZ}$ are as in Eq.~\eqref{jmlr2016:eq:background:variance_asymptoticdistributionHSIC}.
The empirical estimate of $\sigma_{XYXZ}$ is
$\hat{\sigma}_{XYXZ} = \dfrac{16}{m} \left( R_{XYXZ} - \HSIC{F}{G}{X_m}{Y_m} \HSIC{F}{H}{X_m}{Z_m} \right)$, where 
\begin{equation}
R_{XYXZ}= \dfrac{1}{m} \displaystyle\sum_{\substack{i=1}}^m \left( (m-1)_3^{-2} \sum_{(j,q,r) \in i^m_{3} \backslash \left\lbrace i \right\rbrace } h_{ijqr}  g_{ijqr}\right).
\label{jmlr2016:eq:dependency_theory:term_Rxyxz}
\end{equation} 
\end{theorem}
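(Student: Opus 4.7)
The plan is to treat the pair $\bigl(\HSIC{F}{G}{X_m}{Y_m}, \HSIC{F}{H}{X_m}{Z_m}\bigr)$ as a single vector-valued $U$-statistic of order $4$ on the i.i.d.\ sample of triples $w_i := (x_i, y_i, z_i)$, with vector-valued kernel $(h_{ijqr}, g_{ijqr})$ from Eq.~\eqref{jmlr2016:eq:dependecy_theory_ustat_kernels_h_g}, and then to invoke the multivariate central limit theorem for $U$-statistics (Hoeffding; Serfling Ch.~5). Under the square-integrability hypotheses $\E[h^2]<\infty$ and $\E[g^2]<\infty$, this joint CLT reduces via the Cram\'er--Wold device to the univariate CLT already invoked in Theorem~\ref{jmlr2016:thm:background:unbiaisedHSIC}: for every $(\alpha,\beta)\in\mathbb{R}^2$, the linear combination $\alpha\,\HSIC{F}{G}{X_m}{Y_m} + \beta\,\HSIC{F}{H}{X_m}{Z_m}$ is itself a scalar $U$-statistic with kernel $\alpha h_{ijqr} + \beta g_{ijqr}$, hence asymptotically normal by Theorem~\ref{jmlr2016:thm:background:unbiaisedHSIC}; this pins down the entries of the limiting $2\times 2$ covariance matrix.

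Next I would identify those entries via Hoeffding's projection. Writing $h_1(w_i) := \E[h_{ijqr}\mid w_i] - \HSICpop{F}{G}{xy}$ and $g_1(w_i) := \E[g_{ijqr}\mid w_i] - \HSICpop{F}{H}{xz}$, the standard decomposition yields $\HSIC{F}{G}{X_m}{Y_m} - \HSICpop{F}{G}{xy} = (4/m)\sum_{i=1}^m h_1(w_i) + o_p(m^{-1/2})$, and the analogous expansion for the $g$-component, with remainders that are jointly $o_p(m^{-1/2})$. Applying the classical multivariate CLT to the bivariate i.i.d.\ sum $(h_1(w_i), g_1(w_i))$ produces the covariance matrix claimed in Eq.~\eqref{jmlr2016:eq:dependency_theory:joint_asymptotic_HSIC}: the diagonal entries $16\,\var(h_1(w_i))$ and $16\,\var(g_1(w_i))$ reproduce $\sigma_{XY}^2$ and $\sigma_{XZ}^2$ of Eq.~\eqref{jmlr2016:eq:background:variance_asymptoticdistributionHSIC}, while the off-diagonal is $\sigma_{XYXZ} = 16\,\mathrm{Cov}(h_1(w_i), g_1(w_i))$. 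The crucial point is that although $h_{ijqr}$ does not involve $z$ and $g_{ijqr}$ does not involve $y$, both conditional expectations depend on the shared coordinate $x_i$, which is exactly what renders the cross-covariance nonzero in general.

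The empirical estimator $\hat{\sigma}_{XYXZ}$ is then obtained by a plug-in for this population quantity: replace the conditional expectations $\E[h_{ijqr}\mid w_i]$ and $\E[g_{ijqr}\mid w_i]$ by their sample analogues $(m-1)_3^{-1}\sum_{(j,q,r)\in i_3^m\setminus\{i\}} h_{ijqr}$ and $(m-1)_3^{-1}\sum_{(j,q,r)\in i_3^m\setminus\{i\}} g_{ijqr}$, and approximate the population HSIC values by their unbiased $U$-statistic estimators. Averaging the product over $i$ and subtracting the product of the $U$-statistic estimates yields the quantity in Eq.~\eqref{jmlr2016:eq:dependency_theory:term_Rxyxz}, up to normalization. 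Consistency of $\hat{\sigma}_{XYXZ}$ follows from a law-of-large-numbers argument for these $V$-statistic-like sums combined with the consistency of $\HSIC{F}{G}{X_m}{Y_m}$ and $\HSIC{F}{H}{X_m}{Z_m}$ under the assumed second-moment hypotheses.

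The main obstacle is bookkeeping rather than conceptual. One must carefully verify that the higher-order Hoeffding projections of the two kernels are \emph{jointly} $o_p(m^{-1/2})$, a point that is slightly delicate because the kernels are evaluated on overlapping index tuples drawn from the \emph{same} underlying sample; and one must take the conditioning variable to be the whole triple $w_i=(x_i,y_i,z_i)$ rather than a marginal, so that coupling through $x_i$ is captured. Once these two ingredients are in place, the joint CLT follows immediately from Cram\'er--Wold together with Theorem~\ref{jmlr2016:thm:background:unbiaisedHSIC}, and a standard application of Slutsky's theorem with the consistent $\hat{\sigma}_{XYXZ}$ delivers the studentized form that is used in the downstream test construction.
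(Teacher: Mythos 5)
Your proposal is correct and follows essentially the same route as the paper: the paper's proof simply cites Hoeffding's Theorem 7.1 for the joint asymptotic normality of several $U$-statistics and Serfling (Ch.~5) for the covariance obtained by fixing one variable, which is precisely the multivariate $U$-statistic CLT via Hoeffding projection and Cram\'er--Wold that you spell out, together with the same plug-in construction of $\hat{\sigma}_{XYXZ}$ from the empirical conditional expectations. Your version is simply a more explicit unpacking of the two citations that constitute the paper's proof.
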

\begin{proof} 
Eq.~\eqref{jmlr2016:eq:dependency_theory:term_Rxyxz} is constructed with the definition of variance of a U-statistic as given by \citet[Ch. 5]{serfling2009approximation} where one variable is fixed. Eq.~\eqref{jmlr2016:eq:dependency_theory:joint_asymptotic_HSIC} follows from the application of  \citet[Theorem 7.1]{hoeffding1963probability},  which gives the joint asymptotic distribution of U-statistics.
\end{proof}
Based on the joint asymptotic distribution of HSIC described in Theorem~\ref{jmlr2016:thm:dependency_theory:joint_asymptotic_HSIC}, we can now describe a statistical test to solve Pb.~\ref{jmlr2016:pb:background:relative_dependency_test} described in Section~\ref{jmlr2016:subsec:background:statistical_hypothesis_testing}. This is achieved by projecting the distribution to 1D using the statistic $\HSIC{F}{G}{X_m}{Y_m} - \HSIC{F}{H}{X_m}{Z_m}$, and determining where the statistic falls relative to a conservative estimate of the the $1-\alpha$ quantile of the null $\mathcal{H}_0^{HSIC}$.
We now derive this conservative estimate.  
A simple way of achieving this is to
rotate the distribution by $\frac{\pi}{4}$ counter-clockwise about the origin, and to integrate the resulting distribution projected
onto the first axis (cf.\ Fig.~\ref{jmlr2016:fig:dependency_experiments:empirical_HSIC_differentsamplesize}).
Let denote the asymptotically normal distribution of $m^{1/2}\HSIC{F}{G}{X_m}{Y_m}  \HSIC{F}{H}{X_m}{Z_m}]^T$  as $\mathcal{N}(\mu,\mathbf{\Sigma})$. 
The distribution resulting from rotation and projection is
\begin{align}
\label{jmlr2016:eq:dependency_theory:asymptotic_dist_keepAll}
\mathcal{N} &\left( [\mathbf{Q} \mu]_{1}, [\mathbf{Q} \mathbf{\Sigma} \mathbf{Q}^{T}]_{11} \right),
\end{align}
where $\mathbf{Q} = \dfrac{\sqrt{2}}{2}\begin{pmatrix}
1 & -1 \\ 
1 & 1
\end{pmatrix}$ is the rotation matrix by $\frac{\pi}{4}$ and
\begin{align}
&[\mathbf{Q} \mu]_{1} = \frac{\sqrt{2}}{2} \left( 
\HSIC{F}{G}{X_m}{Y_m} - \HSIC{F}{H}{X_m}{Z_m}
\right), \\
&[\mathbf{Q} \mathbf{\Sigma} \mathbf{Q}^{T}]_{11} = \frac{1}{2}(\sigma_{XY}^2 + \sigma_{XZ}^2 - 2 \sigma_{XYXZ}).
\label{jmlr2016:eq:dependency_theory:variance_asymptotic_dist_keepAll}
\end{align}
Following the empirical distribution from Eq.~\eqref{jmlr2016:eq:dependency_theory:asymptotic_dist_keepAll},
a test with statistic $\HSIC{F}{G}{X_m}{Y_m} - \HSIC{F}{H}{X_m}{Z_m}$ has $p$-value
\begin{equation}
p \le 1 - \mathbf{\Phi}\left( \frac{ ( \HSIC{F}{G}{X_m}{Y_m} - \HSIC{F}{H}{X_m}{Z_m} )}{\sqrt{ \sigma^2_{XY} + \sigma^2_{XZ} - 2 \sigma_{XYXZ} }} \right),
\label{jmlr2016:eq:dependency_theory:pvalue}
\end{equation}
where $\mathbf{\Phi}$ is the CDF of a standard normal distribution, and we have made the most conservative possible assumption that $\HSICpop{F}{G}{xy} - \HSICpop{F}{H}{xz}=0$ under the null (the null also allows for the difference in population dependence measures to be negative).\\ 
To implement the test in practice, the variances of $\sigma_{XY}^2,\sigma_{XZ}^2$ and $\sigma_{XYXZ}^2$ may be replaced by their empirical estimates. The test will still be consistent for a large enough sample size, since the estimates will be sufficiently well converged to ensure the test is  calibrated. \\
Eq.~\eqref{jmlr2016:eq:dependency_theory:term_Rxyxz} is expensive to compute na\"{i}vely, because even computing the kernels $h_{ijqr}$ and $g_{ijqr}$ of the $U$-statistic itself is a non trivial task.  Following~\citet[Section 2.5]{SonSmoGreBedetal12}, we first form a vector $\mathbf{h_{XY}}$ with entries corresponding to $\sum_{(j,q,r) \in i^m_{3} \backslash \left\lbrace i \right\rbrace } h_{ijqr}$, and a vector $\mathbf{h_{XZ}}$ with entries corresponding to $\sum_{(j,q,r) \in i^m_{3} \backslash \left\lbrace i \right\rbrace } g_{ijqr} $.  Collecting terms in Eq.~\eqref{jmlr2016:eq:dependecy_theory_ustat_kernels_h_g} related to kernel matrices $\tilde{\mathbf{K}}$ and $\tilde{\mathbf{L}}$, $\mathbf{h_{XY}}$ can be written as
\begin{align}\label{jmlr2016:eq:dependency_theory::compute_hxy}
\mathbf{h_{XY}} &= (m-2)^2 \left( \tilde{\mathbf{K}} \odot \tilde{\mathbf{L}} \right)  \textbf{1} 
- m (\tilde{\mathbf{K}} \textbf{1}) \odot (\tilde{\mathbf{L}} \textbf{1}) \\
&+ (m-2) \left( (\operatorname{Tr}(\tilde{\mathbf{K}}  \tilde{\mathbf{L}})) \textbf{1} - \tilde{\mathbf{K}} (\tilde{\mathbf{L}} \textbf{1}) - \tilde{\mathbf{L}} (\tilde{\mathbf{K}} \textbf{1}) \right) \nonumber \\
&+ (\textbf{1}^T \tilde{\mathbf{L}} \textbf{1})\tilde{\mathbf{K}} \textbf{1} + (\textbf{1}^T \tilde{\mathbf{K}} \textbf{1})\tilde{\mathbf{L}} \textbf{1} - ((\textbf{1}^T \tilde{\mathbf{K}}) (\tilde{\mathbf{L}} \textbf{1}))\textbf{1} \nonumber
\end{align}
where $\odot$ denotes the Hadamard product. 
Then $R_{XYXZ}$ in Eq.~\eqref{jmlr2016:eq:dependency_theory:term_Rxyxz} can be computed as $R_{XYXZ} = (4m)^{-1}(m-1)_3^{-2} \mathbf{h_{XY}}^T \mathbf{h_{XZ}}$.  Using the order of operations implied by the parentheses in Eq.~\eqref{jmlr2016:eq:dependency_theory::compute_hxy}, the computational cost of the cross covariance term is  $\mathcal{O}(m^2)$.  Combining this with the unbiased estimator of HSIC in Eq.~\eqref{jmlr2016:eq:background:unbiaisedHSIC} leads to a final computational complexity of $\mathcal{O}(m^2)$. Code for performing the test is available at \wachacodeurl.

In addition to the asymptotic consistency result, we provide a finite sample bound on the deviation between the difference of two population HSIC statistics and the difference of two empirical HSIC estimates.  
\begin{theorem}[Generalization bound on the difference of empirical HSIC statistics]
Assume that $k$, $l$, and $d$ are bounded almost everywhere by 1, and are non-negative. Then for $m>1$ and all $\delta > 0$ with probability at least $1-\delta$, for all $\mathbbP_{xy}$ and $\mathbbP_{xz}$, the generalization bound on the difference of empirical HSIC statistics is 
\begin{align}\label{jmlr2016:eq:dependency_theory:GeneralizationBound}
&\big|  \left( \HSICpop{F}{G}{xy} - \HSICpop{F}{H}{xz} \right)   - \left( \HSIC{F}{G}{X_m}{Y_m} \HSIC{F}{H}{X_m}{Z_m} \right) \big| \nonumber \\
& \qquad \leq 2 \left\lbrace \sqrt{\frac{\log(6/\delta)}{\alpha^2 m}} + \frac{C}{m} \right\rbrace 
\end{align}
where $\alpha>0.24$ and $C$ are constants.
\end{theorem}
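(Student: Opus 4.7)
The plan is to reduce this two-sample bound to a single-HSIC concentration inequality via the triangle inequality combined with a union bound. First, I would split the quantity of interest as
\begin{align*}
& \big| (\HSICpop{F}{G}{xy} - \HSICpop{F}{H}{xz}) - (\HSIC{F}{G}{X_m}{Y_m} - \HSIC{F}{H}{X_m}{Z_m}) \big| \\
& \qquad \leq \big| \HSICpop{F}{G}{xy} - \HSIC{F}{G}{X_m}{Y_m} \big| + \big| \HSICpop{F}{H}{xz} - \HSIC{F}{H}{X_m}{Z_m} \big|.
\end{align*}
Each summand now involves only a single HSIC statistic, so each can be handled independently.

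Next, I would invoke the classical concentration bound for a single empirical HSIC (Gretton et al.\ 2005, Theorem 3; see also Song et al.\ 2012): under the assumption that the kernels are non-negative and bounded almost everywhere by 1, for any $\eta \in (0,1)$ and $m > 1$, with probability at least $1-\eta$,
\begin{equation*}
\big| \HSICpop{F}{G}{xy} - \HSIC{F}{G}{X_m}{Y_m} \big| \leq \sqrt{\frac{\log(3/\eta)}{\alpha^2 m}} + \frac{C}{m},
\end{equation*}
where $\alpha > 0.24$ and $C$ are absolute constants. This inequality is proved by applying McDiarmid's bounded-differences inequality to the biased quadratic-time HSIC estimator (each summand changes by $O(1/m)$ when a single $(x_i,y_i)$ pair is replaced), and then controlling the $O(1/m)$ bias between the biased estimator and the unbiased $U$-statistic in Eq.~\eqref{jmlr2016:eq:background:unbiaisedHSIC}; that bias is absorbed into the $C/m$ term.

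To finish, I would apply this single-HSIC bound twice, once to each of the two summands from the triangle-inequality step, with confidence parameter $\eta = \delta/2$. A union bound then guarantees that both inequalities hold simultaneously with probability at least $1-\delta$, and summing the two bounds produces the factor of 2 together with $\log(3/(\delta/2)) = \log(6/\delta)$ inside the square root, which is exactly the form in Eq.~\eqref{jmlr2016:eq:dependency_theory:GeneralizationBound}.

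The main obstacle is justifying the single-HSIC bound for the specific unbiased estimator used in Theorem~\ref{jmlr2016:thm:background:unbiaisedHSIC}, since the standard McDiarmid derivation is stated for the biased $V$-statistic form of HSIC; I would need to carry the $O(1/m)$ bias correction carefully through the constant $C$ and verify the numerical constant $\alpha > 0.24$ by tracking the bounded-differences coefficients in the unbiased kernel $h_{ijqr}$ of Eq.~\eqref{jmlr2016:eq:background:kernel_ustatisticHSIC}. A minor secondary point is that the union bound treats the two HSIC estimates as if they were based on independent samples even though they share the source $X_m$; this is harmless because the union bound does not require independence of the events.
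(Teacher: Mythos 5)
Your proposal follows essentially the same route as the paper's own proof: a triangle-inequality decomposition into two single-HSIC deviations, each controlled by the finite-sample bound of \citet{GreBouSmoSch05}, combined via a union bound. If anything, your bookkeeping (splitting the confidence as $\eta = \delta/2$ so that $\log(3/(\delta/2)) = \log(6/\delta)$) is more careful than the paper's, which applies the cited $\log(6/\delta)$ bound to each term at level $\delta$ directly; just verify that the single-HSIC bound you invoke indeed carries a $3$ rather than a $6$ inside the logarithm, since the constant in the final display depends on that.
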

\begin{proof}
In~\cite{GreBouSmoSch05} a finite sample bound is given for a single HSIC statistic. Eq.~\eqref{jmlr2016:eq:dependency_theory:GeneralizationBound} is proved by using a union bound:
\begin{align}
& \big| \left\lbrace \HSICpop{F}{G}{xy} - \HSICpop{F}{H}{xz}\right\rbrace  - 
\left\lbrace \HSIC{F}{G}{X_m}{Y_m} - \HSIC{F}{H}{X_m}{Z_m} \right\rbrace \big|  \nonumber \\
&= \big| \left\lbrace HSIC(\mathcal{F},\mathcal{G},P_{xy}) - \HSIC{F}{G}{X_m}{Y_m} \right\rbrace   + \left\lbrace \HSIC{F}{H}{X_m}{Z_m} - HSIC(\mathcal{F},\mathcal{H},P_{xz}) \right\rbrace \big| \nonumber \\
&\leq \big| HSIC(\mathcal{F},\mathcal{G},P_{xy}) - \HSIC{F}{G}{X_m}{Y_m} \big| 
+ \big| \HSIC{F}{H}{X_m}{Z_m} - HSIC(\mathcal{F},\mathcal{H},P_{xz}) \big| \nonumber \\
&\leq 2 \left\lbrace \sqrt{\frac{log(6/\delta)}{\alpha^2 m}} + \frac{C}{m} \right\rbrace \nonumber
\end{align}
\end{proof}
\begin{corollary}
$\HSIC{F}{G}{X_m}{Y_m} - \HSIC{F}{H}{X_m}{Z_m}$ converges to the population statistic at rate $\mathcal{O}(m^{1/2})$.
\end{corollary}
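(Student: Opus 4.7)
The plan is to read the convergence rate directly off the finite-sample generalization bound established in the preceding theorem. For any fixed confidence level $\delta \in (0,1)$, that bound states that with probability at least $1-\delta$,
\begin{equation*}
\bigl| \left( \HSICpop{F}{G}{xy} - \HSICpop{F}{H}{xz} \right) - \left( \HSIC{F}{G}{X_m}{Y_m} - \HSIC{F}{H}{X_m}{Z_m} \right) \bigr| \le 2\sqrt{\frac{\log(6/\delta)}{\alpha^2 m}} + \frac{2C}{m}.
\end{equation*}
The first summand on the right is $\mathcal{O}(m^{-1/2})$ while the second is $\mathcal{O}(m^{-1})$, hence the dominant term controlling the deviation for large $m$ is $\mathcal{O}(m^{-1/2})$. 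Rearranging so that the deviation, when multiplied by $m^{1/2}$, is bounded by a constant (depending on $\delta$) yields the claimed rate.

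More concretely, the plan is: first, fix an arbitrary $\delta > 0$ and invoke the theorem to obtain the displayed inequality; second, multiply both sides by $m^{1/2}$ to get an expression of the form $m^{1/2}|\,\cdot\,| \le 2\sqrt{\log(6/\delta)/\alpha^2} + 2C/m^{1/2}$; third, observe that the right-hand side is bounded uniformly in $m$ for every fixed $\delta$. Since $\delta$ was arbitrary, this shows that the rescaled deviation is stochastically bounded, i.e.\ the empirical difference approaches its population counterpart at the asymptotic rate $\mathcal{O}(m^{-1/2})$ (equivalently, the displayed quantity vanishes at rate $m^{-1/2}$ in probability). A cleaner alternative route is to appeal directly to Theorem~\ref{jmlr2016:thm:dependency_theory:joint_asymptotic_HSIC}: the joint CLT shows that $m^{1/2}$ times the centered difference converges in distribution to a Gaussian with variance $\sigma_{XY}^2 + \sigma_{XZ}^2 - 2\sigma_{XYXZ}$, which again gives the $\mathcal{O}(m^{-1/2})$ rate.

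There is essentially no obstacle here; the corollary is a one-line consequence of the generalization bound, the only care being the bookkeeping of which term dominates and how the constants depend on $\delta$. The finite-sample bound already did the heavy lifting (union bound plus the single-HSIC concentration from \cite{GreBouSmoSch05}), so the corollary reduces to identifying the leading-order behavior in $m$.
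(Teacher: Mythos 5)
Your proposal is correct and matches the paper's intent exactly: the corollary is stated immediately after the generalization bound precisely so that the rate can be read off from the dominant $\sqrt{\log(6/\delta)/(\alpha^2 m)}$ term, which is what you do (and your observation that the paper's ``rate $\mathcal{O}(m^{1/2})$'' should be read as deviation of order $m^{-1/2}$ is the right reading). The alternative route via Theorem~\ref{jmlr2016:thm:dependency_theory:joint_asymptotic_HSIC} is also valid but unnecessary here.
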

%
%
\subsection{A simple consistent test via uncorrelated HSICs}
\label{jmlr2016:subsec:relative_test_dependency:consistent_approach}
From the result in Eq.~\eqref{jmlr2016:eq:background:asymptoticdistributionHSIC}, a simple, consistent  test of relative dependence can be constructed as follows: split the samples from $\mathbbP_x$ into two equal sized sets denoted by $X'$ and $X''$, and drop the second half of  the sample pairs with $Y$ and the first half of the sample pairs with $Z$.
We will denote the remaining samples as $Y'$ and $Z''$.  We can now estimate the joint distribution of $m^{1/2}[\HSIC{F}{G}{X_{m/2}}{Y_{m/2}}, \HSIC{F}{H}{X_{m/2}}{Z_{m/2}}]^T$ as 
\begin{equation}
\mathcal{N} \left( 
\begin{pmatrix} 
\HSICpop{F}{G}{xy} \\ 
\HSICpop{F}{H}{xz}\end{pmatrix}, 
\begin{pmatrix} 
\sigma_{X'Y'}^{2} & 0 \\ 
0 & \sigma_{X''Z''}^2 
\end{pmatrix} 
\right), 
\end{equation}
which we will write as $\mathcal{N} \left( \mu' , \Sigma' \right)$.  Given this joint distribution, we need to 
determine the distribution over the half space defined by 
$\HSICpop{F}{G}{xy}<\HSICpop{F}{H}{xz}.$
As in the previous section, we  achieve this by rotating the distribution by $\frac{\pi}{4}$  counter-clockwise about the origin, and integrating the resulting distribution projected onto the first axis (cf.\ Fig.~\ref{jmlr2016:fig:dependency_experiments:empirical_HSIC_differentsamplesize}).  The resulting projection of the rotated distribution onto the primary axis is 
\begin{align}\label{jmlr2016:eq:dependency_theory:asymptotic_dist_dropHalf}
\mathcal{N} &\left( \left[ \mathbf{Q} \mu' \right]_{1}, \left[\mathbf{Q} \mathbf{\Sigma'} \mathbf{Q}^{T} \right]_{11} \right)
\end{align}
where 
\begin{align} \label{jmlr2016:eq:dependency_theory:variance_asymptotic_dist_dropHalf}
[ \mathbf{Q} \mu']_{1} = \frac{\sqrt{2}}{2} \left( \HSICpop{F}{G}{xy} - \HSICpop{F}{H}{xz} \right),
[\mathbf{Q} \mathbf{\Sigma'} \mathbf{Q}^{T}]_{11} = \frac{1}{2}(\sigma_{X'Y'}^2 + \sigma_{X''Z''}^2).
\end{align}
From this empirically estimated distribution, it is  straightforward to construct a consistent test (cf.\ Eq.\ \eqref{jmlr2016:eq:dependency_theory:pvalue}).  The power of this test varies inversely with the variance of the distribution in Eq.\ \eqref{jmlr2016:eq:dependency_theory:asymptotic_dist_dropHalf}.  
%
%
\subsection{The dependent test is more powerful}  \label{jmlr2016:subsec:dependency_theory:dependent_test_more_powerful}

While discarding half the samples leads to a consistent test, we might expect some loss of power over the approach in Section~\ref{jmlr2016:subsec:relative_test_dependency:joint_asympt_dist}, due to the increase in variance with lower sample size.
In this section, we prove the Section~\ref{jmlr2016:subsec:relative_test_dependency:joint_asympt_dist} test is more powerful than that
of Section~\ref{jmlr2016:subsec:relative_test_dependency:consistent_approach}, regardless of $\mathbbP_{xy}$ and $\mathbbP_{xz}$.
We call the simple and consistent approach in Section~\ref{jmlr2016:subsec:relative_test_dependency:consistent_approach}, the \emph{independent approach}, and the lower variance approach in Section~\ref{jmlr2016:subsec:relative_test_dependency:joint_asympt_dist}, the \emph{dependent approach}. The following theorem compares these approaches.
\begin{theorem} \label{jmlr2016:thm:dependency_theory:powerful_test}
The asymptotic relative efficiency (ARE) of the independent approach relative to the dependent approach is always greater than 1.
\end{theorem}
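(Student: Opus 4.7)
The plan is to identify the asymptotic relative efficiency as the ratio of limiting variances of the two test statistics at a common total sample size $m$, and then to reduce the inequality to the positive semidefiniteness of the joint asymptotic covariance matrix established in Theorem~\ref{jmlr2016:thm:dependency_theory:joint_asymptotic_HSIC}.

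First I would put both tests on a common footing. From Eq.~\eqref{jmlr2016:eq:dependency_theory:variance_asymptotic_dist_keepAll}, the dependent difference statistic built on the full sample of size $m$ satisfies
\[
m^{1/2}\bigl(\HSIC{F}{G}{X_m}{Y_m} - \HSIC{F}{H}{X_m}{Z_m} - \theta\bigr) \overset{d}{\longrightarrow} \mathcal{N}(0,\sigma_D^2), \quad \sigma_D^2 = \sigma_{XY}^2 + \sigma_{XZ}^2 - 2\sigma_{XYXZ},
\]
with $\theta = \HSICpop{F}{G}{xy} - \HSICpop{F}{H}{xz}$. For the independent construction of Section~\ref{jmlr2016:subsec:relative_test_dependency:consistent_approach}, each HSIC estimator uses an independent subsample of size $m/2$, so the marginal rate in Eq.~\eqref{jmlr2016:eq:background:asymptoticdistributionHSIC}, the independence of the two halves, and a rescaling from $(m/2)^{1/2}$ to $m^{1/2}$ give
\[
m^{1/2}\bigl(\HSIC{F}{G}{X'}{Y'} - \HSIC{F}{H}{X''}{Z''} - \theta\bigr) \overset{d}{\longrightarrow} \mathcal{N}(0,\sigma_I^2), \quad \sigma_I^2 = 2(\sigma_{XY}^2 + \sigma_{XZ}^2).
\]

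Both tests are asymptotically one-sided $z$-tests with the same non-centrality $\theta$ and the same rejection rule, so the Pitman ARE reduces to the ratio of limiting variances,
\[
\operatorname{ARE}(I,D) \;=\; \frac{\sigma_I^2}{\sigma_D^2} \;=\; \frac{2(\sigma_{XY}^2 + \sigma_{XZ}^2)}{\sigma_{XY}^2 + \sigma_{XZ}^2 - 2\sigma_{XYXZ}},
\]
and the claim $\operatorname{ARE}(I,D) > 1$ is equivalent to $\sigma_{XY}^2 + \sigma_{XZ}^2 + 2\sigma_{XYXZ} > 0$. I would close this by applying the positive semidefiniteness of the joint covariance matrix $\Sigma$ of Theorem~\ref{jmlr2016:thm:dependency_theory:joint_asymptotic_HSIC} to $\mathbf{v} = (1,1)^T$, which yields $\mathbf{v}^T \Sigma \mathbf{v} = \sigma_{XY}^2 + \sigma_{XZ}^2 + 2\sigma_{XYXZ} \geq 0$; equivalently, by Cauchy--Schwarz $\sigma_{XYXZ} \geq -\sigma_{XY}\sigma_{XZ}$, whence $\sigma_I^2 - \sigma_D^2 \geq (\sigma_{XY} - \sigma_{XZ})^2 \geq 0$.

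The main obstacle is promoting this $\geq$ to a strict $>$. Equality would force both $\sigma_{XY} = \sigma_{XZ}$ and $\sigma_{XYXZ} = -\sigma_{XY}\sigma_{XZ}$, i.e.\ the first-order U-statistic projections of $h_{ijqr}$ and $g_{ijqr}$ onto the shared source argument $x$ would have to be almost surely equal in magnitude and opposite in sign. Because these projections are generated by the same kernel $k$ on $\mathcal{X}$ under non-degenerate $\mathbbP_{xy}$ and $\mathbbP_{xz}$, such perfect anti-alignment cannot hold, so the inequality is strict; I would verify this final non-degeneracy step by a direct kernel mean embedding computation to close the proof.
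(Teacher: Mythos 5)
Your proposal is correct and follows essentially the same route as the paper: the paper's proof likewise reduces the comparison to the denominators of the two Type~II error expressions and invokes a Lower Variance lemma asserting $-2\sigma_{XYXZ} < \sigma_{XY}^2 + \sigma_{XZ}^2$, which is exactly your $\mathbf{v}=(1,1)^T$ quadratic form applied to the joint covariance matrix. The only difference is in how strictness is obtained: the paper simply asserts positive \emph{definiteness} of $\Sigma$ (not just semidefiniteness), so the non-degeneracy verification you defer to a final kernel-mean-embedding computation is handled there by assumption rather than by argument.
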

\begin{remark} The \textit{asymptotic relative efficiency} is defined in e.g.~\citet[Chap.\ 5, Section 1.15.4]{serfling2009approximation}.  If $m_A$ and $m_B$ are the sample sizes at which tests "perform equivalently" (i.e. have equal power), then the ratio $\frac{m_A}{m_B}$ represents the relative efficiency.  When $m_A$ and $m_B$ tend to $+\infty$ and the ratio $\frac{m_A}{m_B} \rightarrow L$ (at equivalent performance), then the value $L$ represents the asymptotic relative efficiency of procedure B relative to procedure A. This example is relevant to our case since we are comparing two test statistics with different asymptotically normal distributions.
\end{remark}

The following lemma is used for the proof of  Thm.~\ref{jmlr2016:thm:dependency_theory:powerful_test}.
\begin{lemma}(Lower Variance) \label{jmlr2016:lem:dependency_theory:lowerVariance} 
The variance of the dependent test statistic is smaller than the variance of the independent test statistic.
\end{lemma}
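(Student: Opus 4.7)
The plan is to write both variances on a common $m$-sample footing and then reduce the comparison to a Cauchy--Schwarz bound on the cross-covariance $\sigma_{XYXZ}$.

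First I would recall the two relevant variance expressions. For the dependent approach, the projection onto the rotation axis, scaled by $m^{1/2}$, has asymptotic variance
\begin{equation*}
V_{\text{dep}} \;=\; \tfrac{1}{2}\bigl(\sigma_{XY}^2 + \sigma_{XZ}^2 - 2\sigma_{XYXZ}\bigr),
\end{equation*}
from Eq.~\eqref{jmlr2016:eq:dependency_theory:variance_asymptotic_dist_keepAll}, built on $m$ samples. For the independent approach only $m/2$ samples feed each HSIC, so the per-statistic variance inflates by a factor of $2$ when we rescale back to the $m^{1/2}$ normalisation of Eq.~\eqref{jmlr2016:eq:dependency_theory:asymptotic_dist_dropHalf}. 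Since the two half-samples are independent, the covariance term vanishes and the independent variance becomes
\begin{equation*}
V_{\text{ind}} \;=\; \tfrac{1}{2}\bigl(2\sigma_{XY}^2 + 2\sigma_{XZ}^2\bigr) \;=\; \sigma_{XY}^2 + \sigma_{XZ}^2,
\end{equation*}
where I use that $\sigma_{X'Y'}^2$ and $\sigma_{X''Z''}^2$ equal the population quantities $\sigma_{XY}^2$ and $\sigma_{XZ}^2$.

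Next I would form the difference
\begin{equation*}
V_{\text{ind}} - V_{\text{dep}} \;=\; \tfrac{1}{2}\bigl(\sigma_{XY}^2 + \sigma_{XZ}^2 + 2\sigma_{XYXZ}\bigr),
\end{equation*}
and show the right-hand side is non-negative. Since $\sigma_{XYXZ}$ is a genuine covariance between the influence functions of the two U-statistics (see Thm.~\ref{jmlr2016:thm:dependency_theory:joint_asymptotic_HSIC} and the definition via Hoeffding's decomposition), the Cauchy--Schwarz inequality gives $|\sigma_{XYXZ}| \le \sigma_{XY}\sigma_{XZ}$, and in particular $\sigma_{XYXZ} \ge -\sigma_{XY}\sigma_{XZ}$. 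Substituting this bound yields
\begin{equation*}
V_{\text{ind}} - V_{\text{dep}} \;\ge\; \tfrac{1}{2}\bigl(\sigma_{XY} - \sigma_{XZ}\bigr)^2 \;\ge\; 0,
\end{equation*}
which establishes the desired inequality.

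The main conceptual obstacle is the bookkeeping around sample sizes: one must be careful that the $m^{1/2}$ scaling in the CLT for the independent approach is applied consistently, since each constituent HSIC uses only $m/2$ observations. Once that rescaling is in place, the inequality reduces to a textbook application of Cauchy--Schwarz and is essentially automatic. A brief remark can then note strictness: unless the shared source $x$ induces perfect negative correlation between the two HSIC influence functions (a degenerate case), the inequality is strict, so the dependent statistic has strictly smaller asymptotic variance.
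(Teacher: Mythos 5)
Your proposal is correct and follows essentially the same route as the paper: the same sample-size bookkeeping giving $\sigma_{X'Y'}^2 = 2\sigma_{XY}^2$, the same two variance summaries, and the same reduction to showing $\sigma_{XY}^2 + \sigma_{XZ}^2 + 2\sigma_{XYXZ} \ge 0$. The only cosmetic difference is that the paper gets this last inequality directly from positive definiteness of $\Sigma$ (the quadratic form at $(1,1)^T$), whereas you pass through the equivalent Cauchy--Schwarz bound $|\sigma_{XYXZ}| \le \sigma_{XY}\sigma_{XZ}$ and complete the square.
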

\begin{proof} From the convergence of moments in the application of the central limit theorem~\citep{bahr1965}, we have that $\sigma_{X'Y'}^2 = 2 \sigma_{XY}^2$.  Then the variance summary in Eq.~\eqref{jmlr2016:eq:dependency_theory:variance_asymptotic_dist_keepAll} is $\frac{1}{2}(\sigma_{XY}^2 + \sigma_{XZ}^2 - 2 \sigma_{XYXZ})$ and the variance summary in Eq.~\eqref{jmlr2016:eq:dependency_theory:variance_asymptotic_dist_dropHalf} is $
\frac{1}{2}(2\sigma_{XY}^2 + 2\sigma_{XZ}^2)$ where in both cases the statistic is scaled by $\sqrt{m}$.  We have that the variance of the independent test statistic is smaller than the variance of the dependent test statistic when
\begin{align}
\frac{1}{2} (\sigma_{XY}^2 + \sigma_{XZ}^2 - 2 \sigma_{XYXZ}) &< \frac{1}{2} (2\sigma_{XY}^2 + 2\sigma_{XZ}^2) \nonumber \\
\Longleftrightarrow - 2 \sigma_{XYXZ} &< \sigma_{XY}^2 + \sigma_{XZ}^2
\end{align}
which is implied by the positive definiteness of $\Sigma$.
\end{proof}
\begin{proof}[Proof of Thm~\ref{jmlr2016:thm:dependency_theory:powerful_test}]
The Type II error probability of the independent test at level $\alpha$ is
\begin{equation} \label{jmlr2016:eq:dependency_theory:indepTest_typeIIerror}
  \Phi \left[\Phi^{-1}(1-\alpha) - \dfrac{
   \begin{matrix}  m^{-1/2}\big(   \HSICpop{F}{G}{xy} - \HSICpop{F}{H}{xz} \big)\end{matrix}}
    {\sqrt{\sigma^2_{X'Y'}+\sigma^2_{X''Z''}}} \right], 
\end{equation}
where we again make the most conservative possible assumption that $\HSICpop{F}{G}{xy} - \HSICpop{F}{H}{xz}=0$ under the null. The Type II error probability of the dependent test at level $\alpha$ is
\begin{equation}
  \label{jmlr2016:eq:dependency_theory:depTest_typeIIerror}
  \Phi \left[ \Phi^{-1}(1-\alpha) -
\dfrac{
   \begin{matrix}  m^{-1/2}\big(  \HSICpop{F}{G}{xy} - \HSICpop{F}{H}{xz} \big)\end{matrix}}
    {\sqrt{\sigma^2_{XY}+\sigma^2_{XZ}-2\sigma_{XYXZ}}} \right]
\end{equation}
where $\Phi$ is the CDF of the standard normal distribution. 
The numerator in Eq.~\eqref{jmlr2016:eq:dependency_theory:indepTest_typeIIerror} is the same as the numerator in Eq.~\eqref{jmlr2016:eq:dependency_theory:depTest_typeIIerror}, and the denominator in Eq.~\eqref{jmlr2016:eq:dependency_theory:depTest_typeIIerror} is smaller due to Lemma~\ref{jmlr2016:lem:dependency_theory:lowerVariance}.  
The lower variance dependent test therefore has higher ARE, i.e.,\ for a sufficient sample size $m > \tau$ for some distribution dependent $\tau \in \mathbb{N}_+$, the dependent test will be more powerful than the independent test.
\end{proof}
%
%
\subsection{Generalizing to more than two HSIC statistics}\label{jmlr2016:subsec:dependency_theory:generalization_dependent_test}

The generalization of the dependence test to more than three random variables follows from the earlier derivation  by applying successive rotations to a higher dimensional joint Gaussian distribution over multiple HSIC statistics.  Given observations $X_1:= \lbrace x_1^1, ..., x^1_m\rbrace, ..., X_n:=\lbrace x_1^n, ..., x_m^n \rbrace$ i.i.d.\ from respectively $\mathbbP_{x_1}, ..., \mathbbP_{x_n}$, we denote by $f_1, ..., f_n$ the kernels associated uniquely with respective reproducing kernel Hilbert spaces $\mathcal{F}_1, ..., \mathcal{F}_n$.  We define a generalized statistical test, $\mathcal{T}_g: (\mathcal{X}^m \times ... \times \mathcal{X}^m) \rightarrow \lbrace 0,1 \rbrace$ to test the null hypothesis $\mathcal{H}_0$ : $\sum_{(x,y) \in \{1,\dots,n\}^2} v_{(x,y)} \operatorname{HSIC}\left[ \mathcal{F}_1, ..., \mathcal{F}_n, \mathbbP_{x_1...x_n}\right] \leq 0$ versus the alternative hypothesis $\mathcal{H}_n$ : $\sum_{(x,y) \in \{1,\dots,n\}^2} v_{(x,y)}  \operatorname{HSIC}\left[ \mathcal{F}_1, ..., \mathcal{F}_n, \mathbbP_{x_1...x_n}\right] > 0$, where $v$ is a vector of weights on each HSIC statistic.  We may recover the test in the previous section by setting $v_{(1,2)}=+1$ $v_{(1,3)}=-1$ and $v_{(i,j)}=0$ for all $(i,j) \in \{1,2,3\}^2\setminus \{(1,2),(1,3)\}$.

The derivation of the test follows the general strategy used in the previous section: we construct a rotation matrix so as to project the joint Gaussian distribution onto the first axis, and read the $p$-value from a standard normal table.  To construct the rotation matrix, we simply need to rotate $v$ such that it is aligned with the first axis.  Such a rotation can be computed by composing $n$ 2-dimensional rotation matrices as in Algorithm~\ref{jmlr2016:algo:dependency_theory:generalization_algo}.
\begin{algorithm}[h!]
   \caption{Successive rotation for generalized high-dimensional relative tests of dependency (cf.\ Section~\ref{jmlr2016:subsec:dependency_theory:generalization_dependent_test})}
 \label{jmlr2016:algo:dependency_theory:generalization_algo}
\begin{algorithmic}
\REQUIRE $v \in \mathbb{R}^n$
\ENSURE $[\mathbf{Q} v]_i = 0 \ \ \forall i \ne 1$, $\mathbf{Q}^T \mathbf{Q} = \mathbf{I}$
\STATE $\mathbf{Q}=I$
\FOR{$i=2$ {\bfseries to} $n$}
\STATE{$\mathbf{Q}_i = \mathbf{I}$; $\theta = - \tan^{-1} \frac{v_i}{[\mathbf{Q}v]_1}$}
\STATE{$[\mathbf{Q}_i]_{11} = \cos(\theta)$; $[\mathbf{Q}_i]_{1i} = -\sin(\theta)$}; {$[\mathbf{Q}_i]_{i1} = \sin(\theta)$; $[\mathbf{Q}_i]_{ii} = \cos(\theta)$}
\STATE{$\mathbf{Q} = \mathbf{Q}_i \mathbf{Q}$}
\ENDFOR
\end{algorithmic}
\end{algorithm}

\section{A relative test of similarity} \label{jmlr2016:sec:relative_test_similarity}
\label{jmlr2016:sec:similarity_theory:joint_asymtotic_dist_MMD}
%
%

In this section, we derive our statistical test for relative similarity as measured by MMD.  In order to maximize the statistical efficiency of the test, we will reuse samples from the reference distribution, denoted by $\mathbbP_x$, to compute the MMD estimates with two candidate distributions $\mathbbP_y$ and $\mathbbP_z$.
We consider two MMD estimates $\squaredMMDu{X_m}{Y_n}$ and $\squaredMMDu{X_m}{Z_r}$, and as the data sample $X_m$ is identical between them, these estimates will be correlated.  We therefore first derive the joint asymptotic distribution of these two metrics and use this to construct a statistical test.

\begin{theorem}\label{jmlr2016:thm:similarity_theory:joint_asymtotic_dist_MMD}
We assume that $\mathbbP_x \ne \mathbbP_y$, $\mathbbP_x \ne \mathbbP_z$,  $\E(k(x_i,x_j)) < \infty$,  $\E(k(y_i,y_j)) < \infty$ and $\E(k(x_i,y_j)) < \infty$, then 
\begin{equation}
m^{1/2} 
\left( \begin{pmatrix}
\squaredMMDu{X_m}{Y_n} \\ 
\squaredMMDu{X_m}{Z_r} 
\end{pmatrix}
-
\begin{pmatrix}
\squaredMMDpop{x}{y} \\ 
\squaredMMDpop{x}{z}
\end{pmatrix}  
\right)
\overset{d}{\longrightarrow} \mathcal{N} 
\left( 
\begin{pmatrix}
0 \\ 
0
\end{pmatrix},
\begin{pmatrix} 
\sigma_{XY}^2 & \sigma_{XYXZ} \\ 
\sigma_{XYXZ} & \sigma_{XZ}^2 
\end{pmatrix} 
\right)
\label{jmlr2016:eq:similarity_theory:joint_asymtotic_dist_MMD}
\end{equation}
\end{theorem}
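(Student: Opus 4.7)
My plan is to mirror the strategy used in the proof of Theorem~\ref{jmlr2016:thm:dependency_theory:joint_asymptotic_HSIC}, namely to recognize the two MMD estimators as coordinates of a vector-valued U-statistic built from the same underlying i.i.d.\ sample, and then invoke the multivariate CLT for U-statistics of \citet[Theorem 7.1]{hoeffding1963probability}. For the main case $m=n=r$, I would define $u_i := (x_i,y_i,z_i)$, i.i.d.\ from $\mathbb{P}_x \times \mathbb{P}_y \times \mathbb{P}_z$, and rewrite, using Eq.~\eqref{jmlr2016:eq:background:unbiaisedMMD}, the two statistics as one-sample U-statistics of order two with symmetric kernels
\begin{align*}
f_{XY}(u_i,u_j) &= k(x_i,x_j) + k(y_i,y_j) - k(x_i,y_j) - k(x_j,y_i), \\
f_{XZ}(u_i,u_j) &= k(x_i,x_j) + k(z_i,z_j) - k(x_i,z_j) - k(x_j,z_i),
\end{align*}
so that the left-hand vector in Eq.~\eqref{jmlr2016:eq:similarity_theory:joint_asymtotic_dist_MMD} is exactly the centered, $m^{1/2}$-rescaled vector-valued U-statistic with kernel $(f_{XY},f_{XZ})$.

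Next, I would verify the conditions of Hoeffding's multivariate U-statistic CLT. The boundedness/moment hypotheses in the statement ($\mathbb{E}[k(x_i,x_j)]<\infty$, etc.) together with $\mathbb{E}(f^2)<\infty$ from Theorem~\ref{jmlr2016:thm:background:unbiaisedMMD} yield the square-integrability of both kernels, which is what is needed. Non-degeneracy of the first Hoeffding projection, required for the $m^{-1/2}$ rate and non-trivial variances on the diagonal, is guaranteed by the assumptions $\mathbb{P}_x\ne\mathbb{P}_y$ and $\mathbb{P}_x\ne\mathbb{P}_z$ (cf.\ the discussion following Eq.~\eqref{jmlr2016:eq:background:asymptoticdistributionMMD}). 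Hoeffding's theorem then yields joint normality with asymptotic covariance matrix whose entries are
\begin{align*}
\Sigma_{11} &= 4\,\mathrm{Var}\bigl(\mathbb{E}[f_{XY}(u_1,u_2)\mid u_1]\bigr) = \sigma^2_{XY}, \\
\Sigma_{22} &= 4\,\mathrm{Var}\bigl(\mathbb{E}[f_{XZ}(u_1,u_2)\mid u_1]\bigr) = \sigma^2_{XZ}, \\
\Sigma_{12} &= 4\,\mathrm{Cov}\bigl(\mathbb{E}[f_{XY}(u_1,u_2)\mid u_1],\,\mathbb{E}[f_{XZ}(u_1,u_2)\mid u_1]\bigr) = \sigma_{XYXZ},
\end{align*}
the first two lines matching Eq.~\eqref{jmlr2016:eq:background:variance_asymptoticdistributionMMD}.

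The main obstacle is handling the case of unequal sample sizes $m,n,r$, since then each MMD estimator is a genuinely two-sample U-statistic (a sum of two one-sample U-statistics plus a sample-average cross term, as in Eq.~\eqref{jmlr2016:eq:background:unbiaisedMMD_differentsizes}) and the two estimators share only the $X_m$ block. I would address this by using the two-sample U-statistic CLT \citep[Ch.\ 5]{serfling2009approximation}: decompose each component via Hoeffding projections onto the $X$-marginal, $Y$-marginal (resp.\ $Z$-marginal) and the interaction, and under the standard regime $n/m\to\rho_Y>0$, $r/m\to\rho_Z>0$, the leading $X$-projections are jointly Gaussian with a covariance inherited from the shared $X$ sample, while $Y$- and $Z$-projections are independent. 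This produces the same bivariate normal structure, with the covariance entries being reweighted sums of conditional variances; the diagonal entries reduce to Eq.~\eqref{jmlr2016:eq:background:variance_asymptoticdistributionMMD} and the off-diagonal $\sigma_{XYXZ}$ comes entirely from the shared-$x$ projection, so that the stated form in Eq.~\eqref{jmlr2016:eq:similarity_theory:joint_asymtotic_dist_MMD} is recovered.

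Finally, a consistent empirical estimator of $\sigma_{XYXZ}$ (needed for the downstream test) can be obtained exactly as in Theorem~\ref{jmlr2016:thm:dependency_theory:joint_asymptotic_HSIC} by replacing the conditional expectations with their sample analogues (fixing one argument, averaging the kernels $f_{XY}$ and $f_{XZ}$ over the remaining one, taking the empirical covariance over the fixed index), and appealing to the strong law of large numbers conditional on the first index.
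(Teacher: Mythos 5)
Your proposal is correct and follows essentially the same route as the paper: the appendix likewise rewrites the two estimators as order-two $U$-statistics over the shared i.i.d.\ triples, invokes the Hoeffding/Serfling multivariate $U$-statistic CLT (as in the proof of Theorem~\ref{jmlr2016:thm:dependency_theory:joint_asymptotic_HSIC}), and obtains the covariance entries from the first-order Hoeffding projections $\zeta_1$, neglecting the $\mathcal{O}(m^{-2})$ terms. Your treatment of the unequal-sample-size case via proportional limits is somewhat more careful than the paper, which effectively works with $m=n=r$ in the $U$-statistic formulation and only reintroduces $n$ and $r$ in the empirical estimates.
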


We substitute the kernel MMD definition from Equation~\eqref{jmlr2016:eq:background:unbiaisedMMD}, expand the terms in the expectation, and determine their empirical estimates in order to compute the variances in practice.  The proof and additional details of the following derivations are given in Appendix~\ref{jmlr2016:appendix}. 

An empirical estimate of $\sigma_{XYXZ}$ in Eq.~\eqref{jmlr2016:eq:similarity_theory:joint_asymtotic_dist_MMD}, neglecting higher order terms, can be computed in $\mathcal{O}(m^2)$:
\begin{align}\label{jmlr2016:eq:dependency_theory:derivation_of_covariance}
\hat{\sigma}_{XYXZ} &\approx
\frac{1}{m(m-1)^2} \vecOnes^T \tilde{\matK}_{xx}\tilde{\matK}_{xx} \vecOnes - \left(\frac{1}{m(m-1)} \vecOnes^T \tilde{\matK}_{xx} \vecOnes \right)^2 \\
& \qquad - \left( \frac{1}{m(m-1)r} \vecOnes^T \tilde{\matK}_{xx} \matK_{xz} \vecOnes - \frac{1}{m^2(m-1)r}  \vecOnes^T \tilde{\matK}_{xx} \vecOnes \vecOnes^T \matK_{xz} \vecOnes \right) \nonumber \\
& \qquad - \left( \frac{1}{m(m-1)n} \vecOnes^T \tilde{\matK}_{xx} \matK_{xy} \vecOnes - \frac{1}{m^2(m-1)n} \vecOnes^T \tilde{\matK}_{xx} \vecOnes \vecOnes^T \matK_{xz} \vecOnes \right) \nonumber \\
& \qquad + \left( \frac{1}{mnr} \vecOnes^T \matK_{yx} \matK_{xz} \vecOnes - \frac{1}{m^2nr} \vecOnes^T \matK_{xy}\vecOnes \vecOnes^T \matK_{xz} \vecOnes\right) \nonumber
\end{align}
where $\vecOnes$ is a vector of all ones, while $\left[\tilde{\matK}_{xx} \right]_{ij} = (1- \delta_{ij})k(x_i,x_j) ,\left[\matK_{xy}\right]_{ij} = k(x_i,y_j)$ and $\left[\matK_{xz}\right]_{ij} = k(x_i,z_j)$ refer to the kernel matrices. Similarly, $\sigma_{XY}^2$ in Eq.~\eqref{jmlr2016:eq:similarity_theory:joint_asymtotic_dist_MMD} is constructed as in Eq.~\eqref{jmlr2016:eq:background:variance_asymptoticdistributionMMD}.

Based on the empirical distribution from Eq.~\eqref{jmlr2016:eq:similarity_theory:joint_asymtotic_dist_MMD}, we now derive the RelativeMMD test in a manner similar than Section~\ref{jmlr2016:subsec:relative_test_dependency:joint_asympt_dist}. The test statistic $\squaredMMDu{X_m}{Y_n}-\squaredMMDu{X_m}{Z_r}$ is used to compute the $p$-value $p$ for the standard normal distribution.  The test statistic is obtained by rotating the joint distribution (cf.\ Eq.~\eqref{jmlr2016:eq:similarity_theory:joint_asymtotic_dist_MMD}) by $\frac{\pi}{4}$ about the origin, and integrating the resulting projection on the first axis.  Denote the asymptotically normal distribution of $\sqrt{m}[\squaredMMDu{X_m}{Y_n};\: \squaredMMDu{X_m}{Z_r}]^T$ as $\mathcal{N}(\mu, \Sigma)$.  The resulting distribution from rotating by $\pi/4$ and projecting onto the primary axis is $\mathcal{N} \left( [\mathbf{R} \mu]_{1}, [\mathbf{R} \mathbf{\Sigma} R^{T}]_{11} \right)$ where
\begin{align}
[\mathbf{R} \mu]_{1} &= \frac{\sqrt{2}}{2} ( \squaredMMDu{X_m}{Y_n}-\squaredMMDu{X_m}{Z_r} ) \\
[\mathbf{R} \mathbf{\Sigma} \mathbf{R}^{T}]_{11}&= \frac{1}{2}(\sigma_{XY}^2 + \sigma_{XZ}^2 - 2 \sigma_{XYXZ}) \label{eq:denominator_of_the_TestStatistics}
\end{align}
with $R$ is the rotation  by $\pi/4.$  Then, the $p$-values for testing $\mathcal{H}_0$ versus $\mathcal{H}_1$ are
\begin{equation}
p \leq \Phi \left( -\frac{\squaredMMDu{X_m}{Y_n}-\squaredMMDu{X_m}{Z_r}}{\sqrt{\sigma_{XY}^2 + \sigma_{XZ}^2 - 2 \sigma_{XYXZ}}} \right)
\end{equation}
where $\Phi$ is the CDF of a standard normal distribution. We have made code for performing the test publicly available. An implementation and examples are available at \eugenecodeurl.

\section{Relation of the HSIC and the Relative MMD} \label{jmlr2016:sec:additionaldiscussions}

We propose two new statistical hypothesis tests using equivalent mathematical derivation and using two statistics that are link (as demonstrated in~\citet[Section 7.3]{gretton2012kernel}).
The MMD can be expressed as the distance in a RKHS $\mathcal{H}$ between mean embeddings as
\begin{equation}
\MMDpop{x}{y} = \Vert \mu_{\mathbbP_x} - \mu_{\mathbbP_y}\Vert_{\mathcal{H}}. 
\end{equation}
Let assume that $\mathcal{V}$ is an RKHS over $\mathcal{X} \times \mathcal{Y}$ with kernel $v \left( (x,y),(x',y') \right)$. If x and y are independent, then $\mu_{\mathbbP_{xy}} = \mu_{\mathbbP_x} \mu_{\mathbbP_y}$. Hence we may use $\Vert \mu_{\mathbbP_{xy}} - \mu_{\mathbbP_x} \mu_{\mathbbP_y} \Vert_{\mathcal{V}}$ as a measure of dependence.
Let assume that $v \left( (x,y),(x',y') \right) = k(x,x') l(y,y')$, then HSIC can be expressed as the distance in $\mathcal{V}$ between the mean embedding of the joint distribution and the product of the marginal distributions
\begin{equation}
\HSICpop{F}{G}{xy} = \Vert \mu_{\mathbbP_{xy}} -  \mu_{\mathbbP_x}\mu_{\mathbbP_y} \Vert_{\mathcal{V}}.
\end{equation}
Although, the application of MMD is related to the HSIC, they lead to very different type of expressions for the variance terms of their asymptotic distributions and this is the same for the Relative MMD and the Relative HSIC. The computation of the $p$-value for the Relative MMD is done using the computation of the kernel matrices $\mathbf{K}_{xy}$ and $\mathbf{K}_{xz}$, whereas for the Relative HSIC this is done using $\mathbf{K}_{xx}$, $\mathbf{K}_{yy}$ and $\mathbf{K}_{zz}$, showing the need of having two different implementations for each hypothesis statistical test. \\
However, we can compute $\mathbf{K}_{xy}$ as a transformation of $\mathbf{K}_{xx}$ and $\mathbf{K}_{yy}$ such that $[\mathbf{K}_{xy}]_{jj} = [\mathbf{K}_{xx} \odot (\mathbf{\Pi} \mathbf{K}_{yy} \mathbf{\Pi}^T ) ]_{jj}$ where $\mathbf{\Pi}$ is a permutation matrix.  But, because of this permutation matrix, we have that as the sample size $n$ increases, the computation complexity become $\mathcal{O}(n^4)$ instead of $\mathcal{O}(n^2)$.
\section{Experiments} \label{jmlr2016:sec:experiments}
We analyze the Relative HSIC and the Relative MMD in an extensive set of experiments. The section contains results on synthetics experiments and real data.  
%
%
\subsection{Experiments for the relative dependency test} \label{jmlr2016:subsec:dependency_experiments}

We apply our estimates of statistical dependence to three challenging problems. The first is a synthetic data experiment, in which we can directly control the relative degree of functional dependence between variates.  The second experiment uses a multilingual corpus to determine the relative relations between European languages.  The last experiment is a $3$-block dataset which combines gene expression, comparative genomic hybridization, and a qualitative phenotype measured on a sample of Glioma patients.
%
%
%
%
\subsubsection{Synthetic experiments} \label{jmlr2016:subsec:dependency_experiments:simulationStudies}

We constructed 3 distributions as defined in Eq.~\eqref{jmlr2016:subsec:dependency_experiments:eq:equation_simulationstudies} and illustrated in Fig.~\ref{jmlr2016:fig:dependency_experiments:illustration_simulationstudies}.  
\begin{figure*}
\centering
\begin{tabular}{ccccc}

\begin{subfigure}{.28\columnwidth}
\setlength{\tabcolsep}{0.1em}
\renewcommand{\arraystretch}{0.5}
\begin{tabular}{cc}
\begin{sideways} \quad \scalebox{0.8}{$ \sin(t) + \gamma_1 \mathcal{N}(0,1)$ }\end{sideways} & \includegraphics[width=\linewidth]{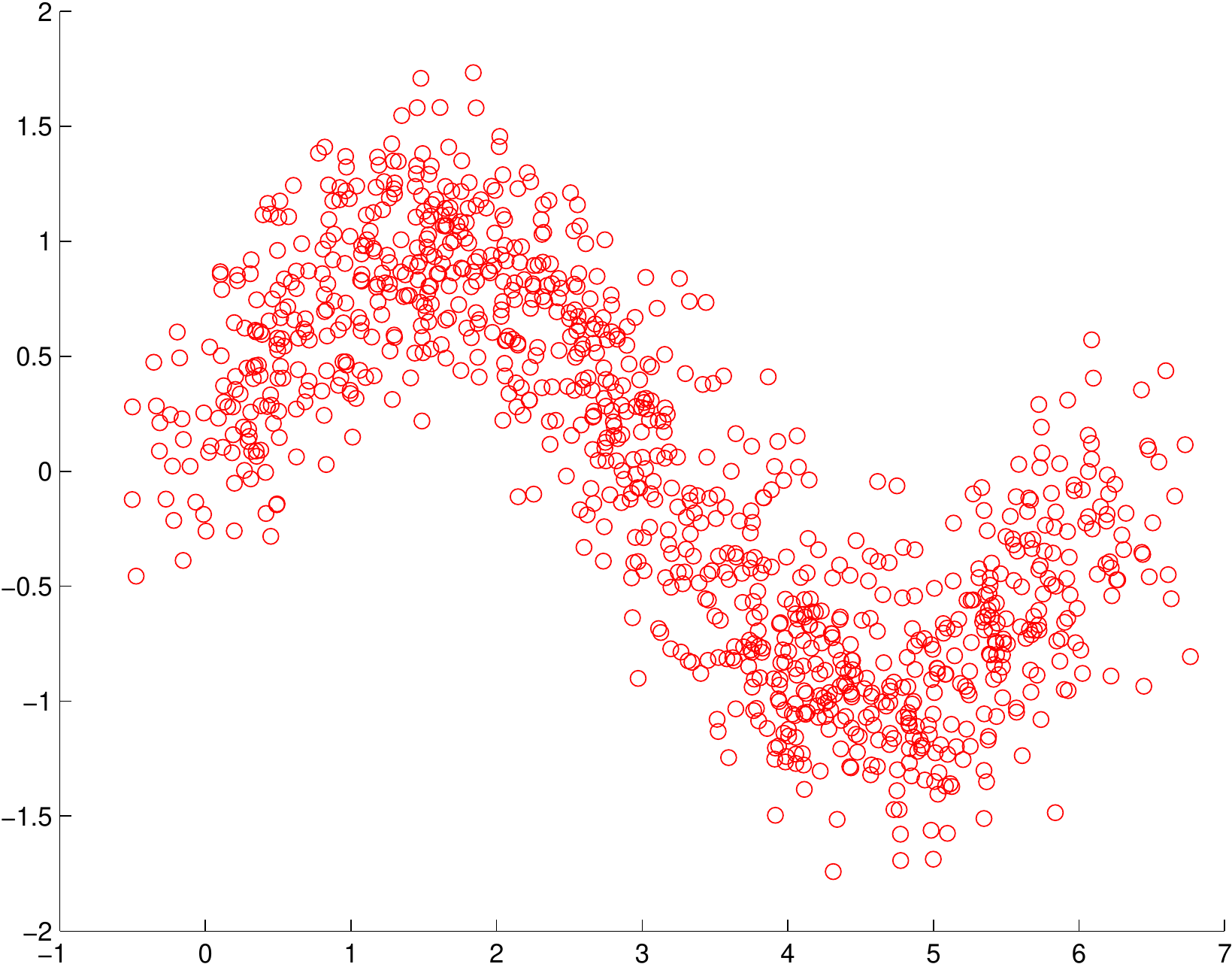} \\
& \scalebox{0.8}{$t + \gamma_1 \mathcal{N}(0,1)$}
\end{tabular}
\caption{$\gamma_1 = 0.3$} \label{jmlr2016:fig:dependency_experiments:illustration_a}
\end{subfigure}\hfill
&&
\begin{subfigure}{.28\textwidth}
\setlength{\tabcolsep}{0.1em}
\renewcommand{\arraystretch}{0.5}
\begin{tabular}{cc}
\begin{sideways} \quad \scalebox{0.8}{ $ t \sin(t) + \gamma_2 \mathcal{N}(0,1)$} \end{sideways}
& \includegraphics[width=\columnwidth]{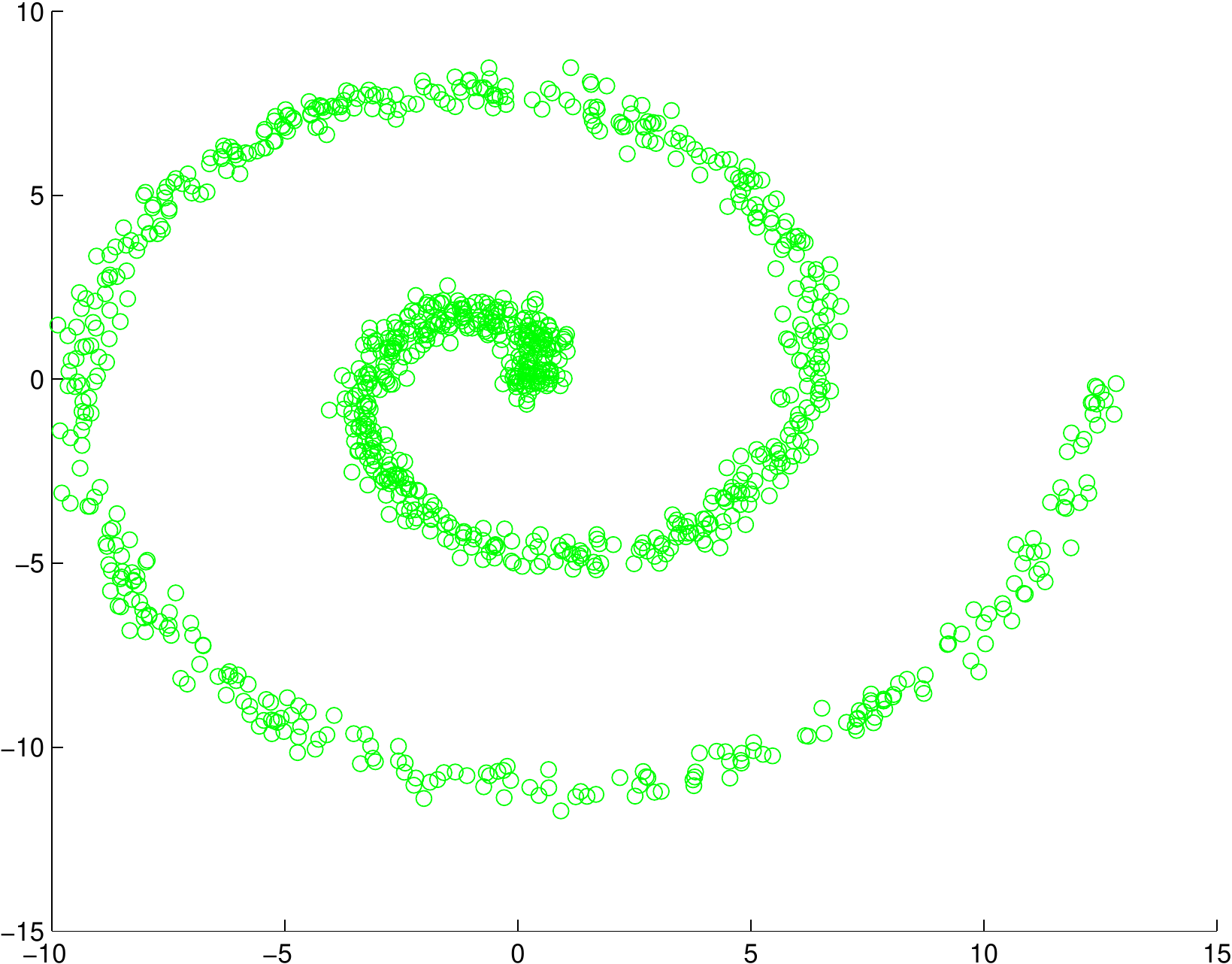} \\
& \scalebox{0.8}{$t \cos(t) + \gamma_2 \mathcal{N}(0,1)$}
\end{tabular}
\caption{$\gamma_2 = 0.3$} \label{jmlr2016:fig:dependency_experiments:illustration_b}
\end{subfigure}\hfill
&&
\begin{subfigure}{.28\textwidth}
\setlength{\tabcolsep}{0.1em}
\renewcommand{\arraystretch}{0.5}
\begin{tabular}{cc}
\begin{sideways} \quad \scalebox{0.8}{$ t \cos(t) + \gamma_3 \mathcal{N}(0,1)$} \end{sideways}
& \includegraphics[width=\columnwidth]{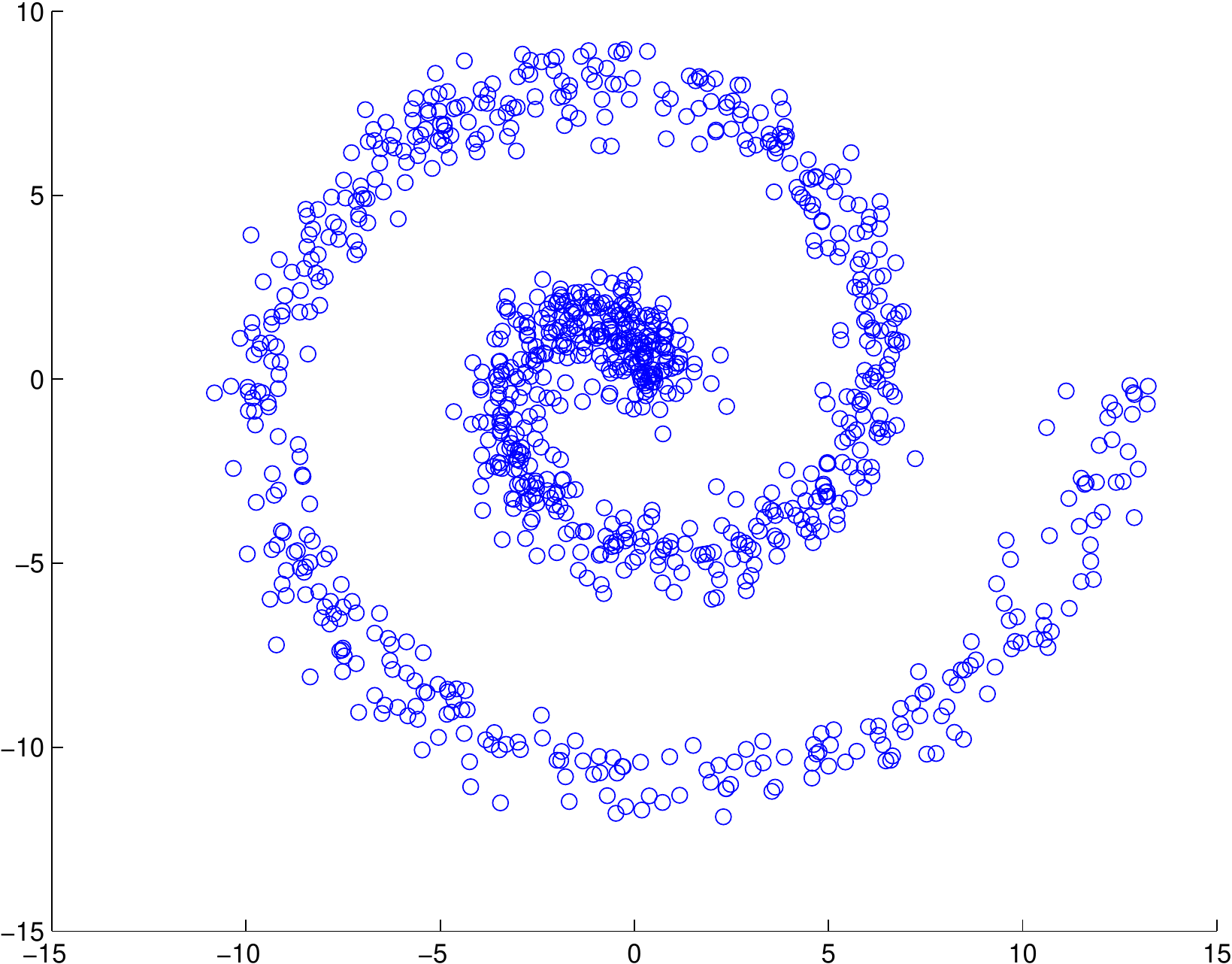} \\
& \scalebox{0.8}{$t \cos(t) + \gamma_3 \mathcal{N}(0,1)$}
\end{tabular}
\caption{$\gamma_3 = 0.6$} \label{jmlr2016:fig:dependency_experiments:illustration_c}
\end{subfigure}\par
\end{tabular}
\caption{ Illustration of a synthetic dataset sampled from the distribution in Eq.~\eqref{jmlr2016:subsec:dependency_experiments:eq:equation_simulationstudies}.}
   \label{jmlr2016:fig:dependency_experiments:illustration_simulationstudies}
\end{figure*}
\begin{align}
\label{jmlr2016:subsec:dependency_experiments:eq:equation_simulationstudies}
\mbox{Let } t &\sim  \mathcal{U}[(0,2\pi)], \\
   (a)\ x_1 &\sim t + \gamma_1 \mathcal{N}(0,1) \hspace{0.3cm}
   y_1 \sim \sin(t) + \gamma_1 \mathcal{N}(0,1)& \nonumber \\
   (b)\ x_2 &\sim t \cos(t) + \gamma_2 \mathcal{N}(0,1) \hspace{0.3cm}
   y_2 \sim  t \sin(t) + \gamma_2 \mathcal{N}(0,1)& \nonumber \\
   (c)\ x_3 &\sim  t \cos(t) + \gamma_3 \mathcal{N}(0,1) \hspace{0.3cm}
   y_3 \sim  t \sin(t) + \gamma_3 \mathcal{N}(0,1)& \nonumber    
\end{align}
These distributions are specified so that we can control the relative degree of functional dependence between the variates by varying the relative size of noise scaling parameters $\gamma_1$, $\gamma_2$ and $\gamma_3$. 
The question is then whether the dependence between (a) and (b) is larger than the dependence between (a) and (c). 
In these experiments, we fixed $\gamma_1 = \gamma_2 = 0.3$, while we varied $\gamma_3$, and used a Gaussian kernel with bandwidth $\sigma$ selected as the median pairwise distance between data points.  This kernel is sufficient to obtain good performance, although other choices exist~\citep{NIPS2012_4727}.  

Fig.~\ref{jmlr2016:fig:dependency_experiments:powerofthetest} shows the power of the dependent and the independent tests as we vary $\gamma_3$. 
It is clear from these results that the dependent test is far more powerful than the independent test over the great majority of $\gamma_3$ values considered.
 Fig.~\ref{jmlr2016:fig:dependency_experiments:empirical_HSIC_differentsamplesize} demonstrates that this superior test power arises due to the tighter and more concentrated distribution
of the dependent statistic.
\begin{figure}[ht]
\centering
\setlength{\tabcolsep}{0.1em}
\renewcommand{\arraystretch}{0.5}
\begin{tabular}{cc}
\begin{sideways} \scalebox{0.8}{$\qquad \qquad$ Power of the tests} \end{sideways} & \includegraphics[width=0.45\textwidth]{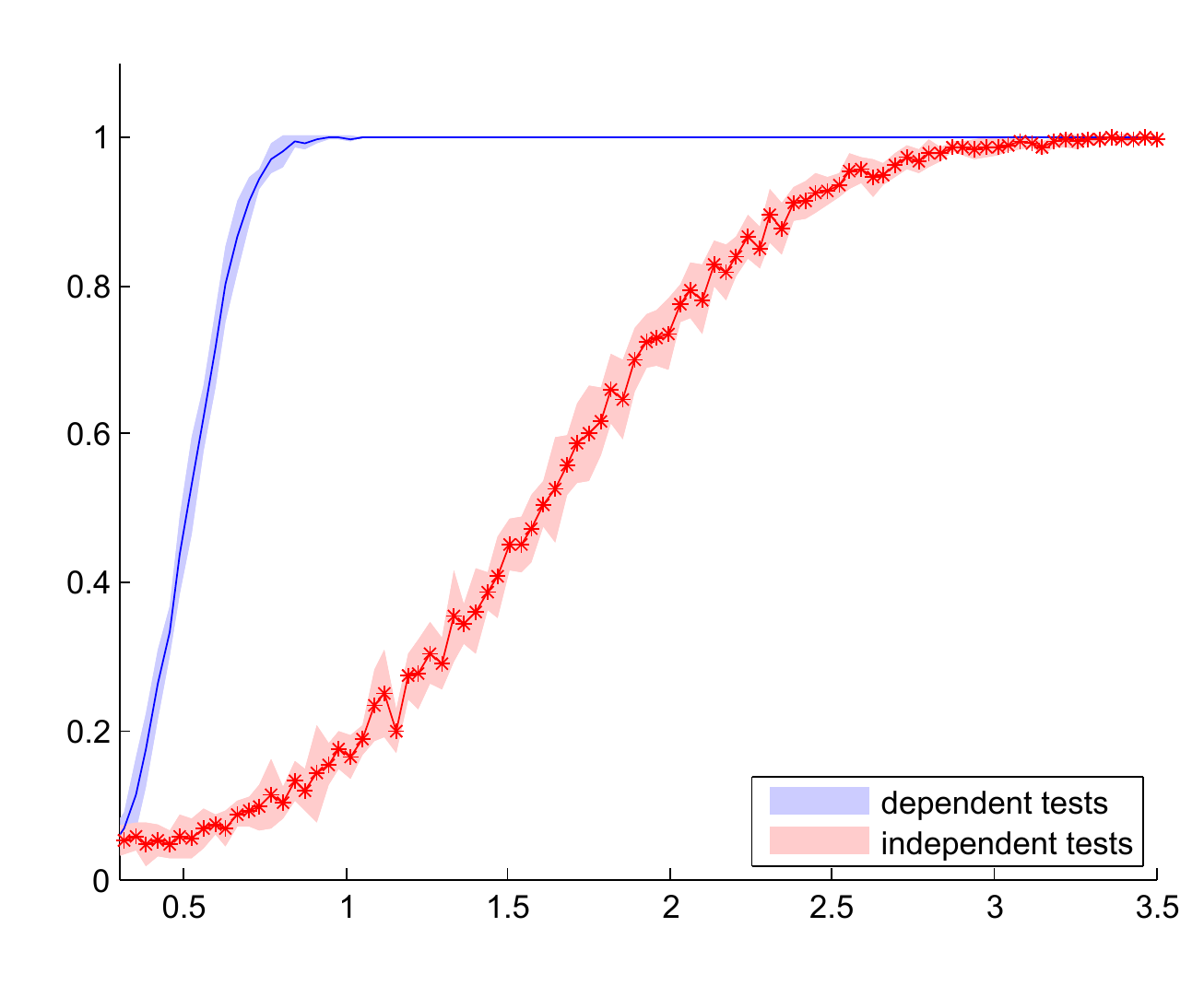} \\
& \scalebox{0.8}{$\gamma_3$}
\end{tabular}
\caption{Power of the dependent and independent test as a function of $\gamma_3$ on the synthetic data described in Section~\ref{jmlr2016:subsec:dependency_experiments:simulationStudies}.  For values of $\gamma_3>0.3$ the distribution in Fig.~\ref{jmlr2016:fig:dependency_experiments:illustration_a} is closer to Fig.~\ref{jmlr2016:fig:dependency_experiments:illustration_b} than to \ref{jmlr2016:fig:dependency_experiments:illustration_c}. The problem becomes difficult as $\gamma_3\rightarrow 0.3$. As predicted by theory, the dependent test is significantly more powerful over almost all values of $\gamma_3$ by a substantial margin. }
\label{jmlr2016:fig:dependency_experiments:powerofthetest}
\end{figure}	
\begin{figure*}
\centering
\begin{tabular}{ccccc}
\begin{subfigure}{.28\columnwidth}
\setlength{\tabcolsep}{-1.5pt}
\renewcommand{\arraystretch}{0.4}
\begin{tabular}{cc}
\begin{sideways} \qquad \scalebox{0.6}{$ \HSIC{F}{G}{X}{Z}$ }\end{sideways} 
& \includegraphics[width=\linewidth]{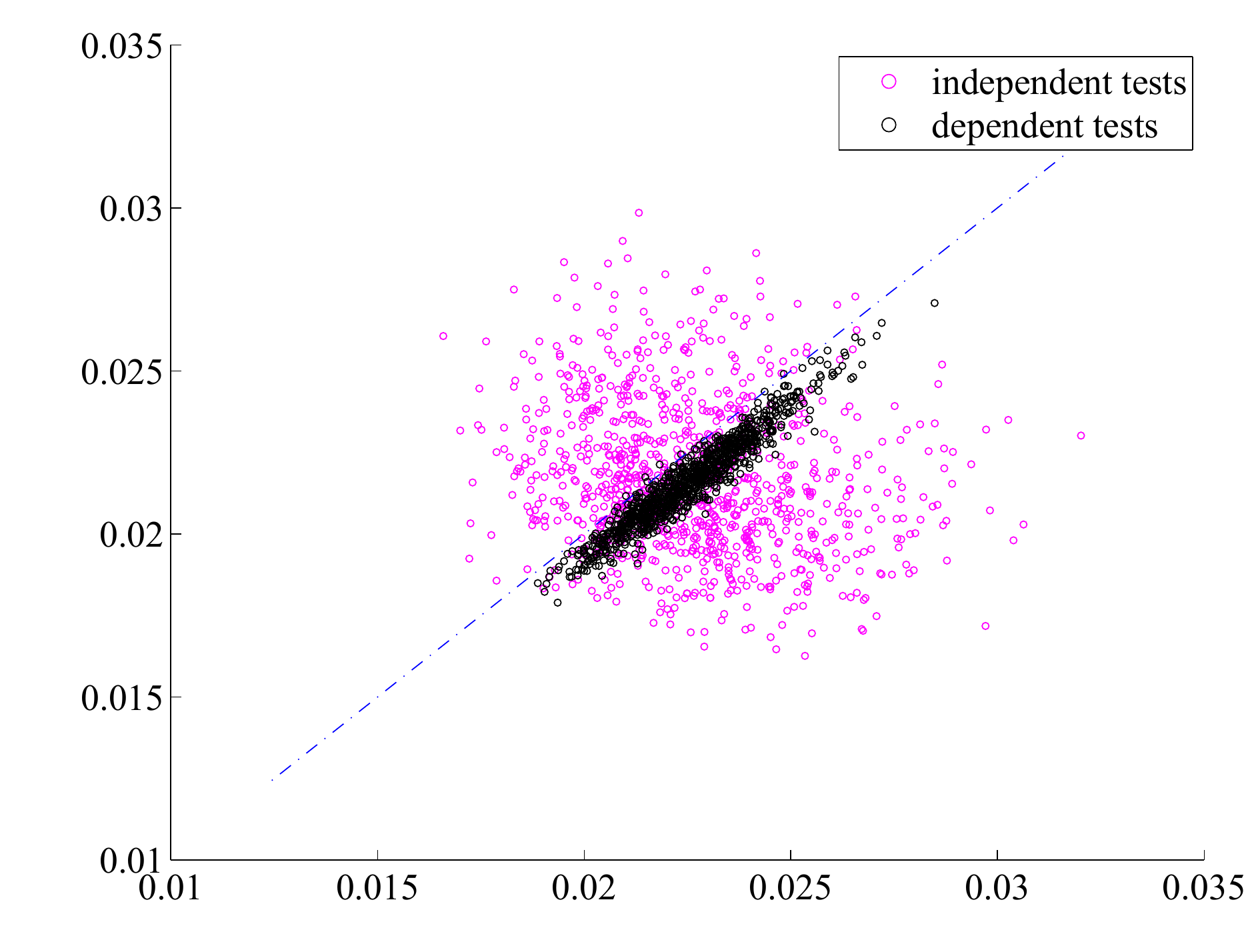} \\
& \scalebox{0.6}{$\HSIC{F}{G}{X}{Y}$}
\end{tabular}
\caption{$m=300, \gamma_3=0.7, \\ p_{\textrm{dep}} = 0.0189, p_{\textrm{indep}} = 0.3492$} 
\end{subfigure}\hfill
&&
\begin{subfigure}{.28\textwidth}
\setlength{\tabcolsep}{-1.5pt}
\renewcommand{\arraystretch}{0.4}
\begin{tabular}{cc}
\begin{sideways} \qquad \scalebox{0.6}{ $ \HSIC{F}{G}{X}{Z}$} \end{sideways}
& \includegraphics[width=\columnwidth]{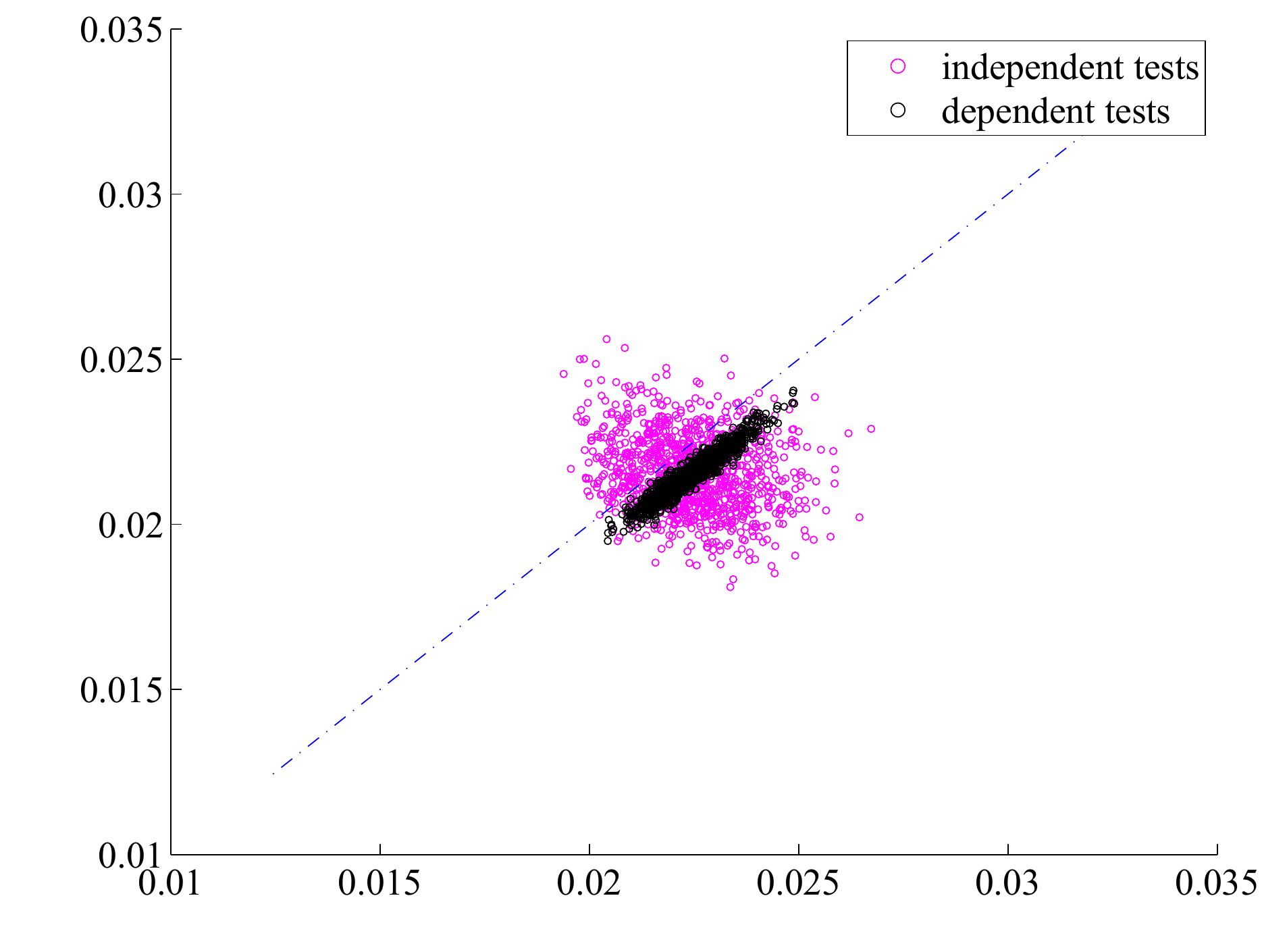} \\
& \scalebox{0.6}{$\HSIC{F}{G}{X}{Y}$}
\end{tabular}
\caption{$m=1000, \gamma_3=0.7, \\ p_{\textrm{dep}} = 10^{-4}, p_{\textrm{indep}} = 0.3690$} 
\end{subfigure}\hfill
&&
\begin{subfigure}{.28\textwidth}
\setlength{\tabcolsep}{-1.5pt}
\renewcommand{\arraystretch}{0.4}
\begin{tabular}{cc}
\begin{sideways} \qquad \scalebox{0.6}{$ \HSIC{F}{G}{X}{Z}$} \end{sideways}
& \includegraphics[width=\columnwidth]{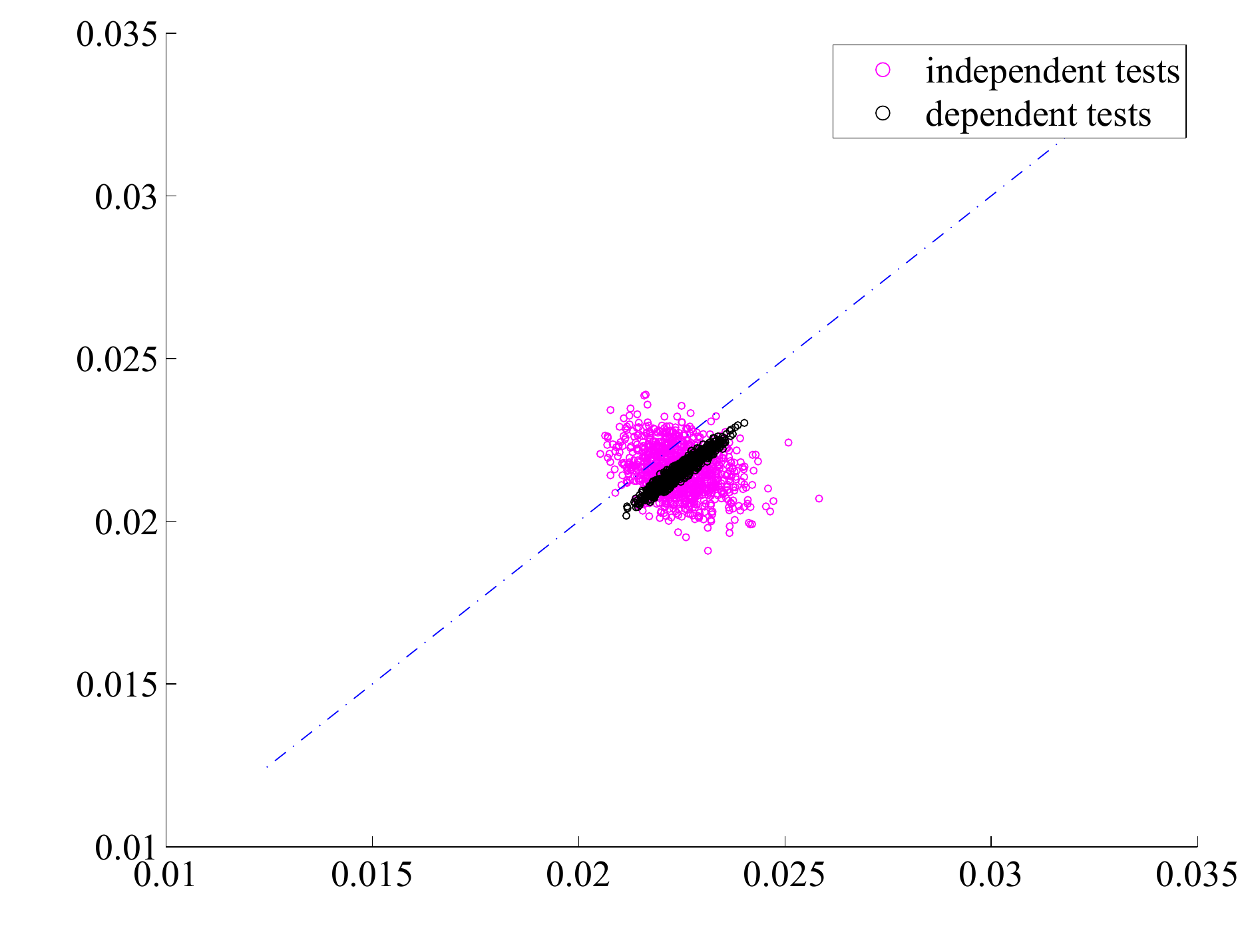} \\
& \scalebox{0.6}{$\HSIC{F}{G}{X}{Y}$} \\
\end{tabular}
\caption{$m=3000, \gamma_3=0.7,\\ p_{\textrm{dep}} = 10^{-6}, p_{\textrm{indep}} = 0.2876$} 
\end{subfigure}\hfill
\\
\begin{subfigure}{.28\columnwidth}
\setlength{\tabcolsep}{-1.5pt}
\renewcommand{\arraystretch}{0.4}
\begin{tabular}{cc}
\begin{sideways} \qquad \scalebox{0.6}{$ \HSIC{F}{G}{X}{Z}$ }\end{sideways} & \includegraphics[width=\linewidth]{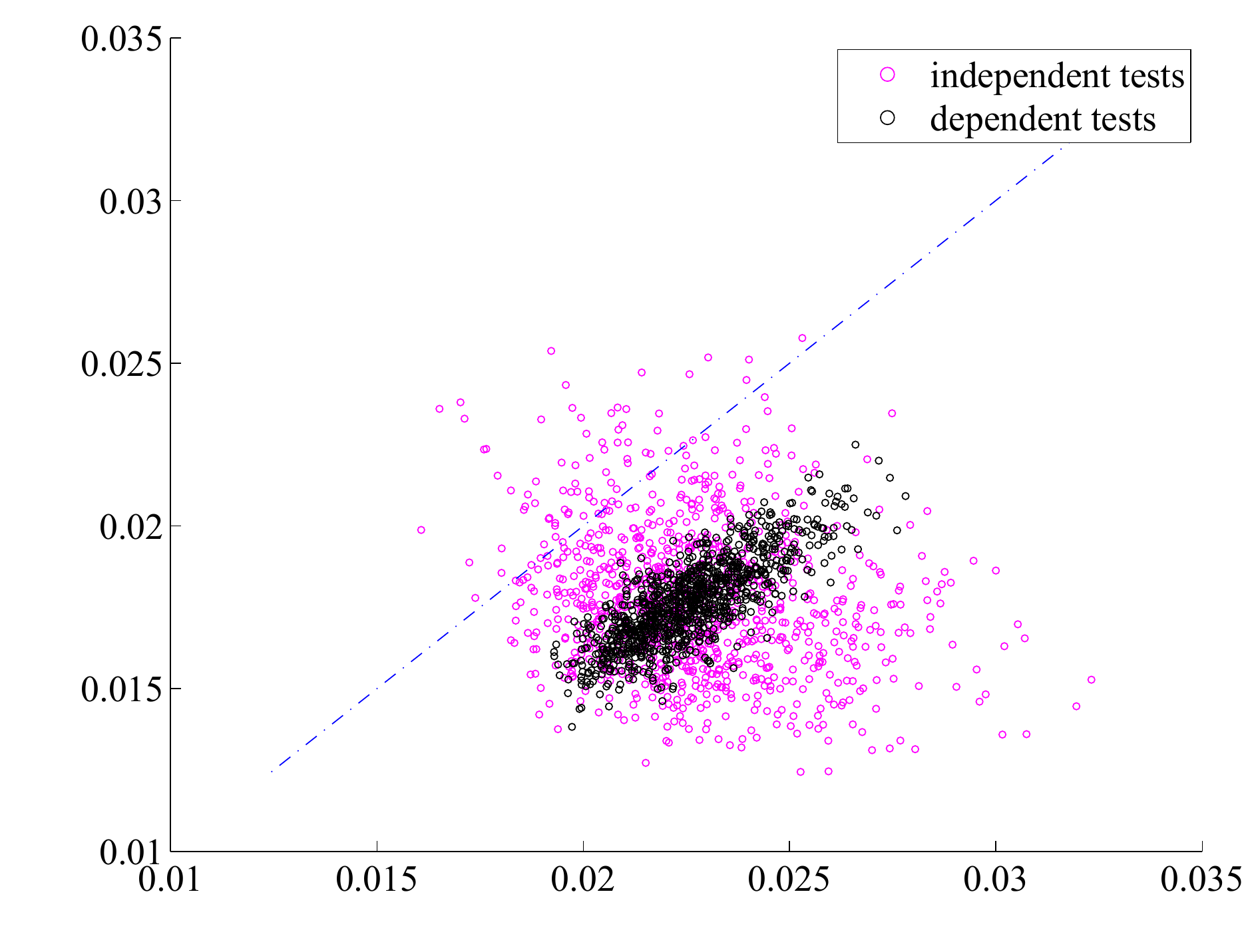} \\
& \scalebox{0.6}{$\HSIC{F}{G}{X}{Y}$}
\end{tabular}
\caption{$m=300, \gamma_3=1.7, \\ p_{\textrm{dep}} = 10^{-9}, p_{\textrm{indep}} = 0.9820$}
\end{subfigure}\hfill
&&
\begin{subfigure}{.28\textwidth}
\setlength{\tabcolsep}{-1.5pt}
\renewcommand{\arraystretch}{0.4}
\begin{tabular}{cc}
\begin{sideways} \qquad \scalebox{0.6}{ $ \HSIC{F}{G}{X}{Z} $} \end{sideways}
& \includegraphics[width=\columnwidth]{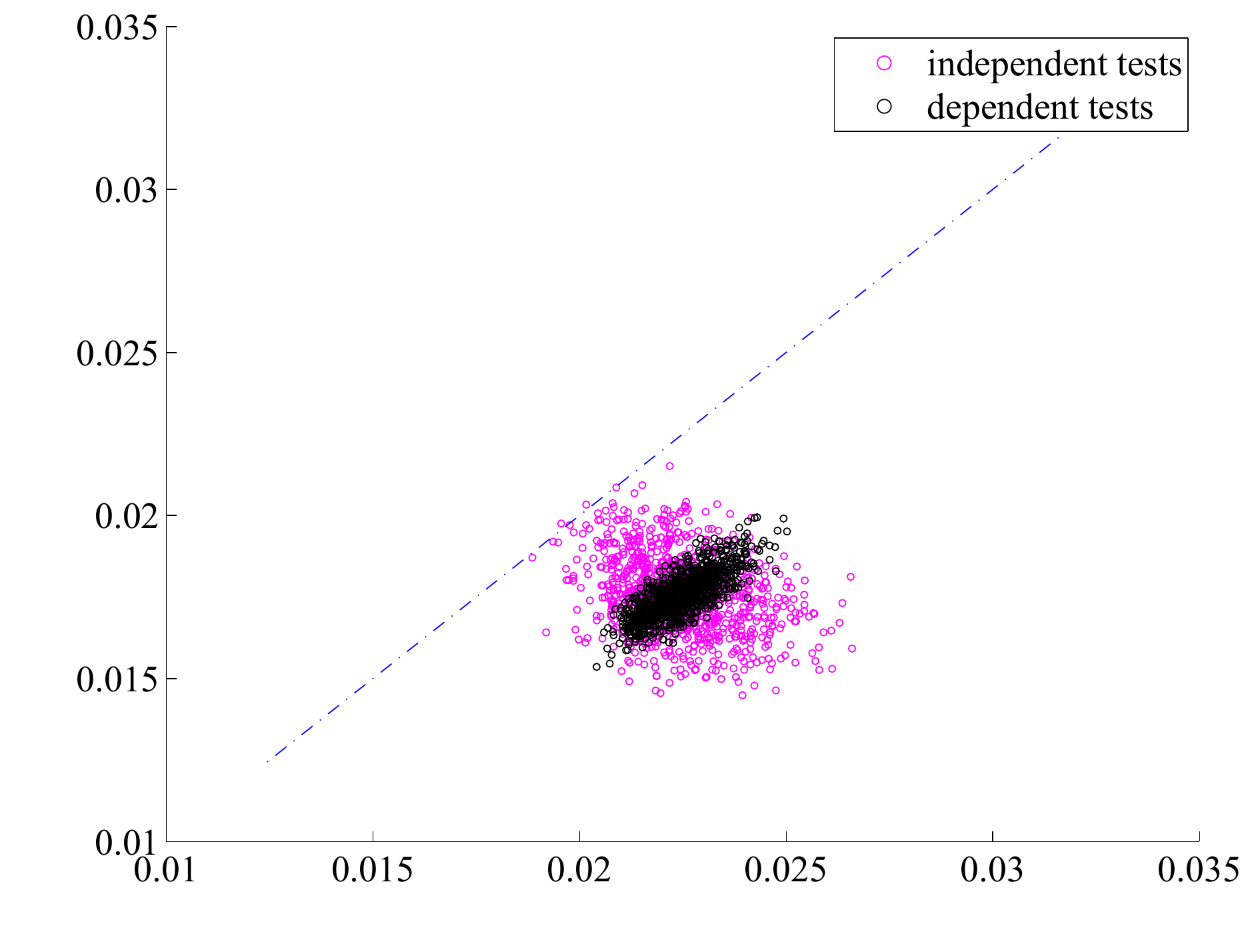} \\
& \scalebox{0.6}{$\HSIC{F}{G}{X}{Y}$}
\end{tabular}
\caption{$m=1000, \gamma_3=1.7,\\ p_{\textrm{dep}} = 10^{-10}, p_{\textrm{indep}} = 0.0326$}
\end{subfigure}\hfill
&&
\begin{subfigure}{.28\textwidth}
\setlength{\tabcolsep}{-1.5pt}
\renewcommand{\arraystretch}{0.4}
\begin{tabular}{cc}
\begin{sideways} \qquad \scalebox{0.6}{$ \HSIC{F}{G}{X}{Z} $} \end{sideways}
& \includegraphics[width=\columnwidth]{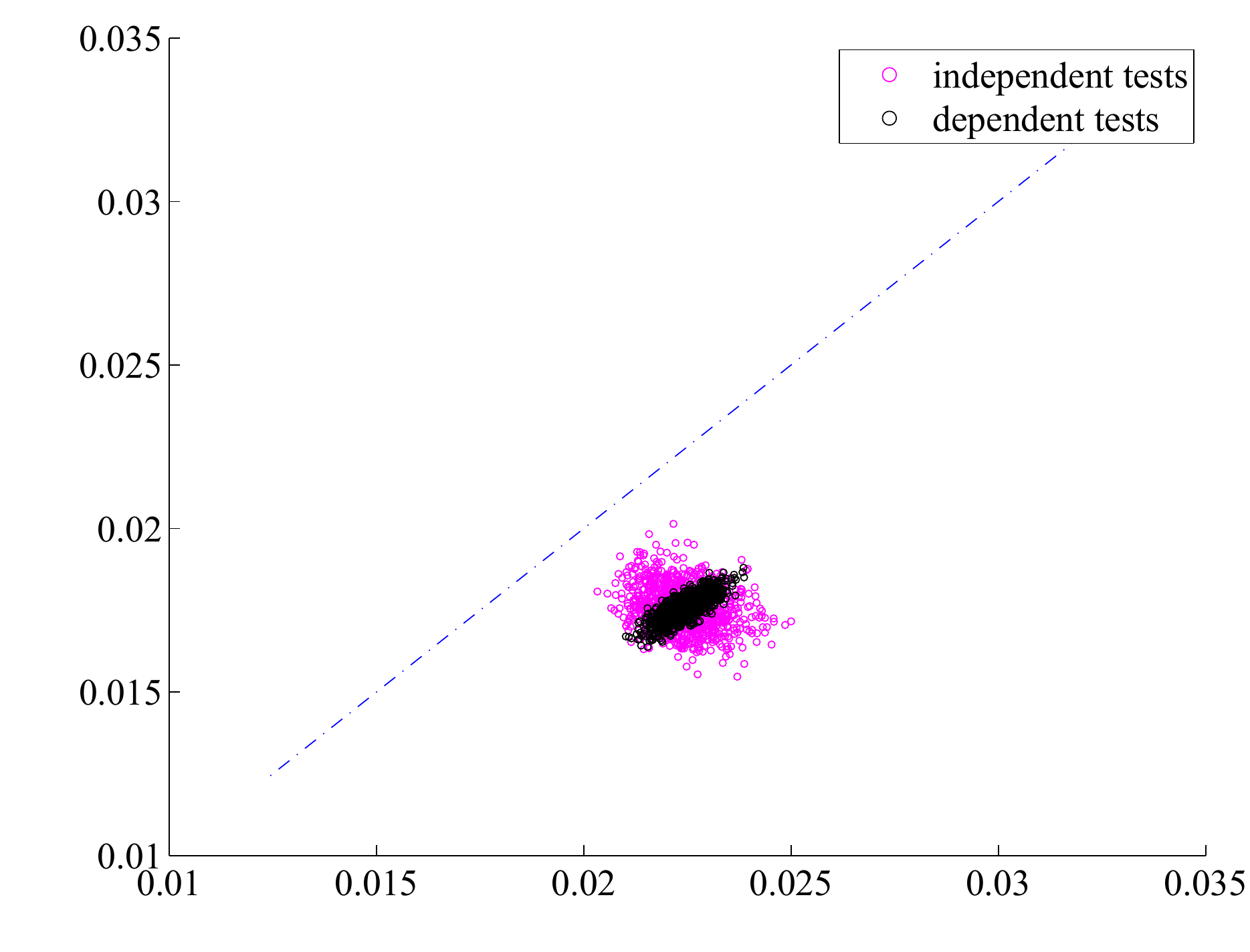} \\
& \scalebox{0.6}{$\HSIC{F}{G}{X}{Y}$} \\
\end{tabular}
\caption{$m=3000, \gamma_3=1.7, \\ p_{\textrm{dep}} = 10^{-13}, p_{\textrm{indep}} = 0.005$} 
\end{subfigure}\par
\end{tabular}
\caption{ For the synthetic experiments described in Section~\ref{jmlr2016:subsec:dependency_experiments:simulationStudies}, we plot empirical HSIC values for dependent and independent tests for 100 repeated draws with different sample sizes.  Empirical $p$-values for each test show that the dependent distribution converges faster than the independent distribution even at low sample size, resulting in a more powerful statistical test.}
   \label{jmlr2016:fig:dependency_experiments:empirical_HSIC_differentsamplesize}
\end{figure*}
%
%
%
%
%
\subsubsection{Multilingual data} \label{jmlr2016:subsec:dependency_experiments_multilingualdata}

In this section,  we demonstrate dependence testing to predict the relative similarity of different languages.  We use a real world dataset taken from the parallel European Parliament corpus~\cite{koehn2005europarl}.
We choose 3000 random documents in common written in: Finnish (fi), Italian (it), French (fr), Spanish (es), Portuguese (pt), English (en), Dutch (nl), German (de), Danish (da) and Swedish (sv).  These languages can be broadly categorized into either the Romance, Germanic or Uralic groups \citep{gray2003language}.  In this dataset, we considered each language as a random variable and each document as an observation. 

Our first goal is to test if the statistical dependence between two languages in the same group is greater than the statistical dependence between languages in different groups.  
For pre-processing, we removed stop-words (\url{http://www.nltk.org}) and performed stemming (\url{http://snowball.tartarus.org}).  We applied the TF-IDF model as a feature representation and used a Gaussian kernel with the bandwidth $\sigma$ set per language as the median pairwise distance between documents. 

In Tab.~\ref{jmlr2016:tab:dependency_experiments:selectioneuropeanlang}, a selection of tests between language groups (Germanic, Romance, and Uralic) is given:  all $p$-values strongly support that our relative dependence test finds the different language groups with very high significance. 

Further, if we focus on the Romance family, our test enables one to answer more fine-grained questions about the relative similarity of languages within the same group.  As before, we determine the ground truth similarities from the topology of the tree of European languages determined by the linguistics community~\cite{gray2003language,Bouckaert2012} as illustrated in Fig.~\ref{jmlr2016:fig:dependency_experiments:romance_tree} for the Romance group.  We have run the test on all triplets from the corpus for which the topology of the tree specifies a correct ordering of the dependencies. In a fraction of a second (excluding kernel computation), we are able to recover certain features of the subtree of relationships between languages present in the Romance language group  (Tab.~\ref{jmlr2016:tab:dependency_experiments:romance_lang}).  
The test always indicates the correct relative similarity of languages when nearby languages (e.g.\ Portuguese and Spanish: pt, es) are compared with those further away (e.g.\ Portuguese and Danish: pt, da),  however errors are sometimes made when comparing triplets of languages for which the nearest common ancestor is more than one link removed.

In our next tests, we evaluate our more general framework for testing relative dependencies with more than two HSIC statistics.  We chose four languages, and tested whether the average dependence between languages in the same group is higher than the dependence between groups.  The results of these tests are in Tab.~\ref{jmlr2016:tab:dependency_experiments:general_selection_europeanlang_pvalues}.  As before, our test is able to distinguish language groups with high significance.
\begin{figure}
\begin{minipage}{.45\textwidth}
\centering 
\begin{tabular}{|c|c|c|c|}
\hline  Source & Target 1 & Target 2 & $p$-value \\
\hline
\hline
 es & pt & fi & $\mathbf{0.0066}$ \\
 fr & it & da & $\mathbf{0.0418}$ \\
 it & es & fi & $\mathbf{0.0169}$ \\
 pt & es & da & $\mathbf{0.0173}$ \\
 de & nl & fi & $\mathbf{<10^{-4}}$ \\
 nl & en & es & $\mathbf{<10^{-4}}$ \\
 da & sv & fr & $\mathbf{<10^{-6}}$ \\
 sv & en & it & $\mathbf{<10^{-4}}$ \\
 en & de & es & $\mathbf{<10^{-4}}$ \\
\hline
\end{tabular}
\captionof{table}{A selection of relative dependency tests between two pairs of HSIC statistics for the multilingual corpus data. Low $p$-values indicate a source is closer to target 1 than to target 2. In all cases, the test correctly identifies that languages within the same group are more strongly related than those in different groups.}
\label{jmlr2016:tab:dependency_experiments:selectioneuropeanlang}
\end{minipage} \hfill
\begin{minipage}{.45\textwidth}
\centering
\begin{tabular}{|c|c|c|}
\hline  Source & Targets & $p$-value \\
\hline \hline
da & de  sv  fi & $\mathbf{<10^{-9}}$ \\
da & sv  en  fr & $\mathbf{<10^{-9}}$ \\
de & sv  en  it & $\mathbf{<10^{-5}}$ \\
fr & it  es  sv & $\mathbf{<10^{-5}}$ \\
es & fr  pt  nl & $\mathbf{0.0175}$ \\
\hline
\end{tabular}
\captionof{table}{Relative dependency tests between Romance languages. The tests are ordered such that a low $p$-value corresponds with a confirmation of the topology of the tree of Romance languages determined by the linguistics community \cite{gray2003language}.}
\label{jmlr2016:tab:dependency_experiments:general_selection_europeanlang_pvalues}
\end{minipage}
\end{figure}
\begin{figure}
\begin{minipage}{.45\textwidth}
\centering 
\begin{tabular}{|c|c|c|c|}
\hline  Source & Target 1 & Target 2 & $p$-value \\
\hline
\hline
fr & es & it & $\mathbf{0.0157}$ \\
fr & pt & it & $0.1882$ \\
es & fr & it & $0.2147$ \\
es & pt & it & $\mathbf{<10^{-4}}$ \\
es & pt & fr & $\mathbf{<10^{-4}}$ \\
pt & fr & it & $0.7649$ \\
pt & es & it & $\mathbf{ 0.0011}$ \\
pt & es & fr & $\mathbf{<10^{-8}}$ \\
\hline
\end{tabular}
\captionof{table}{Relative dependency tests between Romance languages. The tests are ordered such that a low $p$-value corresponds with a confirmation of the topology of the tree of Romance languages determined by the linguistics community \cite{gray2003language}.}
\label{jmlr2016:tab:dependency_experiments:romance_lang}
\end{minipage} \hfill
\begin{minipage}{.45\textwidth}
\centering
\includegraphics[width=0.7\textwidth]{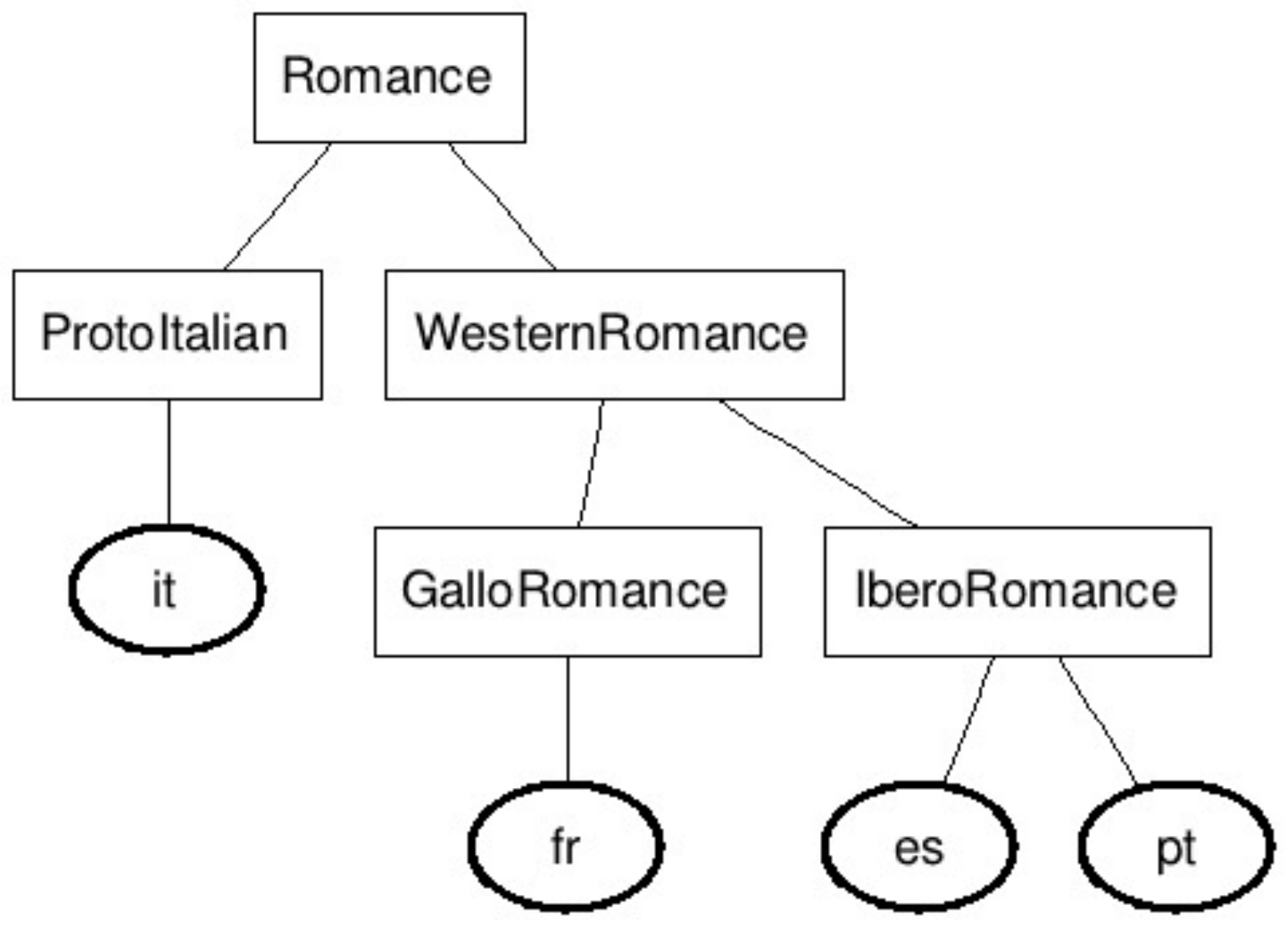}
\captionof{figure}{Partial tree of Romance languages adapted from~\cite{gray2003language}.}
\label{jmlr2016:fig:dependency_experiments:romance_tree}
\end{minipage}
\end{figure}
%
%
%
\subsubsection{Pediatric Glioma Data} \label{jmlr2016:subsec:dependency_experiments_gliomadata}

Brain tumors are the most common solid tumors in children and have the highest mortality rate of all pediatric cancers. Despite advances in multimodality therapy, children with pediatric high-grade gliomas (pHGG) invariably have an overall survival of around 20\% at 5 years. Depending on their location (e.g.\ brainstem, central nuclei, or supratentorial), pHGG present different characteristics in terms of radiological appearance, histology, and prognosis. The hypothesis is that pHGG have different genetic origins and oncogenic pathways depending on their location. Thus, the biological processes involved in the development of the tumor may be different from one location to another.

In order to evaluate such hypotheses, pre-treatment frozen tumor samples were obtained from 53 children with newly diagnosed pHGG from Necker Enfants Malades (Paris, France) from~\citet{puget2012mesenchymal}.  The 53 tumors are divided into 3 locations: supratentorial (HEMI), central nuclei (MIDL), and brain stem (DIPG).  The final dataset is organized in 3 blocks of variables defined for the 53 tumors: $\mathbf{X}$ is a block of indicator variables describing the location category, the second data matrix $\mathbf{Y}$ provides the expression of 15 702 genes (GE). The third data matrix $\mathbf{Z}$ contains the imbalances of 1229 segments (CGH) of chromosomes.  

For $\mathbf{X}$, we use a linear kernel, which is characteristic for indicator variables, and for $\mathbf{Y}$  and $\mathbf{Z}$, the kernel was chosen to be the Gaussian kernel with $\sigma$ selected as the median of pairwise distances.  The $p$-value of our relative dependency test is $<10^{-5}$. This shows that the tumor location in the brain is more dependent on gene expression than on chromosomal imbalances.  By contrast with Section~\ref{jmlr2016:subsec:dependency_experiments:simulationStudies}, the independent test was also able to find the same ordering of dependence, but with a $p$-value that is three orders of magnitude larger ($p=0.005$).  Fig.~\ref{jmlr2016:fig:dependency_experiments:twosigma_curve} shows iso-curves of the Gaussian distributions estimated in the independent and dependent tests. The empirical relative dependency is consistent with findings in the medical literature, and provides additional statistical support for the importance of tumor location in Glioma \citep{gilbertson2007,Palm2009,puget2012mesenchymal}. 
\begin{figure} \centering
\setlength{\tabcolsep}{0.1em}
\renewcommand{\arraystretch}{0.5}
\begin{tabular}{cc}
\begin{sideways} \qquad \qquad \qquad\scalebox{0.7}{$ \HSIC{F}{G}{X}{Z} $} \end{sideways}
&\includegraphics[width=0.45\textwidth]{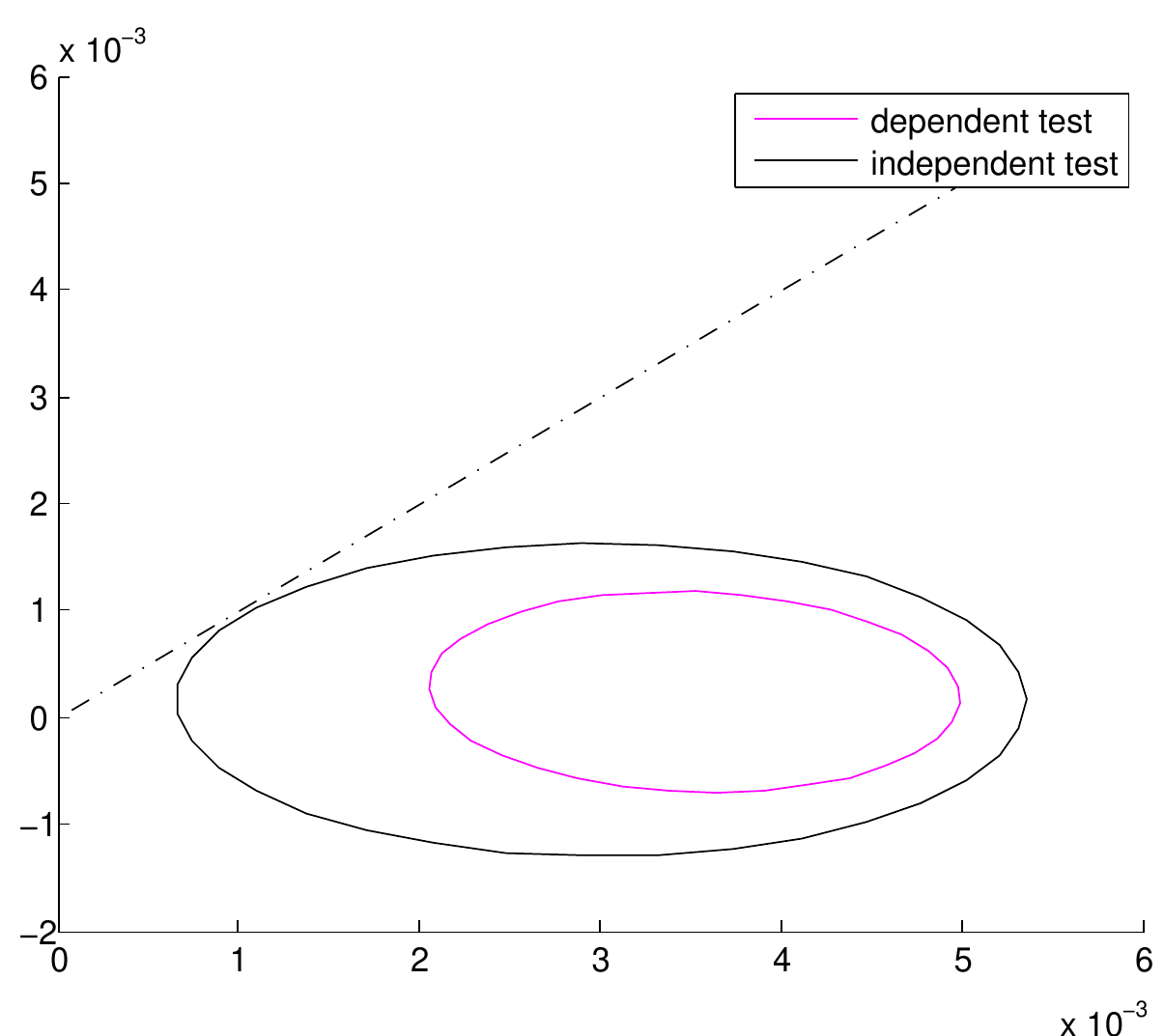} \\
& \scalebox{0.7}{$\HSIC{F}{G}{X}{Y}$}
\end{tabular}
\caption{$2\sigma$ iso-curves of the Gaussian distributions estimated from the pediatric Glioma data.  As before, the dependent test has a much lower variance than the independent test.  The tests support the stronger dependence on the tumor location to gene expression than chromosomal imbalances.}
\label{jmlr2016:fig:dependency_experiments:twosigma_curve}
\end{figure}	 
\subsection{Experiments for the relative similarity test} \label{jmlr2016:subsec:similarity_theory:similarity_experiments}

%
%
\subsubsection{Synthetic experiments} \label{jmlr2016:subsec:similarity_experiments:toyexample}
We verify the validity of the hypothesis test described above using a synthetic data set in which we can directly control the relative similarity between distributions. 

We constructed three Gaussian distributions as illustrated in Fig.~\ref{jmlr2016:fig:similarity_experiments:toy:illustration}.  These Gaussian distributions are specified with different means so that we can control the degree of relative  similarity between them.  The question is whether the similarity between $X$ and $Z$ is greater than the similarity between $X$ and $Y$.  In these experiments, we used a Gaussian kernel with bandwidth selected as the median pairwise distance between data points, and we fixed $\mu_Y = [-20,-20]$, $\mu_Z = [20,20]$ and varied $\mu_X$ such that $\mu_X = ( 1-\gamma)\mu_Y + \gamma \mu_Z$, for 41 regularly spaced values of $\gamma\in [0.1,\; 0.9]$ (avoiding the degenerate cases $P_x=P_y$ or $P_x=P_z$). 

Fig.~\ref{jmlr2016:fig:similarity_experiments:toy:synthetic_experiments_pvalues} shows the $p$-values of the relative similarity test for different distribution.  When $\gamma$ is varying around $0.5$, i.e., when $\squaredMMDu{X}{Y}$ is almost equal to $\squaredMMDu{X}{Z}$, the $p$-values quickly transition from $1$ to $0$, indicating strong discrimination of the test.  In Fig.~\ref{jmlr2016:fig:similarity_experiments:toy:powerofthetest}, we compare the power of our test to the power for both the dependent and independent approaches describes in Section~\ref{jmlr2016:subsec:relative_test_dependency:joint_asympt_dist} and in Section~\ref{jmlr2016:subsec:relative_test_dependency:consistent_approach} respectively.  Fig.~\ref{jmlr2016:fig:similarity_experiments:toy:isocurve_conservativetest} shows an empirical scatter plot of the pairs of MMD statistics along with a $2\sigma$ iso-curve of the estimated distribution, demonstrating that the parametric Gaussian distribution is well calibrated to the empirical values.

\begin{minipage}{.45\textwidth}\centering
\includegraphics[width=0.65\textwidth]{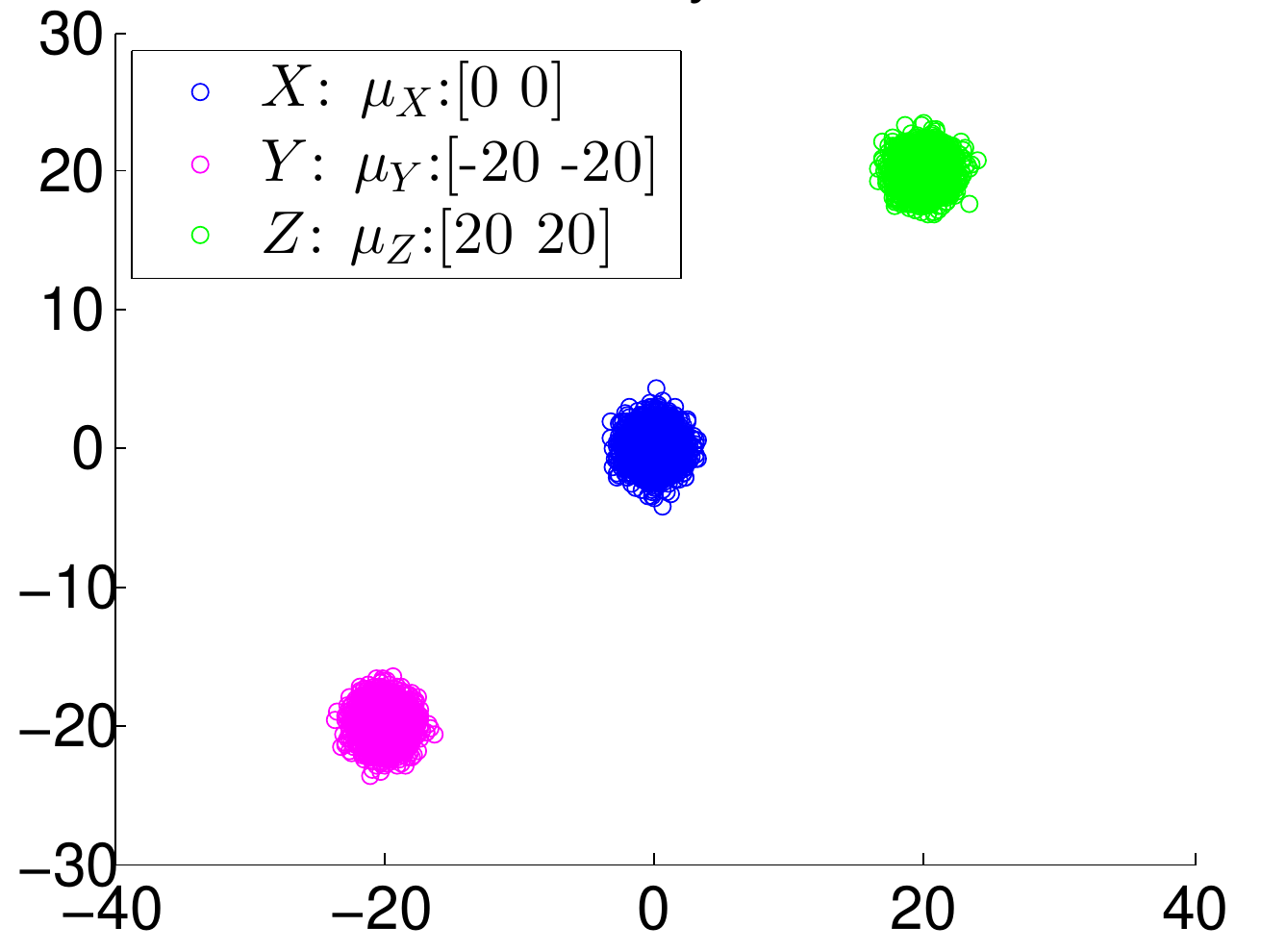}
\captionof{figure}{Illustration of the synthetic dataset where $X$, $Y$ and $Z$ are respectively Gaussian distributed with mean $\mu_X = [0,0]^T$, $\mu_Y=[-20,-20]^T$, $\mu_Z=[20,20]^T$ and with variance $\left(\protect\begin{smallmatrix}1&0\\0&1\protect\end{smallmatrix}\right)$.}
\label{jmlr2016:fig:similarity_experiments:toy:illustration}
\end{minipage}
\hfill
\begin{minipage}{.45\textwidth}\centering
\setlength{\tabcolsep}{0.1em}
\renewcommand{\arraystretch}{0.5}
\begin{tabular}{cc}
\begin{sideways}  \scalebox{0.8}{$\qquad$ Power of the tests}  \end{sideways} & \includegraphics[width=0.65\textwidth]{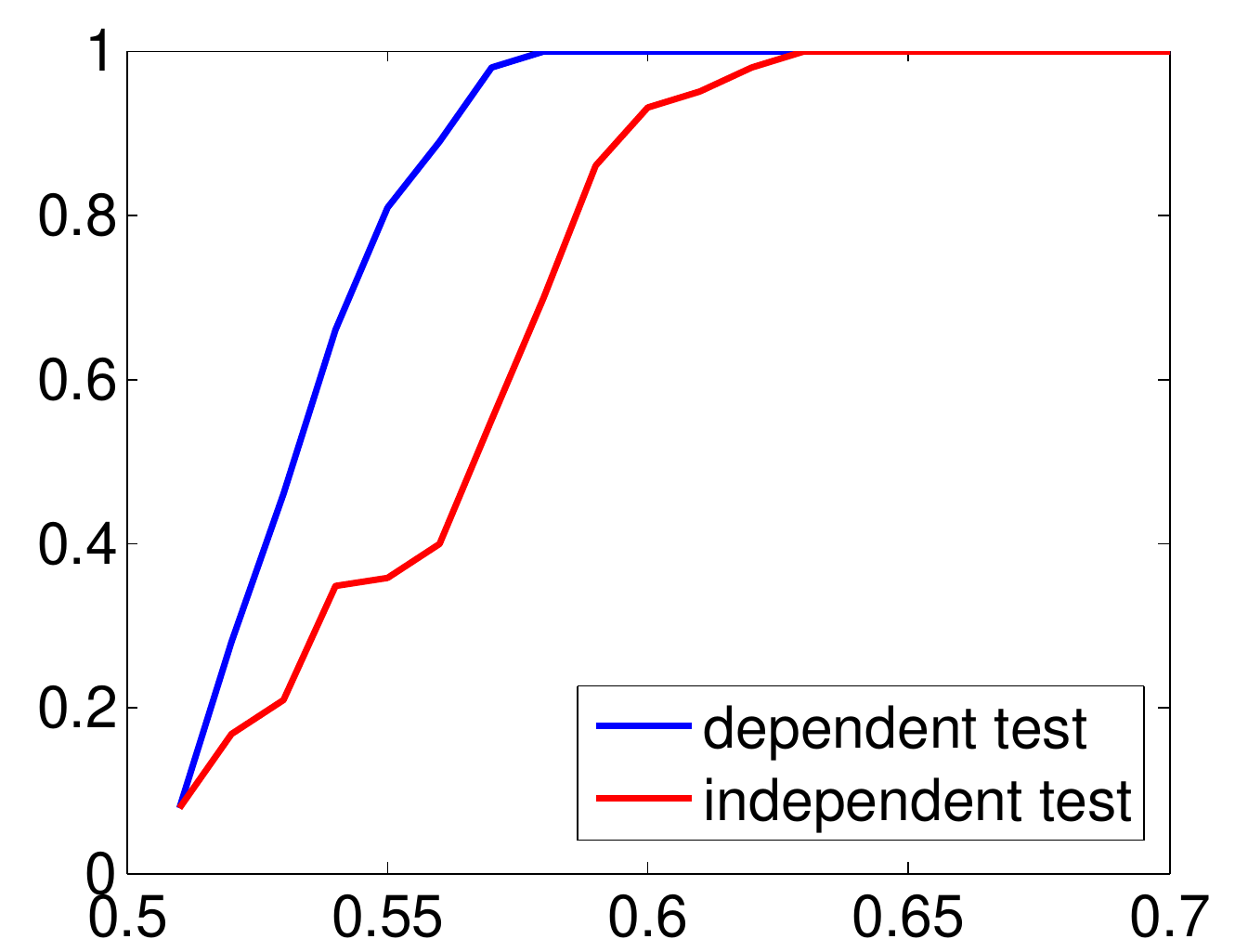} \\
& \scalebox{0.8}{$\gamma$}
\end{tabular}
\captionof{figure}{Comparison of the power of independent test and the dependent test as a function of $\gamma$.
}
\label{jmlr2016:fig:similarity_experiments:toy:powerofthetest}
\end{minipage}
\begin{minipage}{\textwidth}
  \begin{minipage}[b]{0.49\textwidth}
    \centering
        \setlength{\tabcolsep}{0.1em}
\renewcommand{\arraystretch}{0.5}
    \begin{tabular}{cc}
     \begin{sideways} $\qquad \quad$  \scalebox{0.8}{$p$-values} \end{sideways} & \includegraphics[width=.65\textwidth]{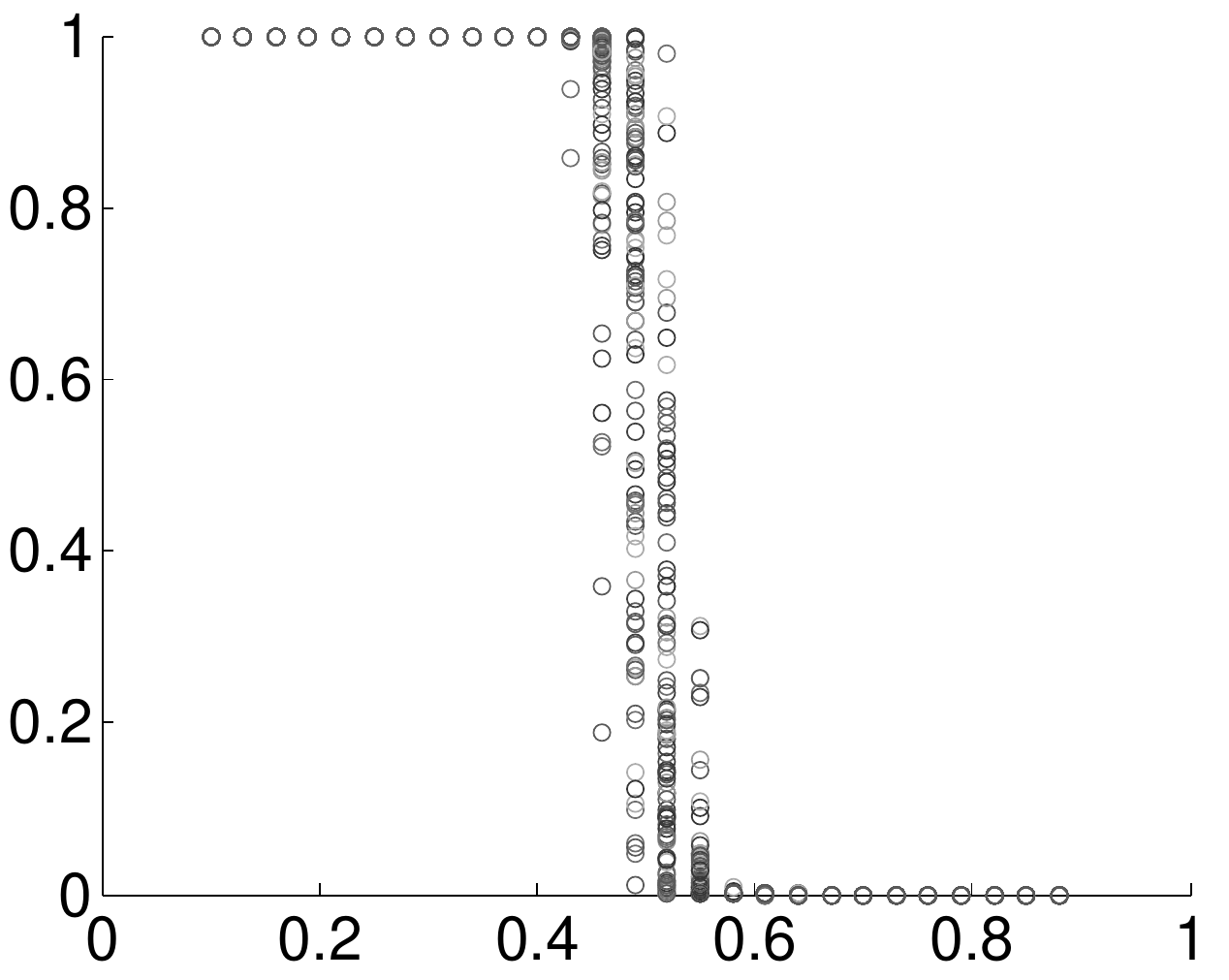} \\
     &  \scalebox{0.8}{$\gamma$} \\
     \end{tabular}
    \captionof{figure}{For $m=1000$, we fixed $\mu_Y = [-5,-5]$, $\mu_Z = [5,5]$ and varied $\mu_X$ such that $\mu_X = ( 1-\gamma)\mu_Y + \gamma \mu_Z$, for 41 regularly spaced values of $\gamma \in [0.1,\; 0.9]$ versus p-values for 100 repeated tests.}\label{jmlr2016:fig:similarity_experiments:toy:synthetic_experiments_pvalues}
  \end{minipage}
  \hfill
    \begin{minipage}[b]{0.49\textwidth}
    \centering 
        \setlength{\tabcolsep}{-1pt}
	\renewcommand{\arraystretch}{0.5}
    \begin{tabular}{cc}
     \begin{sideways} \scalebox{0.6}{$\qquad \qquad \squaredMMDu{X}{Z}$} \end{sideways} &
     \includegraphics[width=.65\textwidth]{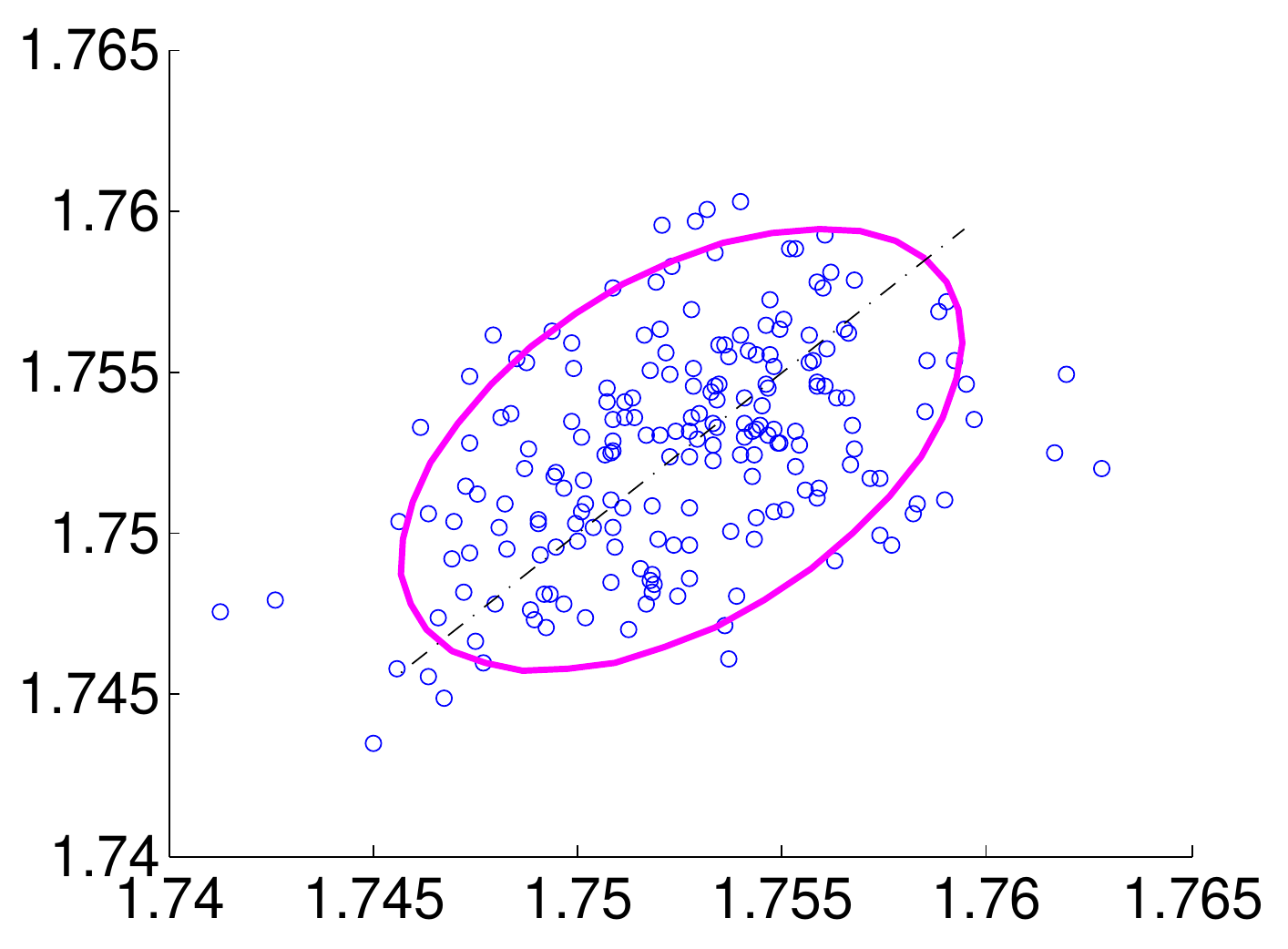} \\
     & \scalebox{0.6}{$\squaredMMDu{X}{Y}$}
     \end{tabular}
    \captionof{figure}{The empirical scatter plot of the joint MMD statistics with $m=1000$ for 200 repeated tests, along with the $2\sigma$ iso-curve of the analytical Gaussian distribution estimated by Eq.~\eqref{jmlr2016:eq:similarity_theory:joint_asymtotic_dist_MMD}.  The analytical distribution closely matches the empirical scatter plot, verifying the correctness of the variances.}\label{jmlr2016:fig:similarity_experiments:toy:isocurve_conservativetest}
  \end{minipage}
  \end{minipage}
%
%
\subsubsection{Model selection for deep unsupervised neural networks} \label{jmlr2016:subsec:similarity_experiments:modelselection}

An important potential application of the RelativeMMD problem (Pb.~\ref{jmlr2016:pb:background:relative_similarity_test}) can be found in recent work on unsupervised learning with deep neural networks \citep{kingma2013auto,bengio2013deep,larochelle2011neural,salakhutdinov2009deep,li2015generative,goodfellow2014generative}.  As noted by several authors, the evaluation of generative models is a challenging open problem \citep{li2015generative,goodfellow2014generative}, and the distributions of samples from these models are very complex and difficult to evaluate. The RelativeMMD performance can be used  to compare different model settings, or even model families, in a statistically valid framework. To compare two models using our test, we generate samples from both, and compare these to a set of real  target data samples that were not used to train either model. 

In the experiments in the sequel we focus on the recently introduced variational auto-encoder (VAE) \citep{kingma2013auto} and the generative moment matching networks (GMMN)  \citep{li2015generative}. The former trains an encoder and decoder network jointly minimizing a regularized variational lower bound \citep{kingma2013auto}. While the latter class of models is purely generative minimizing an MMD based objective, this model works best when coupled with a separate auto-encoder which reduces the dimensionality of the data.  An architectural schematic for both classes of models is provided in Fig.~\ref{jmlr2016:fig:similarity_experiments:modelselection:illustration}. Both these models can be trained using standard backpropagation \citep{Rumelhart:1988:LRB:65669.104451}. Using the latent variable prior we can directly sample the data distribution of these models without using MCMC procedures \citep{hinton2006fast,salakhutdinov2009deep}.  

We use the MNIST and FreyFace datasets for our analysis \citep{lecun1998gradient,kingma2013auto,goodfellow2014generative}. We first demonstrate the effectiveness of our test in a setting where we have a theoretical basis for expecting superiority of one unsupervised model versus another. Specifically, we use a setup where more training samples were used to create one model versus the other. We find that the RelativeMMD framework agrees with the expected results (models trained with more data generalize better). We then demonstrate how the RelativeMMD can be used in evaluating network architecture choices, and we show that our test strongly agrees with other established metrics, but in contrast can provide significance results using just the validation data while other methods may require an additional test set.  

Several practical matters must be considered when applying the RelativeMMD test. The selection of kernel can affect the quality of results, particularly more suitable kernels can give a faster convergence. In this work we extend the logic of the median heuristic \citep{NIPS2012_4727} for bandwidth selection by computing the median pairwise distance between samples from $\mathbbP_x$ and $\mathbbP_y$ and averaging that with the median pairwise distance between samples from $\mathbbP_x$ and $\mathbbP_z$, which helps to maximize the difference between the two MMD statistics.
Although the derivations for the variance of our statistic hold for all cases, the estimates require asymptotic arguments and thus a sufficiently large $m$.  Selecting the kernel bandwidth in an appropriate range can therefore substantially increase the power of the test at a fixed sample size.  While we observed the median heuristic to work well in our experiments, there are cases where alternative choices of kernel can provide greater power: for instance, the kernel can be chosen to maximize the expected test power on a held-out dataset \citep{NIPS2012_4727}.
\begin{figure}
    \centering
    \begin{subfigure}[b]{0.3\textwidth}
        \includegraphics[width=\textwidth]{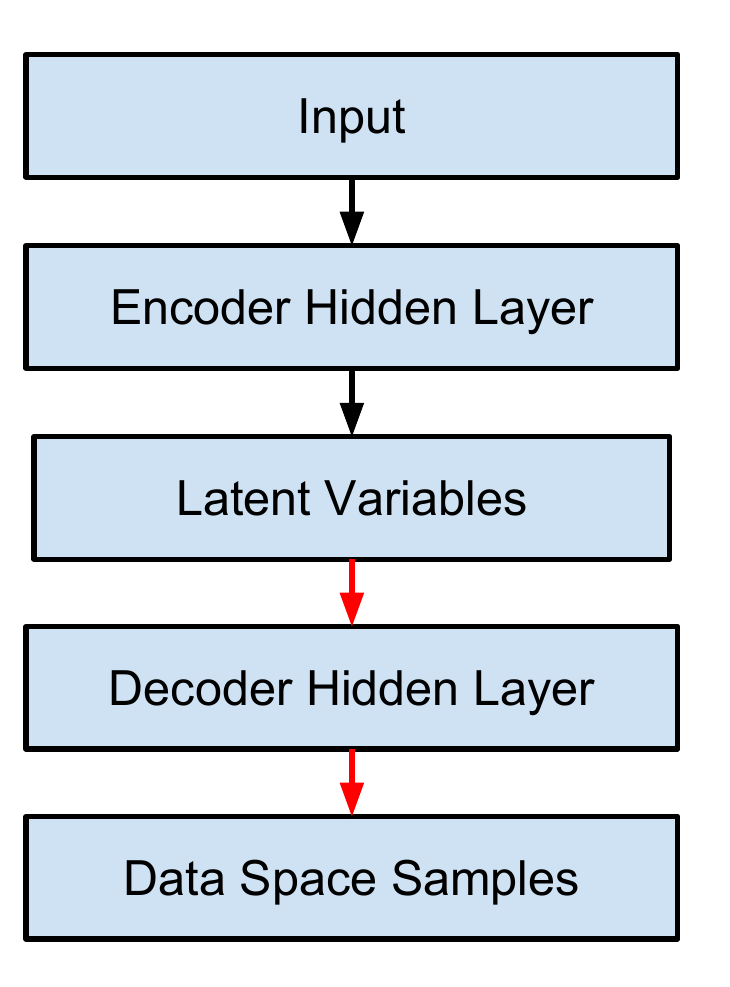}
        \caption{Variational auto-encoder reference model}
        \label{jmlr2016:fig:similarity_experiments:modelselection:illustrationVAR}
    \end{subfigure}
    ~
    \begin{subfigure}[b]{0.3\textwidth}
        \includegraphics[width=\textwidth]{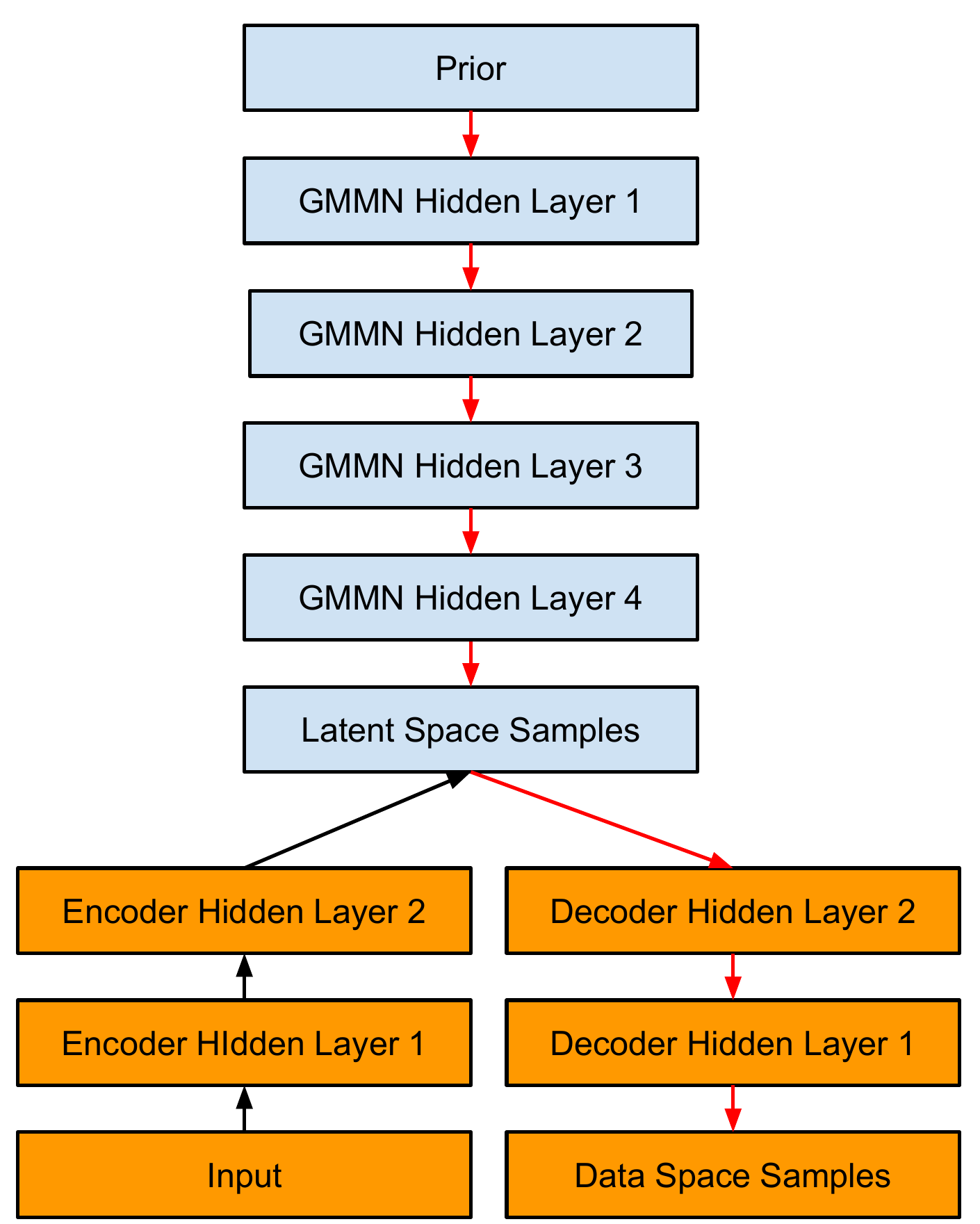}
        \caption{Auto-Encoder + GMMN reference model}
        \label{jmlr2016:fig:similarity_experiments:modelselection:illustrationGMMN}
    \end{subfigure}
    ~ 
    \caption{In Fig.~\ref{jmlr2016:fig:similarity_experiments:modelselection:illustrationVAR}, we have 400 hidden nodes (both encoder and decoder) and 20 latent variables in the reference model for our experiments. In Fig.~\ref{jmlr2016:fig:similarity_experiments:modelselection:illustrationGMMN}, we illustate that the auto-encoder (indicated in orange) is trained separately and has 1024 and 32 hidden nodes in decode and encode hidden layers. The GMMN has 10 variables generated by the prior, and the hidden layers have 64, 256, 256, 1024 nodes in each layer respectively. In both networks red arrows indicate the data flow during sampling}    \label{jmlr2016:fig:similarity_experiments:modelselection:illustration}
\end{figure}
%
%
\paragraph{Variational Auto-Encoder Sample Size and Architecture Experiments}    
We use the architecture from \cite{kingma2013auto} with a hidden layer at both the encoder and decoder and a latent variable layer as shown in Fig. \ref{jmlr2016:fig:similarity_experiments:modelselection:illustrationVAR}. We use sigmoidal activation for the hidden layers of encoder and decoder. For the FreyFace data, we use a Gaussian prior on the latent space and data space. For MNIST, we used  a Bernoulli prior for the data space. 
 We fix the training set size of the second auto-encoder to 300 images for the FreyFace data and 1500 images for the MNIST data. We vary the number of training samples for the first auto-encoder. We then generate samples from both auto-encoders and compare them using Relative MMD to a held out set of data. We use 1500 FreyFace samples as the target in Relative MMD and 15000 images from MNIST. Since a single sample of the data might lead to better generalization performance by chance, we repeat this experiment multiple times and record whether the relative similarity test indicated a network is preferred or if it failed to reject the null hypothesis. The results are shown in Fig.~\ref{jmlr2016:fig:similarity_experiments:modelselection:VARresults} which demonstrates that we are closely following the expected model preferences. Additionally for MNIST we use another separate set of supervised training and test data. We encode this data using both auto-encoders and use logistic regression to obtain a classification accuracy. The indicated accuracies closely match the results of the relative similarity test, further validating the test. 
\begin{figure}
    \centering
    \begin{subfigure}{.5\textwidth}
        \centering    
        \setlength{\tabcolsep}{-1pt}
\renewcommand{\arraystretch}{0.5}
        \begin{tabular}{cc}
     \begin{sideways} $\qquad$  \scalebox{0.6}{Number of times Hypothesis Selected} \end{sideways} & 
        \includegraphics[width=0.9\textwidth]{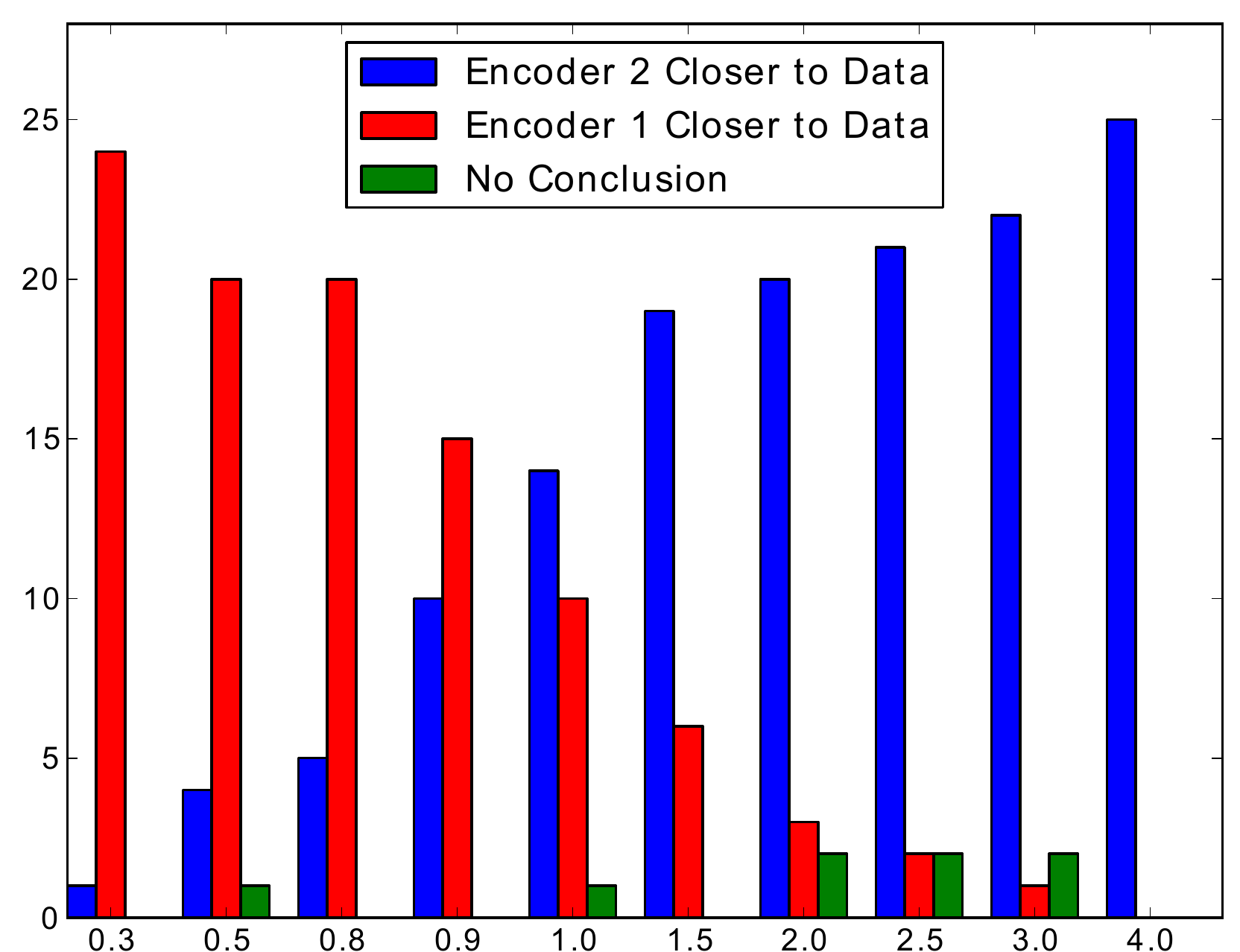} \\
        & \scalebox{0.6}{Ratio Training Samples for Encoder 2 and Training Samples Encoder 1}
        \end{tabular}
        \caption{RelativeMMD decision (25 trials)}
        \label{jmlr2016:fig:similarity_experiments:modelselection:VARresultsMNIST}
    \end{subfigure}\hfill
    \begin{subfigure}{.5\textwidth}
        \centering
    \setlength{\tabcolsep}{-1pt}
	\renewcommand{\arraystretch}{0.5}
        \begin{tabular}{cc}
     \begin{sideways} $\qquad$  \scalebox{0.6}{Number of times Hypothesis Selected} \end{sideways} & 
        \includegraphics[width=0.9\textwidth]{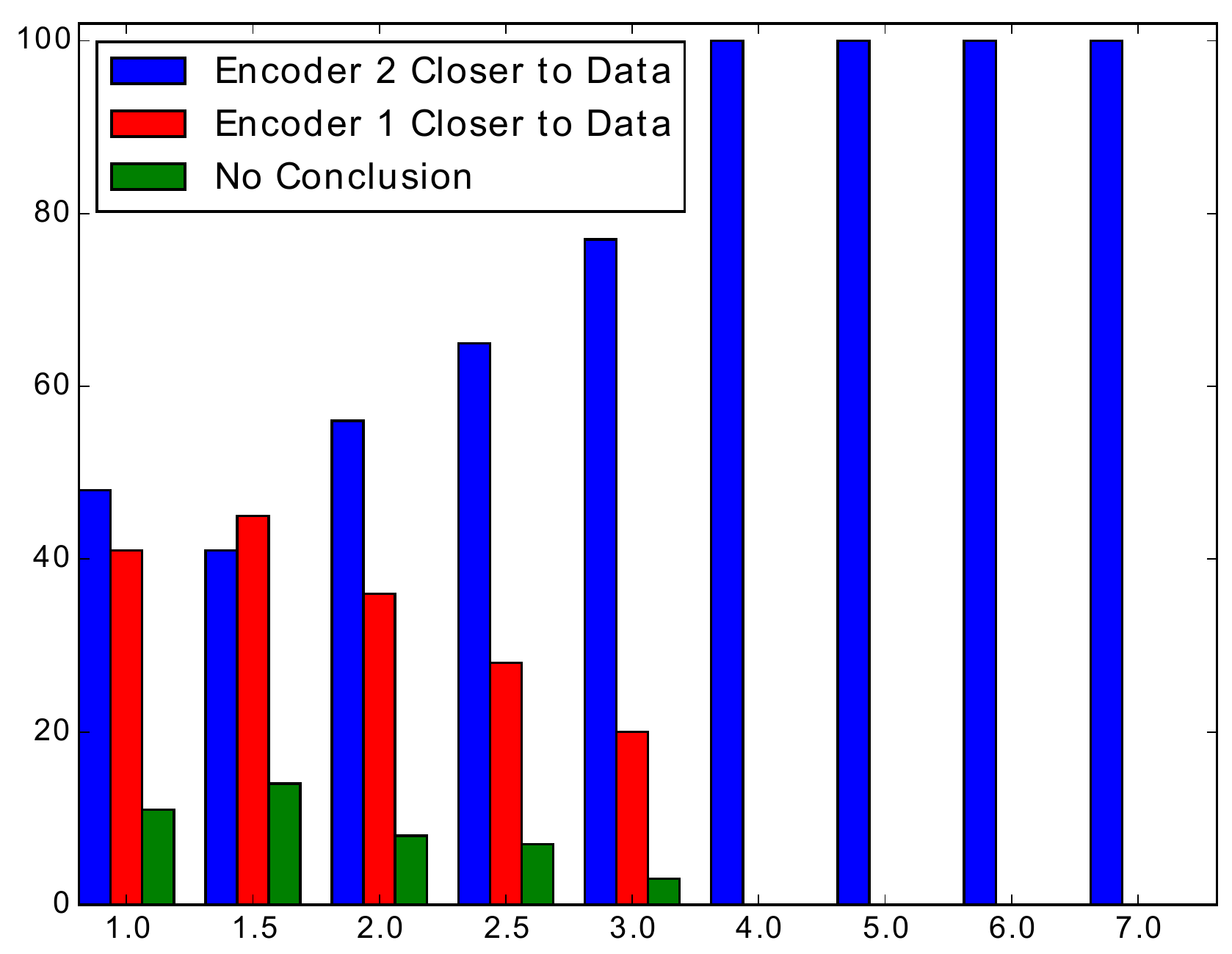} \\
        & \scalebox{0.6}{Ratio Training Samples for Encoder 2 and Training Samples Encoder 1}
        \end{tabular}
        \caption{Selected Hypothesis (100 runs)}
        \label{jmlr2016:fig:similarity_experiments:modelselection:VARresultsFRAYFACE}
    \end{subfigure}\hfill
    \begin{subfigure}{.5\textwidth}
        \centering
            \setlength{\tabcolsep}{-1pt}
\renewcommand{\arraystretch}{0.5}
        \begin{tabular}{cc}
     \begin{sideways} $\qquad \qquad  \quad$  \scalebox{0.6}{Accuracy} \end{sideways} & 
        \includegraphics[width=0.9\textwidth]{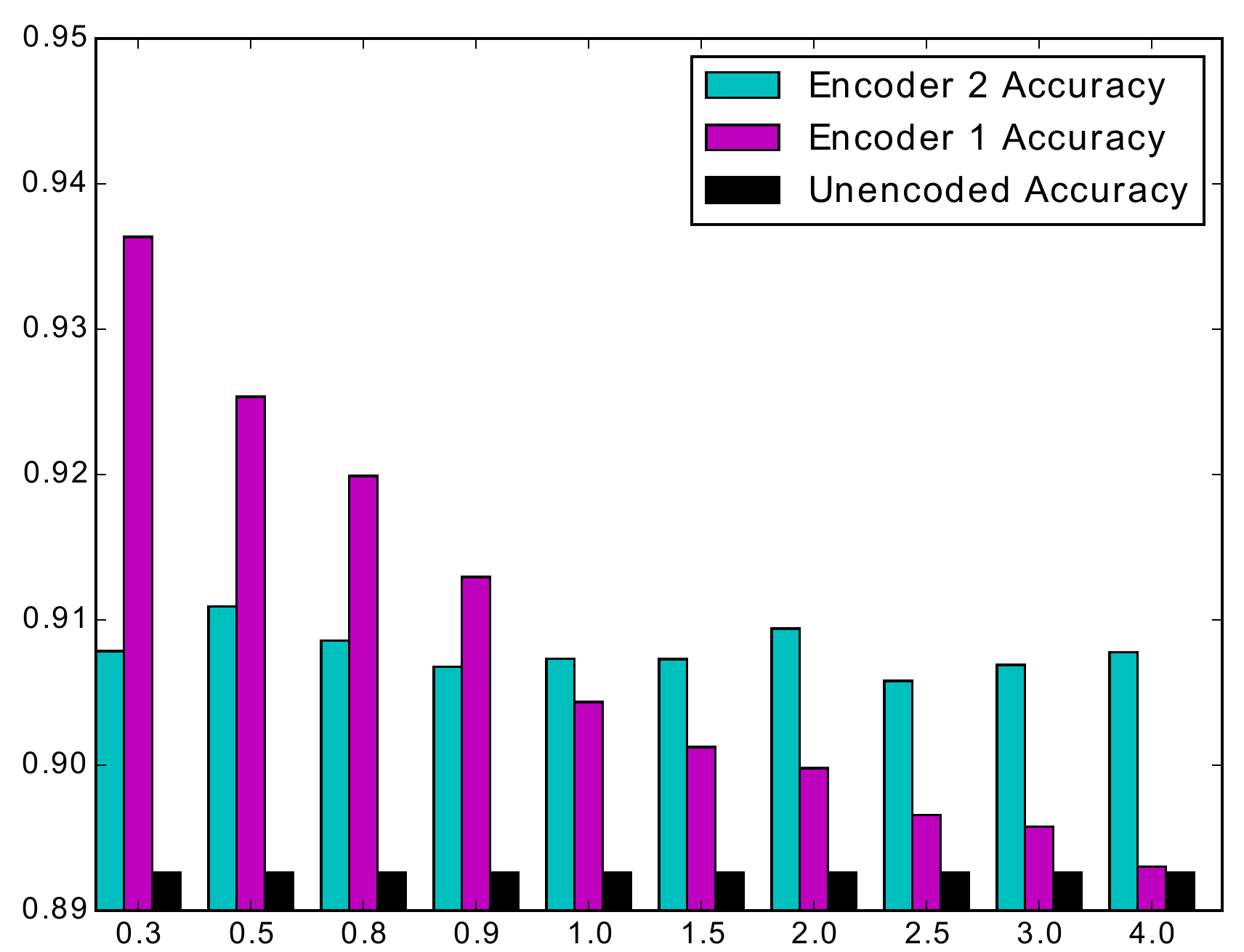} \\
        & \scalebox{0.6}{Ratio Training Samples for Encoder 2 and Training Samples Encoder 1}
        \end{tabular}
        \caption{Average Accuracy Using Autoencoder Features (25 trials)}
        \label{jmlr2016:fig:similarity_experiments:modelselection:VARresultsMNIST_accuracies}
    \end{subfigure}\hfill
    \begin{minipage}{.47\linewidth}
    \caption{In Fig.~\ref{jmlr2016:fig:similarity_experiments:modelselection:VARresultsMNIST}, we show the effect of varying the training set size of one auto-encoder trained on MNIST data. In Fig.~\ref{jmlr2016:fig:similarity_experiments:modelselection:VARresultsMNIST_accuracies} As a secondary validation we compute the classification accuracy of MNIST on a separate train/test set encoded using encoder 1 and encoder 2. In Fig.~\ref{jmlr2016:fig:similarity_experiments:modelselection:VARresultsMNIST_accuracies} We then show the effect of varying the training set size of one auto-encoder using the FreyFace data. We note that due to the size of the FreyFace dataset, we limit the range of ratios used. From this figure we see that the results of the relative similarity test match our expectation: more data produces models which more closely match the true distribution.}
    \label{jmlr2016:fig:similarity_experiments:modelselection:VARresults}
\end{minipage}
\end{figure}
We consider model selection between networks using different architectures.  We train two encoders, one a fixed reference model (400 hidden units and 20 latent variables), and the other varying as specified in Tab.~\ref{jmlr2016:tab:similarity_experiments:modelselection:VARperformance}. 25000 images from the MNIST data set were used for training. We use another 20000 images as the target data in Relative MMD. Finally, we use a set of 10000 training and 10000 test images for a  supervised task experiment. We use the labels in the MNIST data and perform training and classification using an $\ell_2$-regularized logistic regression on the encoded features.
In addition we use the supervised task test data to evaluate the variational lower bound of the data under the two models \citep{kingma2013auto}. We show the result of this experiment in Tab. \ref{jmlr2016:tab:similarity_experiments:modelselection:VARperformance}. For each comparison we take a different subset of training data which helps demonstrate the variation in lower bound and accuracy when re-training the reference architecture. We use a significance value of $5\%$ and indicate when the test favors one auto-encoder over another or fails to reject the null hypothesis. We find that Relative MMD evaluation of the models closely matches performance on the supervised task and the test set variational lower bound.
\begin{table}\centering
\begin{adjustbox}{max width=\textwidth}
\begin{tabular}{l|l|l|l|l|l|l}
Hidden&Latent  &  Result   & Accuracy (\%) & Accuracy (\%) & Lower Bound & Lower Bound  \\
 VAE 1  & VAE 1& RelativeMMD & VAE 1 &VAE 2 & VAE 1 & VAE 2 \\\hline
200 & 5 & Favor VAE 2 & 92.8 $\pm$ 0.3   & \textbf{94.7 $\pm$ 0.2} & -126  & \textbf{-97}\\
200 & 20 & Favor VAE 2  & 92.6$\pm$ 0.3      & \textbf{94.5 $\pm$ 0.2} &-115 &\textbf{-105}  \\
400 & 50 & Favor VAE 1  & \textbf{94.6 $\pm$ 0.2}      & 94.0 $\pm$ 0.2 & \textbf{-99.6} & -123.44\\
800 & 20 & Favor VAE 1  & \textbf{94.8 $\pm$ 0.2}      & 93.9 $\pm$ 0.2 & \textbf{-111} &-115\\
800 & 50 & Favor VAE 1  & 94.2 $\pm$ 0.3      & 94.5 $\pm$ 0.2 &\textbf{-101} &-103
\end{tabular}
\end{adjustbox}
\caption{We compare several variational auto encoder (VAE) architectural choices for the number of hidden units in both decoder and encoder and the number of latent variables for the VAE. The reference encoder, denoted encoder 2, has 400 hidden units and 20 latent variables. We denote the competing architectural models as encoder 1. 
We vary the number of hidden nodes in both the decoder and encoder and the number of latent variables. 
Our test closely follows the performance difference of the auto-encoder on a supervised task (MNIST digit classification) as well as the variational lower bound on a withheld set of data. The data used for evaluating the Accuracy and Lower Bound is separate from that used to train the auto-encoders and for the hypothesis test.} 
\label{jmlr2016:tab:similarity_experiments:modelselection:VARperformance}
\end{table}
%
%
\paragraph{Generative Moment Matching Networks Architecture Experiments}
We demonstrate our hypothesis test on a different class of deep generative models called Generative Moment Matching Networks (GMMN) \citep{li2015generative}. This recently introduced model has shown competitive performance in terms of test set likelihood on the MNIST data. Furthermore the training of this model is based on the MMD criterion. \cite{li2015generative} proposes to use that model along with an auto-encoder, which is the setup we employ in this work. Here a standard auto-encoder model is trained on the data to  obtain a low dimensional representation, then a GMMN network is trained on the latent representations (Fig.~\ref{jmlr2016:fig:similarity_experiments:modelselection:illustration}).  

We use the relative similarity test  to evaluate various architectural choices in this new class of models. We start from the baseline model specified in \cite{li2015generative} and associated software. The details of the reference model are specified in Fig.~\ref{jmlr2016:fig:similarity_experiments:modelselection:illustration}.

We vary the number of auto-encoder hidden layers (1 to 4), generative model layers(1, 4, or 5), the number of network nodes (all or 50\% of the reference model), and use of drop-out on the auto-encoder. We use the same training set of 55000, validation set of 5000 and test set of 10000 as in \citep{li2015generative,goodfellow2014generative}. In total we train 48 models. We use these to compare 4 simplified binary network architecture choices using the Relative MMD: using dropout on the auto-encoder, few (1) or more (4 or 5) GMMN layers, few (1 or 2) or more (3 or 4) auto-encoder layers, and the number of network nodes. We use our test to compare these model settings using the \emph{validation set} as the target in the relative similarity test, and samples from the models as the two sources. To validate our results we compare it to likelihoods computed on the test set.  The results are shown in Tab.~\ref{jmlr2016:tab:similarity_experiments:modelselection:performanceGMMN}. We see that the likelihood results computed on a separate test set follow the conclusions obtained from MMD on the validation set. Particularly, we find that using fewer hidden layers for the GMMN and more hidden nodes generally produces better models.
\begin{table}
\begin{adjustbox}{max width=\textwidth}
\begin{tabular}{l|c|c|c|c|c}
& \multicolumn{3}{c|}{RelativeMMD Preference}   &  \multicolumn{2}{c}{}  \\\hline
Experimental Condition (A/B)& A & Inconclusive 
& B & Avg Likelihood A  & Avg Likelihood B \\ \hline
Dropout/No Dropout & 199 & 17 & 360  & -9.01 $\pm$ 55.43 & 76.76 $\pm$ 42.83 \\ \hline
More/Fewer GMMN Layers& 105 & 14 & 393 & -73.99 $\pm$ 40.96 & \textbf{249.6 $\pm$ 8.07} \\ \hline
More/Fewer Nodes & 450 & 13 & 113 & \textbf{125.2 $\pm$ 43.4} & -57 $\pm$ 49.57    \\ \hline
More/Fewer AE layers& 231 & 21 & 324 & 41.78 $\pm$ 44.07 & 25.96 $\pm$ 55.85    
\end{tabular}
\end{adjustbox}
\caption{For each experimental condition (e.g. dropout or no dropout) we show the number of times the Relative MMD prefers models in group 1 or 2 and number of inconclusive tests. We use the validation set as the target data for Relative MMD. An average likelihood for the MNIST test set for each group is shown with error bars. We can see that the MMD choices are in agreement with likelihood evaluations. Particularly we identify that models with fewer GMMN layers  and models with more nodes have more favourable samples, which is confirmed by the likelihood results.}
\label{jmlr2016:tab:similarity_experiments:modelselection:performanceGMMN}
\end{table}
%
%
%
\subsubsection{Discussion}
In these experiments we have seen that the RelativeMMD test can be used to compare deep generative models obtaining judgments aligned with other metrics. Comparisons to other metrics are important for verifying our test is sensible, but it can occlude the fact that MMD is a valid evaluation technique on its own. When evaluating only sample generating models where likelihood computation is not possible, MMD is an appropriate and tractable metric to consider in addition to Parzen-Window log likelihoods and visual appearance of the samples. In several ways it is potentially more appropriate than Parzen-windows as it allows one to consider directly the discrepancy between the test data samples and the model samples while allowing for significance results. In such a situation, comparing the performance of several models using the MMD against a single set of test samples, the RelativeMMD test can provide an automatic significance value without expensive cross-validation procedures.  

Gaussian kernels are closely related to Parzen-window estimates, thus computing an MMD in this case can be considered related to comparing Parzen window log-likelihoods. The MMD gives several advantages, however. 
First, the asymptotics of MMD are quite different to Parzen-windows, since the Parzen-window bandwidth shrinks as $m$ grows. Asymptotics of relative tests with shrinking bandwidth are unknown: even for two samples this is challenging \citep{KrishnamurthyKP15}. Other two sample tests are not easily extendable to relative tests~\citep{rosenbaum2005exact, friedman1979multivariate,hall2002permutation}. This is because  the tests above rely on graph edge counting or nearest neighbor-type statistics, and null distributions are obtained via combinatorial arguments which are not easily extended from two to three samples. MMD is a $U$-statistic, hence its asymptotic behavior is much more easily generalized to multiple dependent statistics.

There are two primary advantages of the MMD over the variational lower bound, where it is known \citep{kingma2013auto}: first, we have a characterization of the asymptotic behavior, which allows us to determine when the difference in performance is significant; second, comparing two lower bounds produced from two different models is unreliable, as we do not know how conservative either lower bound is.
\section{Conclusion} \label{JMLR2016:sec:conclusion}

We have described two novel non-parametric statistical hypothesis tests using analogous mathematical derivation based on the estimation of two correlated $U$-statistics. The first test of relative dependency determines whether a source random variable is significantly more strongly dependent on one target random variable or another. The test is based on the Hilbert-Schmidt Independent Criterion. And the second test of relative similarity determines whether one model generates samples significantly closer to the reference distribution than the other. The criterion is based on the Maximum Mean Discrepancy. We have shown that both test are consistent, are strictly more powerful than a test with uncorrelated statistics, and the computation requirements of the tests is quadratic in the sample size.
We have applied the test of relative dependency to the problem of identifying relative dependencies between languages using a multilingual corpus, and for discovering the relative relationships between gliomas and genetic information. Additionally, we have shown the application of relative test of similarity to the problem of model selection in deep generative models, and currently an important question in machine learning. Code for our methods is available.


\acks{This work is funded by Internal Funds KU Leuven, ERC Grant 259112, FP7-MC-CIG 334380, the Royal Academy of Engineering through the Newton Alumni Scheme, and DIGITEO 2013-0788D-SOPRANO. WB is supported in part by a CentraleSup\'{e}lec fellowship. }


\appendix
\section{Detailed Derivations of the MMD Variance and Covariance}\label{jmlr2016:appendix}

The variance and the covariance for a $U$-statistic is described in \citet[Eq.\ 5.13]{hoeffding1948class} and \citet[Chap.\ 5]{serfling2009approximation}.

Let $\mathcal{V} := (v_1, ..., v_m)$ be $m$ i.i.d.\ random variables where $v:= (x,y) \sim \mathbbP_x \times \mathbbP_y$. An unbiased estimator of $\squaredMMDpop{x}{y}$ is 
\begin{equation}
\squaredMMDu{X_m}{Y_m} = \frac{1}{m(m-1)} \sum_{i \ne j}^m f(v_i,v_j)
\end{equation}
with $f(v_i,v_j) = k(x_i,x_j) + k(y_i,y_j) -k(x_i,y_j)  -k(x_j,y_i)$.

Similarly, let $\mathcal{W} := (w_1, ..., w_m)$ be $m$ i.i.d.\ random variables where $w:= (x,z) \sim \mathbbP_x \times \mathbbP_z$. An unbiased estimator of $\squaredMMDpop{x}{z}$ is 
\begin{equation}
\squaredMMDu{X_m}{Z_m} = \frac{1}{m(m-1)} \sum_{i \ne j}^m g(w_i,w_j)
\end{equation}
with $g(w_i,w_j) = k(x_i,x_j) + k(z_i,z_j) -k(x_i,z_j)  -k(x_j,z_i)$

Then the variance/covariance for a $U$-statistic with a kernel of order 2 is given by
\begin{align}
Var(\operatorname{MMD}_u^2) &= \frac{4(m-2)}{m(m-1)} \zeta_1 + \frac{2}{m(m-1)} \zeta_2 
\label{eq:allterm_in_variance_of_MMDstatistics}
\end{align}
Eq.~\eqref{eq:allterm_in_variance_of_MMDstatistics}, neglecting higher order terms, can be written as
\begin{align}
Var(\operatorname{MMD}_u^2) & = \frac{4(m-2)}{m(m-1)} \zeta_1 + \mathcal{O}
(m^{-2}) 
\label{eq:derivation_of_the_variance_ofMMD}
\end{align}
where for the variance term, $\zeta_1 = \var \left[ \Eone \left[ f(v_1,V_2) \right] \right] $ and for the covariance term $ \zeta_1 = \var \left[ \operatorname{\E}_{v_1,w_1} \left[ f(v_1,V_2)g(w_1,W_2) \right] \right]$.

\paragraph{Notation} $[\tilde{\matK}_{xx}]_{ij} = [\matK_{xx}]_{ij}$ for all $i \ne j$ and $[\tilde{\matK}_{xx'}]_{ij} =0$ for $j =i$. Same for $\tilde{\matK}_{yy}$ and $\tilde{\matK}_{zz}$. We will also make use of the fact that $k(x_i,x_j) = \langle \phi(x_i), \phi(x_j) \rangle$ for an appropriately chosen inner product, and  function $\phi$.  We then denote $  \mu_x := \int \phi(x) d\mathbbP_x $.

\subsection{Variance of MMD}

We note that many terms in expansion of the squares above cancel out due to independence. For example  $\E_{x_1,y_1}\left[ \langle \phi(y_1), \mu_y \rangle \langle \phi(x_1),\mu_y \rangle \right] - \E_{y_1} \left[ \langle \phi(y_1), \mu_y \rangle \right]  \E_{x_1}\left[ \langle \phi(x_1),\mu_y \rangle \right]=0$. 

We can thus simplify to the following expression for $\zeta_1$

\begin{align}
\zeta_1 &= \EXoneYone \left[ \left( \EXtwoYtwo \left[h(x_1,y_1)\right] \right) ^2 \right] - \left( \squaredMMDpop{X}{Y} \right)^2 \\
&= \EXoneYone \left[ (\langle \phi(x_1) , \mu_x \rangle + \langle \phi(y_1), \mu_y \rangle - \langle \phi(x_1), \mu_y \rangle - \langle \mu_x , \phi(y_1) \rangle)^2 \right] - \left( \squaredMMDpop{X}{Y} \right)^2 \nonumber \\ 
&= \EXoneYone \big[ 
\langle \phi(x_1),\mu_x \rangle^2 
+2 \langle \phi(x_1),\mu_x \rangle \langle \phi(y_1),\mu_y \rangle 
-2 \langle \phi(x_1),\mu_x \rangle \langle \phi(x_1),\mu_y \rangle \nonumber
\\ & \nonumber
\qquad \qquad
-2 \langle \phi(x_1),\mu_x \rangle \langle \phi(y_1),\mu_x \rangle
+ \langle \phi(y_1), \mu_y \rangle^2
\\& \nonumber
\qquad \qquad
-2 \langle \phi(y_1), \mu_y \rangle \langle \phi(x_1),\mu_y \rangle
-2 \langle \phi(y_1), \mu_y \rangle \langle \phi(y_1), \mu_x \rangle
\\ & \nonumber
\qquad \qquad
+ \langle \phi(x_1),\mu_y \rangle^2
+2 \langle \phi(x_1),\mu_y \rangle \langle \phi(y_1),\mu_x \rangle 
\\& \nonumber
\qquad \qquad
+\langle \phi(y_1),\mu_x \rangle^2 
\big] - \left( \squaredMMDpop{X}{Y} \right)^2 \\
&= \label{eq:MMDvarPopulationTerms}
\EXone [\langle \phi(x_1),\mu_x \rangle^2] - \EXone [\langle \phi(x_1),\mu_x \rangle]^2 \nonumber
\\& 
\qquad \qquad
-2 (\EXone [\langle \phi(x_1),\mu_x \rangle \langle \phi(x_1),\mu_y \rangle] - \EXone [\langle \phi(x_1),\mu_x \rangle] \EXone [\langle \phi(x_1),\mu_y \rangle]) \nonumber
\\& \nonumber
\qquad \qquad
+ \EYone[\langle \phi(y_1), \mu_y \rangle^2] - \EYone [\langle \phi(y_1), \mu_y \rangle]^2
\\ & \nonumber
\qquad \qquad
-2 ( \EYone [\langle \phi(y_1), \mu_y \rangle \langle \phi(y_1), \mu_x \rangle ] - \EYone [\langle \phi(y_1), \mu_y \rangle] \mathbb{E}_{y_1} [ \langle \phi(y_1), \mu_x \rangle ])
\\ & \nonumber
\qquad \qquad
+ \EXone [\langle \phi(x_1),\mu_y \rangle^2] - \EXone [\langle \phi(x_1),\mu_y \rangle]^2
\\ & \nonumber
\qquad \qquad
+ \EYone [\langle \phi(y_1),\mu_x \rangle^2 ] - \EYone [\langle \phi(y_1),\mu_x \rangle]^2 
\end{align}

Substituting empirical expectations over the data sample for the population expectations in Eq.~\eqref{eq:MMDvarPopulationTerms} gives
\begin{align}
\zeta_1 & \approx
\frac{1}{m(m-1)^2} \vecOnes^T \tilde{\matK}_{xx}\tilde{\matK}_{xx} \vecOnes - \left(\frac{1}{m(m-1)} \vecOnes^T \tilde{\matK}_{xx} \vecOnes \right)^2
\\ & \nonumber
\qquad \qquad
-2 \left(
\frac{1}{m(m-1) n} \vecOnes^T \tilde{\matK}_{xx} \matK_{xy} \vecOnes
- \frac{1}{m^2(m-1)n} \vecOnes^T \tilde{\matK}_{xx} \vecOnes \vecOnes^T \matK_{xy} \vecOnes
\right)
\\ & \nonumber
\qquad \qquad
+ \frac{1}{n(n-1)^2} \vecOnes^T \tilde{\matK}_{yy} \tilde{\matK}_{yy} \vecOnes
- \left( \frac{1}{n(n-1)} \vecOnes^T \tilde{\matK}_{yy} \vecOnes \right)^2
\\ & \nonumber
\qquad \qquad
-2 \left( \frac{1}{n(n-1)m} \vecOnes^T \tilde{\matK}_{yy} \matK_{yx} \vecOnes
- \frac{1}{n^2(n-1)m} \vecOnes^T \tilde{\matK}_{yy} \vecOnes \vecOnes^T \matK_{xy} \vecOnes
\right)
\\ & \nonumber
\qquad \qquad
+ \frac{1}{n^2 m} \vecOnes^T \matK_{yx} \matK_{xy} \vecOnes
- 2 \left( \frac{1}{nm} \vecOnes^T \matK_{xy} \vecOnes \right)^2
+ \frac{1}{m^2 n} \vecOnes^T \matK_{xy} \matK_{yx} \vecOnes  \nonumber
\end{align}

Derivation of the first term for example
\begin{align}
\EXone [\langle x_1,\mu_x \rangle^2] & \approx \frac{1}{m} \sum_{i=1}^m \langle \phi(x_i), \frac{1}{m-1}\sum_{j=1 \atop j \ne i}^m \phi(x_j) \rangle \langle \phi(x_i),\frac{1}{m-1}\sum_{k=1 \atop k \ne i}^m \phi(x_k) \rangle \\
&= \frac{1}{m(m-1)^2} \sum_{i=1}^m  \sum_{j=1 \atop j \ne i}^m \sum_{k=1 \atop k \ne i}^m k(x_i,x_j)k(x_i,x_k) \nonumber \\
&= \frac{1}{m(m-1)^2} e^T \tilde{\matK}_{xx} \tilde{\matK}_{xx} e \nonumber
\end{align}

\subsection{Covariance of MMD}

We note many terms in expansion of the squares above cancel out due to independence. For example  $\E_{x_1,z_1} \left[\langle \phi(x_1),\mu_x \rangle \langle \phi(z_1),\mu_z \rangle \right] - \E_{x_1} \left[ \langle \phi(x_1),\mu_x \rangle \right]  \E_{z_1}\left[ \langle \phi(z_1),\mu_z \rangle \right]=0$. 

We can thus simplify to the following expression for $\zeta_1$

\begin{align}
\zeta_1 &= \EXoneYoneZone \left[  \EXtwoYtwoZtwo \left[ h(x_1,y_1)  g(x_1,z_1) \right]\right] - \left( \squaredMMDpop{X}{Y} \squaredMMDpop{X}{Z} \right) \\
&= \EXoneYoneZone [ (\langle \phi(x_1),\mu_x \rangle + \langle \phi(y_1),\mu_y \rangle -\langle \phi(x_1),\mu_y \rangle - \langle \phi(x_1),\mu_y) \rangle) \nonumber \\ 
& \qquad \qquad \qquad  (\langle \phi(x_1),\mu_x) \rangle + \langle \phi(z_1),\mu_z \rangle -\langle \phi(x_1),\mu_z \rangle - \langle \phi(x_1),\mu_z \rangle) ] \nonumber \\
& \qquad \qquad  -\squaredMMDpop{X}{Y} \squaredMMDpop{X}{Z} \nonumber \\ 
&= \EXone \left[ \langle \phi(x_1),\mu_x \rangle^2 \right] - \EXone \left[ \langle \phi(x_1),\mu_x \rangle \right]^2 \nonumber \\
& \qquad - \left( \EXone \left[ \langle \phi(x_1),\mu_x \rangle \langle \phi(x_1),\mu_z \rangle \right] - \EXone \left[ \langle \phi(x_1),\mu_x \rangle \right] \EXone \left[ \langle \phi(x_1),\mu_z \rangle \right] \right) \nonumber \\
&  \qquad - \left( \EXone \left[ \langle \phi(x_1),\mu_x \rangle \langle \phi(x_1),\mu_y \rangle \right] - \EXone \left[ \langle \phi(x_1),\mu_x \rangle \right] \EXone \left[ \langle \phi(x_1),\mu_y \rangle \right] \right) \nonumber \\
&  \qquad + \EXone \left[ \langle \phi(x_1),\mu_y \rangle \langle \phi(x_1),\mu_z \rangle \right] - \EXone \left[ \langle \phi(x_1),\mu_y \rangle \right] \EXone \left[  \langle \phi(x_1),\mu_z \rangle \right]  \nonumber \\
& \approx
\frac{1}{m(m-1)^2} \vecOnes^T \tilde{\matK}_{xx}\tilde{\matK}_{xx} \vecOnes - \left(\frac{1}{m(m-1)} \vecOnes^T \tilde{\matK}_{xx} \vecOnes \right)^2 \nonumber \\
& \qquad - \left( \frac{1}{m(m-1)r} \vecOnes^T \tilde{\matK}_{xx} \matK_{xz} \vecOnes - \frac{1}{m^2(m-1)r}  \vecOnes^T \tilde{\matK}_{xx} \vecOnes \vecOnes^T \matK_{xz} \vecOnes \right) \nonumber \\
& \qquad - \left( \frac{1}{m(m-1)n} \vecOnes^T \tilde{\matK}_{xx} \matK_{xy} \vecOnes - \frac{1}{m^2(m-1)n} \vecOnes^T \tilde{\matK}_{xx} \vecOnes \vecOnes^T \matK_{xz} \vecOnes \right) \nonumber \\
& \qquad + \left( \frac{1}{mnr} \vecOnes^T \matK_{yx} \matK_{xz} \vecOnes - \frac{1}{m^2nr} \vecOnes^T \matK_{xy}\vecOnes \vecOnes^T \matK_{xz} \vecOnes\right) \nonumber
\end{align}

\subsection{Derivation of the variance of the difference of two MMD statistics}
In this section we propose an alternate strategy of deriving directly the variance of a u-statistic of the difference of MMDs with a joint variable. This formulation agrees with the derivation of the covariance matrix and subsequent projection, and provides extra insights.

Let $\mathcal{D} := (d_1, ..., d_m)$ be $m$ iid random variables where $d:= (x,y,z) \sim \mathbbP_x \times \mathbbP_y \times \mathbbP_z$.  Then the difference of the unbiased estimators of $\squaredMMDpop{x}{y}$ and $\squaredMMDpop{x}{z}$ is given by
\begin{equation}
\squaredMMDu{x}{y} - \squaredMMDu{x}{z} = \frac{1}{m(m-1)} \sum_{i \ne j}^m f(d_i,d_j)
\label{eq:diff_stat}
\end{equation}
with $f$, the kernel of $\squaredMMDpop{x}{y} - \squaredMMDpop{x}{z}$ of order 2 as follows
\begin{align}
f(d_1,d_2) &= (k(x_1,x_2) + k(y_1,y_2) - k(x_1,y_2) - k(x_2,y_1)) 
\\ &
\qquad - (
k(x_1,x_2) + k(z_1,z_2) - k(x_1,z_2) - k(x_2,z_1)
)
\nonumber \\ &=
(k(y_1,y_2) - k(x_1,y_2) - k(x_2,y_1))
- (
k(z_1,z_2) - k(x_1,z_2) - k(x_2,z_1)
)
\end{align}

Eq.~\eqref{eq:diff_stat} is a $U$-statistic and thus we can apply Eq. \eqref{eq:derivation_of_the_variance_ofMMD} to obtain its variance. We first note
\begin{align}
\E_{d_1}(f(d_1,d_2)) := &
\langle \phi(y_1),\mu_y \rangle - \langle \phi(x_1),\mu_y \rangle - \langle \mu_x,\phi(y_1) \rangle
\\&- (
\langle \phi(z_1),\mu_z \rangle - \langle \phi(x_1),\mu_z \rangle - \langle \mu_x,\phi(z_1) \rangle
) 
\nonumber \\
\E_{d_1,d_2}(f(d_1,d_2)) := &
\squaredMMDpop{x}{y} - \squaredMMDpop{x}{z} \nonumber
\end{align}

We are now ready to derive the dominant leading term,$\zeta_1$, in the variance expression \eqref{eq:derivation_of_the_variance_ofMMD}.

\paragraph{Term $\zeta_1$}

\begin{align}
\zeta_1 :&= \operatorname{Var}(\E_{d_1}(f(d_1,d_2))) \\
&= \EXoneYoneZone [(\langle \phi(y_1),\mu_y \rangle - \langle \phi(x_1),\mu_y \rangle - \langle \mu_x,\phi(y_1) \rangle
- \big(
\langle \phi(z_1),\mu_z \rangle - \langle \phi(x_1),\mu_z \rangle - \langle \mu_x,\phi(z_1) \rangle \big)^2] 
& \qquad \qquad \qquad
- (\squaredMMDpop{x}{y} - \squaredMMDpop{x}{z})^2 \nonumber
\end{align}
We note many terms in expansion of the squares above cancel out due to independence. For example  $\E_{y_1,z_1}[\langle \phi(y_1),\mu_y \rangle \langle \phi(z_1),\mu_z \rangle]-\E_{y_1}[\langle \phi(y_1),\mu_y \rangle]\E_{z_1}[\langle \phi(z_1),\mu_z \rangle]=0$. 

We can thus simplify to the following expression for $\zeta_1$
\begin{align}
\zeta_1=&
\EYone [\langle \phi(y_1),\mu_y \rangle^2] - \EYone [\langle \phi(y_1),\mu_y \rangle]^2
\\& \nonumber
+\EXone  [\langle \phi(x_1),\mu_y \rangle^2] - \EXone  [\langle \phi(x_1),\mu_y \rangle]^2\\ & \nonumber
+\EYone [\langle \mu_x,\phi(y_1) \rangle^2] - \EYone [\langle \mu_x,\phi(y_1) \rangle]^2\\ & \nonumber
+\EZone [\langle \phi(z_1),\mu_z \rangle^2] - \EZone [\langle \phi(z_1),\mu_z  \rangle]^2\\ & \nonumber
+\EXone [\langle \phi(x_1),\mu_z \rangle^2] - \EXone [\langle \phi(x_1),\mu_z \rangle]^2\\ & \nonumber
+\EZone [\langle \mu_x,\phi(z_1) \rangle^2] - \EZone [\langle \mu_x,\phi(z_1) \rangle]^2\\ & \nonumber
-2 (\EYone [\langle \phi(y_1),\mu_y \rangle \langle \mu_x,\phi(y_1) \rangle] - \EYone [\langle \phi(y_1),\mu_y  \rangle] \EYone [\langle \mu_x,\phi(y_1) \rangle])
\\& \nonumber
-2 ( \EXone [\langle \phi(x_1), \mu_y \rangle \langle \phi(x_1), \mu_z \rangle ] - \EXone [\langle \phi(x_1), \mu_y \rangle] \EXone [ \langle \phi(x_1), \mu_z \rangle ])
\\ & \nonumber
-2 ( \EZone [\langle \phi(z_1), \mu_z \rangle \langle \mu_x, \phi(z_1) \rangle ] - \EZone [\langle \phi(z_1), \mu_z \rangle] \EZone [ \langle \mu_x, \phi(z_1) \rangle ])
\\ & \nonumber
\label{eq:variance_difference_of2MMD}
\end{align}

We can empirically approximate these terms as follows:

\begin{align}
\zeta_1 & \approx 
\frac{1}{n(n-1)^2} \vecOnes^T \tilde{\matK}_{yy}\tilde{\matK}_{yy} \vecOnes - \left(\frac{1}{n(n-1)} \vecOnes^T \tilde{\matK}_{yy}\vecOnes \right)^2
\\ & \nonumber
\qquad +  \frac{1}{n^2m} \vecOnes^T \matK_{xy}^T\matK_{xy} \vecOnes - \left( \frac{1}{nm} \vecOnes^T \matK_{xy} \vecOnes \right)^2
\\ & \nonumber
\qquad +  \frac{1}{nm^2} \vecOnes^T \matK_{xy}\matK_{xy}^T \vecOnes - \left( \frac{1}{nm} \vecOnes^T \matK_{xy} \vecOnes \right)^2
\\ & \nonumber
\qquad +  \frac{1}{r(r-1)^2} e^T \tilde{\matK}_{zz}\tilde{\matK}_{zz} \vecOnes - \left( \frac{1}{r(r-1)} \vecOnes^T \tilde{\matK}_{zz}\vecOnes  \right)^2
\\ & \nonumber
\qquad +  \frac{1}{rm^2} \vecOnes^T \matK_{xz}\matK_{xz}^T \vecOnes - \left(\frac{1}{rm} \vecOnes^T \matK_{xz} \vecOnes \right)^2
\\ & \nonumber
\qquad +  \frac{1}{r^2m} \vecOnes^T \matK_{xz}^T\matK_{xz} \vecOnes - \left(\frac{1}{rm} \vecOnes^T \matK_{xz} \vecOnes \right)^2
\\ & \nonumber
\qquad - 2  \left( \frac{1}{n(n-1)m} \vecOnes^T \tilde{\matK}_{yy} \matK_{yx} \vecOnes
- \frac{1}{n(n-1)} \vecOnes^T \tilde{\matK}_{yy}\vecOnes \times \frac{1}{nm} \vecOnes^T \matK_{xy}\vecOnes \right)\\ & \nonumber
\qquad - 2  \left( \frac{1}{nmr} \vecOnes^T \tilde{\matK}_{xy}^T \matK_{xz} \vecOnes
- \frac{1}{nm} \vecOnes^T \matK_{xy}\vecOnes \times \frac{1}{rm} \vecOnes^T \matK_{xz}\vecOnes \right)\\ & \nonumber
\qquad - 2  \left( \frac{1}{r(r-1)m} \vecOnes^T \tilde{\matK}_{zz} \matK_{xz}^T \vecOnes
- \frac{1}{n(n-1)} \vecOnes^T \tilde{\matK}_{yy}\vecOnes  \times \frac{1}{nm} \vecOnes^T \matK_{xy}\vecOnes \right)
\end{align}

\subsection{Equality  }

In this section, we prove that Eq.~\eqref{eq:denominator_of_the_TestStatistics} is equal to the variance of the difference of 2 $\squaredMMDpop{x}{y}$ and $\squaredMMDpop{x}{z}$.

\begin{align}
\sigma_{XY}^2 + \sigma_{XZ} - 2\sigma_{XYXZ} &= \EYone \left[ \langle \phi(y_1),\mu_y \rangle^2 \right] - \EYone \left[ \langle \phi(y_1),\mu_y \rangle \right]^2 \\
& \qquad+ \EZone \left[ \langle \phi(z_1),\mu_y \rangle^2 \right] -  \EZone \left[ \langle \phi(z_1),\mu_y \rangle \right]^2 \nonumber \\
& \qquad - 2 \left( \EYone \left[ \langle \phi(y_1),\mu_y \rangle \langle \phi(y_1),\mu_x \rangle \right] -  \EYone \left[ \langle \phi(y_1),\mu_y \rangle \right] \EYone \left[ \langle \phi(y_1),\mu_x \rangle \right] \right) \nonumber \\
& \qquad -2 \left( \EZone \left[ \langle \phi(z_1),\mu_z \rangle \langle \phi(z_1),\mu_x \rangle \right] - \EZone \left[ \langle \phi(z_1),\mu_z \rangle \right] \EZone \left[ \langle \phi(z_1),\mu_x \rangle \right] \right)  \nonumber \\
&  \qquad + \EXone \left[ \langle \phi(x_1),\mu_y \rangle^2 \right] - \EXone \left[ \langle \phi(x_1),\mu_y \rangle \right]^2 \nonumber\\
& \qquad + \EYone \left[ \langle \phi(y_1),\mu_z \rangle^2 \right] - \EYone \left[ \langle \phi(y_1),\mu_z \rangle\right]^2 \nonumber \\
&  \qquad + \EYone \left[ \langle \phi(y_1),\mu_x \rangle^2 \right] - \EYone \left[ \langle \phi(y_1),\mu_x \rangle \right]^2 \nonumber\\
& \qquad + \EZone \left[ \langle \phi(z_1),\mu_x \rangle^2 \right] - \EZone \left[ \langle \phi(z_1),\mu_x \rangle \right]^2 \nonumber \\
&  \qquad -2 \left( \EXone \left[ \langle \phi(x_1),\mu_y \rangle \right] \EXone \left[ \langle \phi(x_1),\mu_z \rangle \right] \right) \nonumber
\end{align}
We have shown that Eq.~\eqref{eq:denominator_of_the_TestStatistics} is equal to Eq.~\eqref{eq:variance_difference_of2MMD}. We use this equality to compute the $p$-value for the Relative MMD test.


\vskip 0.2in
\bibliographystyle{abbrvnat}
\bibliography{icml2015_bibliography,iclr2016_bibliography}

\end{document}